\definecolor{c1}{HTML}{2F70AF} 
\definecolor{pink}{HTML}{747199}
\newcommand{\reduce}[1]{\textcolor{pink}{#1}}
\definecolor{yellow}{HTML}{cda380}
\begin{document}

\theoremstyle{plain}
\newtheorem{theorem}{Theorem}[section]
\newtheorem{proposition}[theorem]{Proposition}
\newtheorem{lemma}[theorem]{Lemma}
\newtheorem{corollary}[theorem]{Corollary}
\theoremstyle{definition}
\newtheorem{definition}[theorem]{Definition}
\newtheorem{assumption}[theorem]{Assumption}
\theoremstyle{remark}
\newtheorem{remark}[theorem]{Remark}
\def\figureautorefname{Fig.}
\newcommand{\T}{\mathrm{T}}
\newcommand{\ie}{\textit{i}.\textit{e}., }
\newcommand{\eg}{\textit{e}.\textit{g}., }
\newcommand{\wrt}{\textit{w}.\textit{r}.\textit{t}. }
\newcommand{\E}{\mathbb{E}}
\newcommand{\sig}{\rlap{$^*$}}
\newcommand{\indep}{\perp \!\!\! \perp}
\newcommand{\bst}[1]{{\textbf{\textcolor{red}{#1}}}}
\newcommand{\subbst}[1]{\textcolor{blue}{\underline{{#1}}}}
\newcommand{\scalea}[1]{\scalebox{0.78}{#1}}
\newcommand{\scaleb}[1]{\scalebox{0.8}{#1}}

\iclrfinalcopy
\author{Hao Wang$^1$ \qquad Licheng Pan$^1$\qquad {Yuan Shen$^1$} \qquad Zhichao Chen$^1$ \qquad Degui Yang$^{2}$ \\ \textbf{Yifei Yang$^3$} \qquad \textbf{Sen Zhang$^{4}$} \qquad\enspace \textbf{Xinggao Liu$^{1}$} \enspace\quad \textbf{Haoxuan Li$^{5*}$} \qquad\enspace \textbf{Dacheng Tao$^{6}$}\thanks{Corresponding author.
}\\
    $^1$Department of Control Science and Engineering, Zhejiang University\\
    $^2$School of Automation, Central South University\\
    $^3$Department of Computer Science and Engineering, Shanghai Jiao Tong University\\
    $^4$Trust and Safety Team,  TikTok Sydney, ByteDance Inc. \\
    $^5$Center for Data Science, Peking University  \\
    $^6$Generative AI Lab, College of Computing and Data Science, Nanyang Technological University\\
\texttt{Ho-ward@outlook.com} \quad \texttt{hxli@stu.pku.edu.cn} \quad \texttt{dacheng.tao@ntu.edu.sg}
}

\title{Label Correlation Biases Direct Time Series Forecast}
\title{FreDF: Learning to Forecast in the Frequency Domain}
\maketitle

\begin{abstract}
Time series modeling presents unique challenges due to autocorrelation in both historical data and future sequences. While current research predominantly addresses autocorrelation within historical data, the correlations among future labels are often overlooked. Specifically, modern forecasting models primarily adhere to the Direct Forecast (DF) paradigm, generating multi-step forecasts independently and disregarding label autocorrelation over time. In this work, we demonstrate that the learning objective of DF is biased in the presence of label autocorrelation. To address this issue, we propose the Frequency-enhanced Direct Forecast (FreDF), which mitigates label autocorrelation by learning to forecast in the frequency domain, thereby reducing estimation bias. Our experiments show that FreDF significantly outperforms existing state-of-the-art methods and is compatible with a variety of forecast models. Code is available at \url{https://github.com/Master-PLC/FreDF}.
\end{abstract}

\addtocontents{toc}{\protect\setcounter{tocdepth}{-1}}
\section{Introduction}

Time series modeling aims to utilize historical sequence to predict future data, which has been successfully applied in various fields~\citep{qiu2025easytime}, including long-term forecasting in weather prediction~\citep{application_weather}, short-term predictions in security~\citep{yan2024btpnet}, and data imputation in industrial maintenance~\citep{lsptd}.
A key challenge in time series modeling, distinguishing it from canonical regression tasks, is the presence of autocorrelation, which refers to the dependence between time steps inherent in \textit{both} the input and label sequences.

To accommodate autocorrelation in input sequences, diverse forecast models have been developed, exemplified by recurrent~\citep{salinas2020deepar}, convolution~\citep{Timesnet} and graph neural networks~\citep{FourierGNN}. 
Recently, Transformer-based models, utilizing self-attention mechanisms to dynamically assess autocorrelation, have gained prominence~\citep{itransformer,PatchTST}.
Concurrently, there is a growing trend of incorporating frequency analysis into forecast models.
By representing the input sequence in the frequency domain, input autocorrelation can be efficiently accommodated, which proves to improve the forecast performance of Transformers~\citep{fedformer} and Multi-Layer Perceptrons (MLPs)~\citep{FreTS}.
These pioneering works highlight the importance of autocorrelation and frequency analysis in advanced time series modeling.

Another critical aspect is the autocorrelation within the label sequence, where each future step is autoregressively dependent on its predecessors. This phenomenon, termed as \emph{label autocorrelation}, poses a critical issue warranting investigation.
Specifically, recent forecasting methods predominantly employ the Direct Forecast (DF) paradigm~\citep{itransformer,PatchTST}, which generates multi-step predictions simultaneously via a multi-output head~\citep{generative}, optimizing forecast errors across all steps concurrently. However, this approach implicitly assumes \emph{step-wise independence} in the label sequence, overlooking the label autocorrelation inherent in the time series forecast task. We theoretically demonstrate that this oversight results in biased forecasts, revealing a significant defect with the existing DF paradigm.

To address this issue, we introduce the \emph{Frequency-enhanced Direct Forecast} (FreDF), a straightforward yet effective refinement of the DF paradigm. The central idea is to align the forecasts and label sequences in the frequency domain, where the label correlation is found to be effectively diminished.
This method resolves the discrepancy between the scope of DF and the characteristics of actual time series, while retaining DF's advantages, such as sample efficiency and simplicity of implementation. Our main contributions are summarized as follows:
\begin{itemize}[leftmargin=*]
    \item We uncover label autocorrelation as a critical yet underexplored challenge in modern time series modeling and theoretically justify how it biases the learning objective of the  prevalent DF paradigm.
    \item We propose FreDF, a straightforward yet effective modification to the DF paradigm that learns to forecast in the frequency domain, thereby mitigating label autocorrelation and reducing bias. To our knowledge, this is the first effort to utilize frequency analysis for enhancing forecast paradigms.
    \item We validate the efficacy of FreDF through comprehensive experiments, demonstrating its ability to enhance the performance of state-of-the-art forecasting models across a diverse range of datasets.
\end{itemize}

\section{Preliminaries and Related Work}
\subsection{Problem Definition}

In this study, uppercase letters (e.g., $Y$) denote random matrix, with subscripts (e.g., $Y_{i,j}$) indicating matrix entries. An uppercase letter followed by parentheses (e.g., $Y(n)$) represents an observation of the random matrix.
A multi-variate time series can be represented as a sequence $\left[ {X}(1), {X}(2), \cdots, {X}(\mathrm{N}) \right]$, where $X(n)\in \mathbb{R}^{1\times\mathrm{D}}$ is the sample at the $n$-th timestamp with D covariates.
Define input sequence $L\in\mathbb{R}^{\mathrm{H}}\times\mathrm{D}$ and label sequence $Y\in\mathbb{R}^{\mathrm{T}\times\mathrm{D}}$ where $\mathrm{H}$ and $\mathrm{T}$ are sequence lengths.
At an arbitrary $n$-th step, these sequences are observed as $L = \left[{X}(n-\mathrm{H}+1),...,{X}(n)\right]$ and  $Y= \left[{X}(n+1),...,{X}(n+\mathrm{T})\right]$. The goal of time series forecast is identifying a model $g: \mathbb{R}^{\mathrm{H}\times\mathrm{D}}\rightarrow\mathbb{R}^{\mathrm{T}\times\mathrm{D}}$ within a model family $\mathcal{G}$ (e.g., decision trees, neural networks) that generates the prediction sequence $\hat{Y}=g(L)$ approximating the label sequence $Y$.

There are two critical aspects to accommodate autocorrelation in time series modeling: (1) selecting a model family $\mathcal{G}$ that encodes autocorrelation in input sequences, which underscores the design of model architectures; (2) generating forecasts that respect label autocorrelation, which highlights the efficacy of forecast paradigms. Our survey concentrates on examining both aspects.


\subsection{Model Architectures}
To exploit autocorrelation in the input sequences, a variety of architectures have been developed~\citep{qiu2024tfb,li2024foundts}. Initial statistical methods include VAR~\citep{Vector1993} and ARIMA~\citep{Arima}. Subsequently, neural networks gained prominence for their ability to automate feature interaction and capture nonlinear correlations. Exemplars include RNNs (e.g., DeepAR~\citep{salinas2020deepar}, S4~\citep{S4}), CNNs (e.g., TimesNet~\citep{Timesnet}), and GNNs (e.g., MTGNN~\citep{Mtgnn}), each designed to effectively encode autocorrelation. 
Current progress has reached a debate between Transformer-based and MLP-based architectures, each with its advantages and limitations. Transformers (e.g., PatchTST~\citep{PatchTST}, iTransformer~\citep{itransformer}) offer significant scalability as data size increases but incur high computational costs; MLPs (e.g., DLinear~\citep{DLinear}, TimeMixer~\citep{wang2024timemixer}) are generally more efficient but less effective in scaling with larger datasets and struggle to accommodate varying input lengths.

An emerging approach is representing sequence in the frequency domain~\citep{Autoformer,wu2024catch}. This method, in comparison to modeling autocorrelation in the temporal domain, manages autocorrelation effectively with limited cost. A prominent example is FedFormer~\citep{fedformer}, which computes attention scores in the frequency domain, leading to improved efficiency, efficacy, and noise reduction capabilities. The success of this technique extends to various architectures like Transformers~\citep{fedformer,Autoformer}, MLPs~\citep{FreTS} and GNNs~\citep{FourierGNN,stemgnn}, which makes it a versatile plugin in the design of neural networks for time series forecast.

\subsection{Iterative forecast v.s. direct forecast}
There are two paradigms to generate multi-step forecast: iterative forecast (IF) and direct forecast (DF)~\citep{generative}. \textit{The IF paradigm follows the canonical sequence-to-sequence manner}, which forecasts one step at a time and uses previous predictions as input for subsequent forecasts. This recursive approach respects label autocorrelation in forecast generation, widely used by early-stage methods~\citep{LSTNet,salinas2020deepar}. However, IF suffers from high variance due to error propagation, which significantly impairs performance in long-term forecasts~\citep{taieb2015bias}.
Therefore, modern works~\citep{Informer} advocate the DF paradigm, which generates multi-step forecasts simultaneously using a multi-output head, featured by fast inference, implementation ease and superior accuracy. Currently, DF has been a dominant paradigm, continuing to be employed in modern works~\citep{Timesnet,itransformer}.

\textbf{Significance of this work.} 
Our work refines the DF paradigm by performing forecasting in the frequency domain\footnote{Given the inferior performance of the IF paradigm \citep{Informer}, this paper advocates adapting the DF paradigm to handle label autocorrelation, rather than revisiting IF to directly model label autocorrelation.}. In contrast to recent advancements that incorporate frequency analysis within model architectures to manage \textit{input autocorrelation}~\citep{FourierGNN,FreTS,pswi}, accelerate computation~\citep{lange2021fourier}, and improve generation quality~\citep{yuan2024diffusion}, our approach specifically focuses on refining the loss function to mitigate the bias caused by \textit{label autocorrelation}, which is an unexplored yet significant aspect in modern time series analytics.

\section{Proposed Method}
\subsection{Motivation}
Autocorrelation is a fundamental characteristic of time series data, where each observation is highly dependent on previous ones~\citep{DLinear}. This characteristic sets time series apart from other types of data and creates specific modeling challenges. To accommodate autocorrelation, various neural network architectures have been developed~\citep{Autoformer,itransformer}, which effectively model the autocorrelation in input sequence. However, label autocorrelation cannot be handled via the modification of neural architectures. To effectively manage label autocorrelation, it is necessary to create learning objectives that specifically consider these dependencies.


Modern time series forecasting models are primarily trained under the multitask learning manner, known as the direct forecasting (DF) paradigm. Specifically, the DF paradigm employs a multi-output model $g_\theta: \mathbb{R}^{\mathrm{H} \times \mathrm{D}} \rightarrow \mathbb{R}^{\mathrm{T} \times \mathrm{D}}$ to generate $T$-step forecasts $\hat{Y} = g_\theta(L)$. The model parameters $\theta$ are optimized by minimizing the temporal loss:
\begin{equation}\label{eq:temp}
\mathcal{L}^{(\mathrm{tmp})} := \sum_{t=1}^\mathrm{T} \left\| Y_t - \hat{Y}_{t} \right\|_2^2.
\end{equation}

In this learning objective, the temporal loss at each forecast step is computed independently, treating each future time step as a separate task. While this method has shown empirical effectiveness, it overlooks the autocorrelation present within the label sequence $Y$. Specifically, the label sequence is autoregressively generated, with $Y_{t+1}$ being highly dependent on $Y_t$, as illustrated by the blue arrows in \autoref{fig:auto}(a). In contrast, the learning objective in~\eqref{eq:temp} assumes that each step in the label sequence can be independently modeled, as indicated by the black arrows in \autoref{fig:auto}(a). This misalignment between the model’s assumptions and the data’s characteristics introduces bias into the learning objective of the DF paradigm, as demonstrated in Theorem~\ref{thm:bias}.

\begin{theorem}[Bias of DF]
\label{thm:bias}
Given input sequence $L$ and label sequence $Y$, the learning objective~\eqref{eq:temp} of the DF paradigm is biased against the practical negative-log-likelihood (NLL), expressed as:
\begin{equation}
    \mathrm{Bias} = \sum_{i=1}^\mathrm{T} \frac{1}{2\sigma^2} (Y_i - \hat{Y}_i)^2 - \sum_{i=1}^\mathrm{T} \frac{1}{2\sigma^2 (1 - \rho_i^2)} \left(Y_i - \left(\hat{Y}_i + \sum_{j=1}^{i-1} \rho_{ij} (Y_j - \hat{Y}_j)\right)\right)^2,
\end{equation}
where $\hat{Y}_i$ indicates the prediction at the $i$-th step, $\rho_{ij}$ denotes the partial correlation between $Y_i$ and $Y_j$ given $L$, $\rho_i^2 = \sum_{j=1}^{i-1} \rho_{ij}^2.$
\end{theorem}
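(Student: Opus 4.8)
The plan is to derive the exact negative-log-likelihood of the label sequence $Y$ given $L$ under a joint Gaussian model, and then contrast it term by term with the temporal loss~\eqref{eq:temp}, which is (up to the constant $1/2\sigma^2$) the NLL one would write down if the coordinates $Y_1,\dots,Y_{\mathrm{T}}$ were conditionally independent given $L$. First I would posit that, conditionally on $L$, the vector $Y$ is multivariate Gaussian with mean $\hat{Y} = g_\theta(L)$ and some covariance $\Sigma$ capturing label autocorrelation; this is the natural probabilistic model underlying a squared-error objective. The key algebraic device is the chain-rule factorization of the joint density, $p(Y\mid L) = \prod_{i=1}^{\mathrm{T}} p(Y_i \mid Y_{1:i-1}, L)$, which turns the joint Gaussian into a product of univariate Gaussians whose parameters are exactly the quantities appearing in the theorem.

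The central step is to identify those conditional parameters. For a jointly Gaussian vector, each conditional $p(Y_i\mid Y_{1:i-1},L)$ is Gaussian with a mean that is the base prediction $\hat{Y}_i$ corrected by a linear combination $\sum_{j=1}^{i-1}\rho_{ij}(Y_j-\hat{Y}_j)$ of the preceding residuals, and with a reduced variance $\sigma^2(1-\rho_i^2)$ where $\rho_i^2 = \sum_{j<i}\rho_{ij}^2$ measures how much of $Y_i$'s variability is explained by its predecessors. Here the $\rho_{ij}$ are precisely the (appropriately normalized) regression coefficients / partial correlations stated in the theorem. I would establish this by the standard Gaussian conditioning formula — partitioning the covariance and applying the Schur-complement identity for the conditional mean and variance — and then peeling off one coordinate at a time to get the innovations representation. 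Summing $-\log p(Y_i\mid Y_{1:i-1},L)$ over $i$ produces the second sum in the Bias expression (the genuine NLL, dropping the additive normalization constants, which differ across terms only through $\log(1-\rho_i^2)$ and can be absorbed or noted separately); the DF objective~\eqref{eq:temp}, reweighted by $1/2\sigma^2$, is the first sum; their difference is exactly $\mathrm{Bias}$.

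I expect the main obstacle to be bookkeeping rather than conceptual: pinning down the precise definition of $\rho_{ij}$ so that the conditional mean shift reads cleanly as $\sum_{j<i}\rho_{ij}(Y_j-\hat{Y}_j)$ and the conditional variance as $\sigma^2(1-\rho_i^2)$ simultaneously, since the natural regression coefficients and the partial correlations differ by variance ratios unless one assumes a homoscedastic, suitably standardized covariance structure. A clean route is to assume $\Sigma$ has constant diagonal $\sigma^2$ and to define $\rho_{ij}$ directly as the coefficients in the $L^2$-projection of $(Y_i-\hat{Y}_i)$ onto $\mathrm{span}\{Y_j-\hat{Y}_j : j<i\}$, at which point the Pythagorean decomposition of the residual gives the variance drop $\sigma^2(1-\rho_i^2)$ for free and the identification is immediate. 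The remaining steps — substituting into the Gaussian log-density, summing over $i$, and subtracting — are routine. A secondary subtlety worth a sentence in the proof is that the additive constants $-\tfrac12\log(2\pi\sigma^2)$ versus $-\tfrac12\log(2\pi\sigma^2(1-\rho_i^2))$ do not cancel; either they are folded into the stated Bias as implicitly dropped normalization, or the theorem is understood modulo these $Y$-independent constants, and I would state which convention is in force.
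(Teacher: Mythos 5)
Your proposal follows essentially the same route as the paper's proof: posit a conditional multivariate Gaussian with mean $\hat{Y}=g_\theta(L)$ and homoscedastic covariance, factorize the joint density by the chain rule into conditional Gaussians with mean $\hat{Y}_i + \sum_{j<i}\rho_{ij}(Y_j-\hat{Y}_j)$ and variance $\sigma^2(1-\rho_i^2)$, drop prediction-independent constants, and subtract the resulting practical NLL from the MSE. If anything you are more careful than the paper, which simply asserts those conditional mean and variance forms without addressing the normalization subtlety you flag (they hold exactly only when the preceding residuals are mutually uncorrelated given $L$, or when $\rho_{ij}$ is defined via the orthogonalized $L^2$-projection as you propose), and which likewise drops the $\log(1-\rho_i^2)$ constants silently.
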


According to Theorem~\ref{thm:bias}, the presence of label autocorrelation $\rho_{ij}$ causes the loss to be biased against the NLL of the real data. Notably, this bias diminishes to zero when the labels are uncorrelated ($\rho_{ij} = 0$). Therefore, label autocorrelation is a crucial aspect for training time series forecast models.

\begin{figure*}
\begin{center}
\centerline{
\subfigure[]{\includegraphics[width=0.22\linewidth]{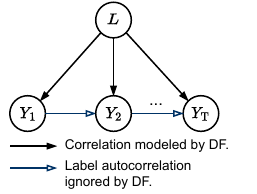}}
\subfigure[]{\includegraphics[width=0.22\linewidth]{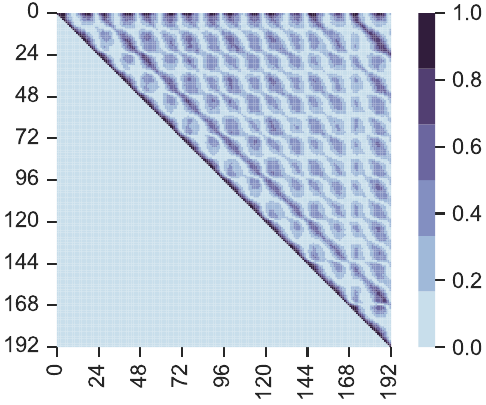}}
\subfigure[]{\includegraphics[width=0.22\linewidth]{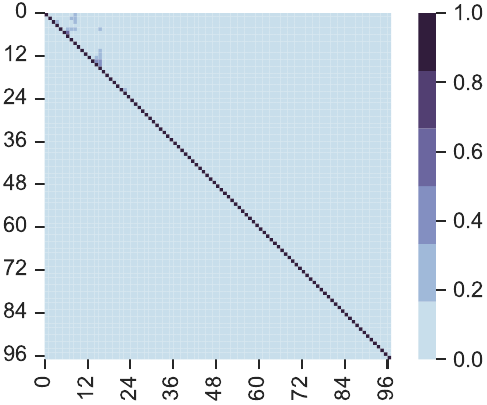}}
\subfigure[]{\includegraphics[width=0.22\linewidth]{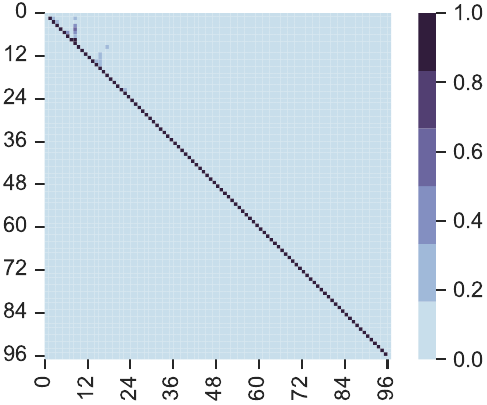}}}
\caption{Visualizing label autocorrelation in time series forecasting. (a) shows the generation process of time series with dependencies depicted as arrows. (b) shows the label correlation in the time domain, where each element $\rho_{i,j}$ indicates the partial correlation between $Y_i$ and $Y_j$ given $L$. (c-d) shows the label correlation in the frequency domain, where each element $\rho_{i,j}$ indicates the partial correlation between $F_i$ and $F_j$ given $L$, shown with the real (c) and imaginary part (d). Due to the symmetry inherent in FFT, the forecast length in the frequency domain is halved.}
\label{fig:auto}
\end{center}
\end{figure*}

\subsection{Reduce label autocorrelation with Fourier transform}

As established in Theorem~\ref{thm:bias}, the bias in the learning objective decreases as label autocorrelation diminishes. To achieve this reduction, a promising strategy is transforming the label sequence into a representation where autocorrelation is minimized. The Discrete Fourier Transform (DFT), defined in Definition~\ref{def:dft}, offers an intuitive approach, which projects the sequence onto a set of orthogonal exponential bases. In this transformed space, the label sequence is described as a linear combination of predefined temporal patterns that are orthogonal, which effectively bypasses the autocorrelation in the time domain. The efficacy of this transformation in reducing label autocorrelation is formalized in Theorem~\ref{thm:ortho}, where different frequency components become decorrelated. Consequently, the reduced $\rho_{i \neq j}$ lowers the bias against the NLL, which benefits the training of time series forecast models.

\begin{definition}[Discrete Fourier Transform, DFT]\label{def:dft}
\textit{The normalized DFT of a sequence $Y=\left[Y_0,...,Y_{\mathrm{T-1}}\right]$ is defined as the projection onto a set of orthogonal Fourier bases at different frequencies. 
The projection for frequency $k$ is computed as}
\begin{equation*}\label{eq:dft}
\begin{aligned}
    F_k &= \sum_{t=0}^{\T-1} Y_t \exp\left(-j(\frac{\mathrm{2\pi k}}{\T}) t\right)/\sqrt{\T},
\end{aligned}
\end{equation*}
\textit{where $j$ is the imaginary unit , $\exp(\cdot)$ is the Fourier basis for different $k$ values. The DFT comprises the set of projections $F=[F_1,...,F_\mathrm{T-1}]$, denoted as $F=\mathcal{F}(Y)$, which can be computed via the Fast Fourier Transform (FFT) algorithm with complexity $\mathcal{O}(\mathrm{T}\log \mathrm{T})$.}
\end{definition}

\begin{theorem}[Decorrelation between frequency components]\label{thm:ortho}
    Let  $Y$  be a zero-mean, discrete-time, wide-sense stationary random process of length  $\mathrm{T}$.  As $ \mathrm{T} \to \infty$, the DFT coefficients become asymptotically uncorrelated at different frequencies:
    \begin{equation*}
        \lim_{\mathrm{T} \to \infty} \mathbb{E}[ F_k F^*_{k^\prime} ] = \begin{cases} S_Y(f_k), & \text{if } k = k^\prime, \\ 0, & \text{if } k \neq k^\prime, \end{cases}
    \end{equation*}
    where  $f_k = \frac{k}{\mathrm{T}}$ and $S_Y(f)$  is the power spectral density of  $Y$ .
\end{theorem}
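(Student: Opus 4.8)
The plan is to run the classical ``asymptotic uncorrelatedness of DFT coefficients'' computation (the cross-term refinement of the bias-of-the-periodogram argument), working directly from Definition~\ref{def:dft}. Throughout I assume the regularity conditions under which the stated limit is meaningful: the autocovariance sequence $r(m) := \mathbb{E}[Y_{t+m} Y_t^{*}]$ (independent of $t$ by wide-sense stationarity) is absolutely summable, $\sum_{m}|r(m)| < \infty$, and $S_Y$ is continuous at the frequency of interest; under these, $S_Y(f) = \sum_m r(m)\, e^{-j 2\pi f m}$.

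First I would expand the product by linearity of expectation,
\begin{equation*}
\mathbb{E}[F_k F_{k'}^{*}] = \frac{1}{\mathrm{T}} \sum_{t=0}^{\mathrm{T}-1}\sum_{s=0}^{\mathrm{T}-1} r(t-s)\, \exp\!\left(-j\tfrac{2\pi}{\mathrm{T}}(kt - k's)\right),
\end{equation*}
then write $kt - k's = k(t-s) + (k-k')s$ and reindex by the lag $m = t-s$. This turns the double sum into a single sum over $m \in \{-(\mathrm{T}-1),\dots,\mathrm{T}-1\}$ of $r(m)\, e^{-j 2\pi f_k m}$ weighted by a partial geometric sum $W_m := \sum_{s\in I_m} e^{-j \frac{2\pi}{\mathrm{T}}(k-k')s}$, where $I_m\subseteq\{0,\dots,\mathrm{T}-1\}$ is the overlap window of size $\mathrm{T}-|m|$.

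Second, the diagonal case $k=k'$: here $W_m = |I_m| = \mathrm{T}-|m|$, so $\mathbb{E}[|F_k|^2] = \sum_{|m|<\mathrm{T}} (1-|m|/\mathrm{T})\, r(m)\, e^{-j 2\pi f_k m}$, i.e. the Fej\'er-weighted truncation of the Fourier series of $S_Y$. A standard $\varepsilon$--$M$ split (estimate the tail $|m|>M$ by $\sum_{|m|>M}|r(m)|$, use $(1-|m|/\mathrm{T})\to 1$ on the fixed block $|m|\le M$, and invoke continuity of $S_Y$) gives $\mathbb{E}[|F_k|^2] \to S_Y(f)$ as $\mathrm{T}\to\infty$ with $f_k\to f$. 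Third, the off-diagonal case $k\neq k'$: the full-period geometric sum $\sum_{s=0}^{\mathrm{T}-1} e^{-j\frac{2\pi}{\mathrm{T}}(k-k')s}$ equals $0$ because $k-k'$ is a nonzero integer mod $\mathrm{T}$; hence $W_m$ equals minus the sum over the complement of $I_m$ in $\{0,\dots,\mathrm{T}-1\}$, which has only $|m|$ unit-modulus terms, so $|W_m|\le|m|$. Therefore
\begin{equation*}
\big|\mathbb{E}[F_k F_{k'}^{*}]\big| \;\le\; \frac{1}{\mathrm{T}}\sum_{|m|<\mathrm{T}} |m|\,|r(m)| \;\le\; \frac{M}{\mathrm{T}}\sum_{m}|r(m)| + \sum_{|m|>M}|r(m)| \;\longrightarrow\; 0
\end{equation*}
on letting $\mathrm{T}\to\infty$ and then $M\to\infty$; notably this bound is uniform over all pairs $k\neq k'$.

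The main obstacle, and essentially the only delicate point, is controlling the finite-window edge effects: the $F_k$ are projections onto the length-$\mathrm{T}$ DFT grid rather than onto genuine eigenfunctions of the covariance operator, so cross-frequency orthogonality holds only approximately and the defect must be quantified; the computation above isolates that defect as the complementary geometric sum bounded by $|m|$, which summability of $r$ renders $o(\mathrm{T})$. A cleaner-looking alternative is the spectral (Cram\'er) representation $Y_t = \int_{-1/2}^{1/2} e^{j2\pi f t}\,dZ(f)$ with $\mathbb{E}|dZ(f)|^2 = S_Y(f)\,df$: then $F_k = \int D_{\mathrm{T}}(f-f_k)\,dZ(f)$ for $D_{\mathrm{T}}$ the normalized Dirichlet kernel, so $\mathbb{E}[F_k F_{k'}^{*}] = \int D_{\mathrm{T}}(f-f_k)\,\overline{D_{\mathrm{T}}(f-f_{k'})}\,S_Y(f)\,df$; since $|D_{\mathrm{T}}|^2$ is the Fej\'er kernel (an approximate identity of unit mass) one gets $S_Y(f)$ on the diagonal, while off the diagonal the two Dirichlet kernels are centered at distinct grid points and their overlap integral against the bounded continuous density vanishes.
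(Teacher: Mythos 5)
Your proof follows essentially the same route as the paper's: expand $\mathbb{E}[F_k F_{k^\prime}^*]$ into a double sum over the autocovariance, reindex by the lag, and handle the diagonal case via the Fej\'er weights $(1-|m|/\mathrm{T})$ and the off-diagonal case via cancellation of the partial geometric sum. Your version is in fact tighter than the paper's --- you state the absolute-summability hypothesis on $r(m)$ that the argument silently requires, you quantify the off-diagonal cancellation with the explicit bound $|W_m|\le|m|$ where the paper merely asserts that the oscillatory exponentials ``mutually cancel,'' and you justify the interchange of limit and sum with an $\varepsilon$--$M$ split --- so the proposal is correct.
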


\textbf{Case study. } To validate our theoretical claims, we conducted a case study on the Weather dataset, illustrated in \autoref{fig:auto}.Implementation details and additional evidence are provided in Appendix A. The main observations are summarized as follows:
\begin{itemize}[leftmargin=*]
    \item \textbf{Evidence of Label Autocorrelation:} \autoref{fig:auto} (b) quantifies the partial correlations between different steps $Y_i$ and $Y_j$ of the label sequence $Y$, conditioned on the input $L$. A number of non-diagonal elements exhibit substantial values, with approximately 37.5\% exceeding 0.3. This indicates that different time steps in $Y$ are correlated conditioned on $L$, confirming the presence of label autocorrelation. Moreover, the autocorrelation displays regular variations, evidenced by alternating light and dark regions in \autoref{fig:auto} (b), suggesting a periodic nature in the series.  Such label autocorrelation makes the learning objective of the naive DF paradigm biased, as established in Theorem \ref{thm:bias}.
    \item \textbf{Effect of Domain Transformation:} \autoref{fig:auto} (c-d) visualize the partial correlations between different frequency components of the transformed label sequence $F$. The majority of non-diagonal elements show negligible values, with only about 3.6\% exceeding 0.1. This demonstrates that transforming the label sequence to the frequency domain significantly reduces the partial correlations between different components, corroborating Theorem~\ref{thm:ortho}. The reduction in label correlation $\rho_{i \neq j}$ leads to a decrease in the bias identified in Theorem \ref{thm:bias}, underscoring the potential of forecasting in the frequency domain for more accurate and unbiased predictions.
\end{itemize}

\subsection{Model Implementation}

\begin{wrapfigure}{r}{6.5cm}
\centering
\vspace{-7mm}
\begin{center}
\centerline{\includegraphics[width=\linewidth]{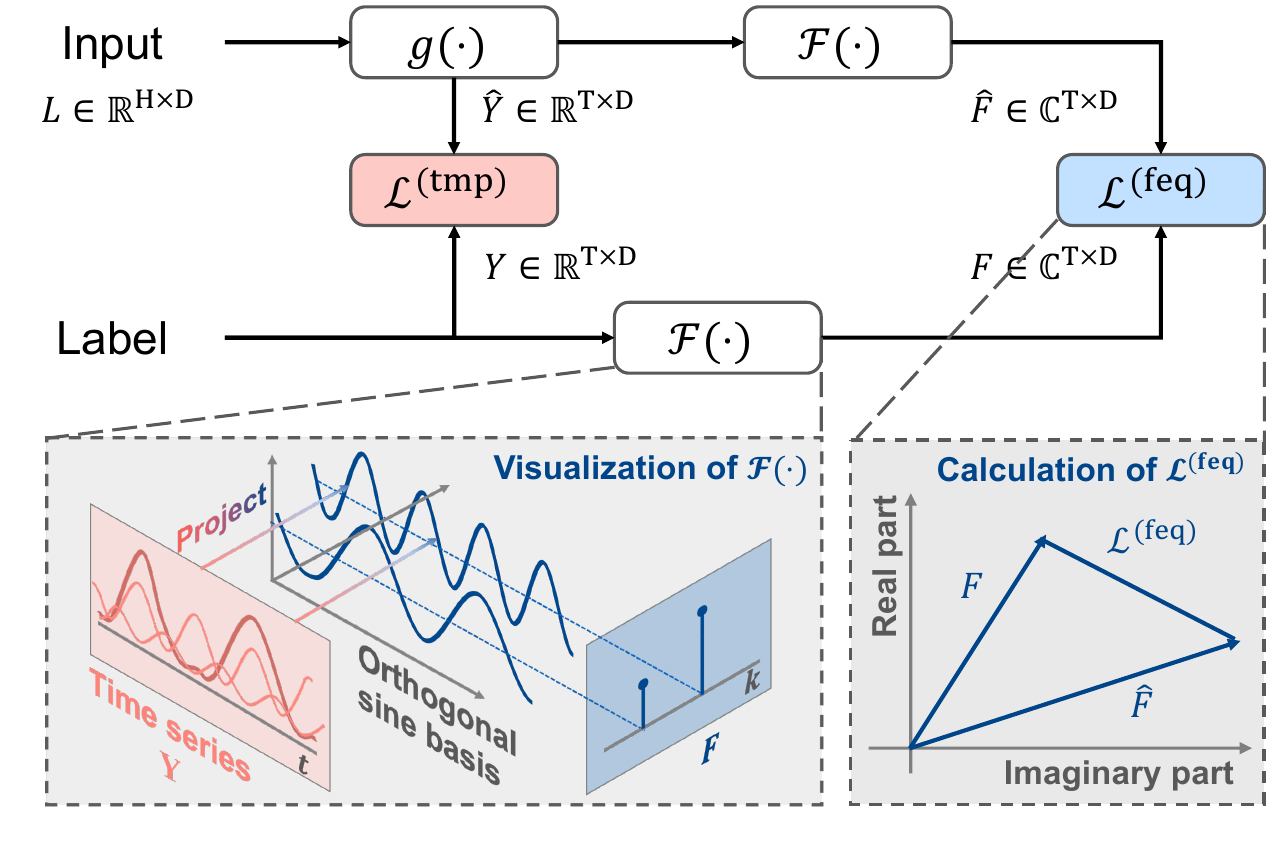}}
\caption{The workflow of FreDF. Key operations in the time and frequency domains are highlighted in red and blue, respectively.}
\label{fig:framework}
\end{center}
\vspace{-4mm}
\end{wrapfigure}
This section introduces FreDF, an innovative approach that enhances the vanilla Direct Forecast (DF) training paradigm. FreDF aligns forecast and label sequences within the frequency domain, effectively mitigating the bias introduced by label autocorrelation.

As illustrated in \autoref{fig:framework}, the input sequence $L$ is fed into the model to generate $T$-step forecasts, expressed as $\hat{Y} = g(L)$. The temporal forecast error $\mathcal{L}^{(\mathrm{tmp})}$ is computed according to~\eqref{eq:temp}. Subsequently, both the forecast and label sequences are transformed into the frequency domain using FFT. The frequency forecast error is then calculated as:
\begin{equation}\label{eq:freq}
\begin{aligned}
    \mathcal{L}^\mathrm{(feq)} 
   :&= \left|\mathcal{F}(\hat{Y}) - \mathcal{F}({Y})\right|_1,
\end{aligned}
\end{equation}
where $Y\in\mathbb{R}^{\mathrm{T\times D}}$, $| \cdot |_1$ denotes the element-wise $\ell_1$ norm, summing the absolute values of all elements within the matrix. Since FFT is differentiable~\citep{Autoformer,fedformer}, $\mathcal{L}^{(\mathrm{freq})}$ can be optimized using standard stochastic gradient descent methods.
We advocate the use of the $\ell_1$ loss in the frequency domain instead of the squared loss due to the numerical characteristics of the transformed label sequence. Specifically, different frequency components often exhibit vastly varying magnitudes; lower frequencies possess significantly higher amplitudes compared to higher frequencies, making the squared loss prone to instability. By using the $\ell_1$ loss, we seek for a more balanced and stable optimization process.

Finally, the temporal and frequency forecast errors are fused, with the weighting parameter $0\leq\alpha\leq1$ controlling the relative contribution of each error:
\begin{equation}
    \mathcal{L}^\alpha:=\alpha\cdot \mathcal{L}^\mathrm{(feq)} +(1-\alpha)\cdot \mathcal{L}^\mathrm{(tmp)}.
\end{equation}

By aligning the forecast and label sequences in the frequency domain, FreDF mitigates the bias caused by label autocorrelation while maintaining the advantages of the DFT, including efficient inference and multi-task learning capabilities. Additionally, FreDF is model-agnostic, compatible with various forecasting models $g$ (e.g., Transformers and MLPs). This flexibility significantly expands the potential applications of FreDF across diverse time series forecasting scenarios, where different forecasting models may demonstrate superior performance.

\section{Experiments}
To demonstrate the efficacy of FreDF, there are six aspects empirically investigated:
\begin{enumerate}[leftmargin=*]
    \item \textbf{Performance:} \textit{Does FreDF work?} Section \ref{sec:overall} compares FreDF with state-of-the-art baselines using public datasets. The long-term forecasting task is investigated in Section~\ref{sec:overall} and the short-term forecasting and imputation tasks are explored in Appendix~\ref{sec:overall_app}.
    \item \textbf{Mechanism:} \textit{How does it work?} Section \ref{sec:ablation} offers an ablative study to dissect the contributions of FreDF's individual components, elucidating their roles in enhancing forecasting accuracy.
    \item \textbf{Generality:} \textit{Does it support other forecasting models?} Section \ref{sec:generalize} verifies the adaptability of FreDF across different forecasting models, with additional results documented in Appendix \ref{sec:generalize_app}.
    \item \textbf{Flexibility:} \textit{Does it support alternative transformations to FFT?} Section \ref{sec:generalize} replaces FFT with other transformations to showcase its flexibility of implementation.
    \item \textbf{Sensitivity:} \textit{Does it require careful fine-tuning?} Section \ref{sec:hyper} presents a sensitivity analysis of the hyperparameter $\alpha$, where FreDF maintains efficacy across a broad range of parameter values.
    \item \textbf{Efficiency:} \textit{Is FreDF effective given limited samples?} Section~\ref{sec:scala} offers a learning curve analysis, where FreDF achieves comparable performance with limited samples to that obtained using substantially more time-domain labels, indicating an advantageous sample efficiency.
\end{enumerate}

\subsection{Setup}
\paragraph{Datasets.} 
The datasets for long-term forecast and imputation include ETT (4 subsets), ECL, Traffic, Weather and PEMS~\citep{itransformer}. The dataset for short-term forecast is M4 following~\citet{Timesnet}. Each dataset is divided chronologically for training, validation and test. Detailed dataset descriptions are provided in Appendix~\ref{sec:dataset}.

\paragraph{Baselines.} Our baselines include various established models, which can be grouped into three categories: (1) Transformer-based methods: Transformer~\citep{Transformer}, Autoformer~\citep{Autoformer}, FEDformer~\citep{fedformer}, iTransformer~\citep{itransformer}; (2) MLP-based methods: DLinear~\citep{DLinear}, TiDE~\citep{das2023long}, FreTS~\citep{FreTS}; (3) other notable models: TimesNet~\citep{Timesnet}, MICN~\citep{MICN}, TCN~\citep{tcn}. 

\paragraph{Implementation.} The baseline models are reproduced using the scripts provided by~\citet{itransformer}. They are trained using the Adam~\citep{Adam} optimizer to minimize the MSE loss. Datasets are split chronologically into training, validation, and test sets. Following the protocol outlined in the comprehensive benchmark~\citep{qiu2024tfb}, the dropping-last trick is disabled during the test phase.
When integrating FreDF to enhance an established model, we adhere to the associated hyperparameter settings in the public benchmark~\citep{itransformer}, only tuning $\alpha$ and learning rate conservatively. Experiments are conducted on Intel(R) Xeon(R) Platinum 8383C CPUs and NVIDIA RTX 3090 GPUs. More implementation details are provided in Appendix~\ref{sec:details_app}.

\begin{table*}
  \caption{Long-term forecasting performance.}\label{tab:longterm}
  \vskip -0.1in
  \renewcommand{\arraystretch}{0.8} 
  \setlength{\tabcolsep}{2.1pt}
  \centering
  \scriptsize
  \renewcommand{\multirowsetup}{\centering}
  \begin{threeparttable}
  \begin{tabular}{c|c|cc|cc|cc|cc|cc|cc|cc|cc|cc|cc|cc}
    \toprule
    \multicolumn{2}{l}{\multirow{2}{*}{\rotatebox{0}{\scaleb{Models}}}} & 
    \multicolumn{2}{c}{\rotatebox{0}{\scaleb{\textbf{FreDF}}}} &
    \multicolumn{2}{c}{\rotatebox{0}{\scaleb{iTransformer}}} &
    \multicolumn{2}{c}{\rotatebox{0}{\scaleb{FreTS}}} &
    \multicolumn{2}{c}{\rotatebox{0}{\scaleb{TimesNet}}} &
    \multicolumn{2}{c}{\rotatebox{0}{\scaleb{MICN}}} &
    \multicolumn{2}{c}{\rotatebox{0}{\scaleb{TiDE}}} &
    \multicolumn{2}{c}{\rotatebox{0}{\scaleb{DLinear}}} &
    \multicolumn{2}{c}{\rotatebox{0}{\scaleb{FEDformer}}} &
    \multicolumn{2}{c}{\rotatebox{0}{\scaleb{Autoformer}}} &
    \multicolumn{2}{c}{\rotatebox{0}{\scaleb{Transformer}}} &
    \multicolumn{2}{c}{\rotatebox{0}{\scaleb{TCN}}} \\
    \multicolumn{2}{c}{} &
    \multicolumn{2}{c}{\scaleb{\textbf{(Ours)}}} & 
    \multicolumn{2}{c}{\scaleb{(2024)}} & 
    \multicolumn{2}{c}{\scaleb{(2023)}} & 
    \multicolumn{2}{c}{\scaleb{(2023)}} &
    \multicolumn{2}{c}{\scaleb{(2023)}} & 
    \multicolumn{2}{c}{\scaleb{(2023)}} & 
    \multicolumn{2}{c}{\scaleb{(2023)}} & 
    \multicolumn{2}{c}{\scaleb{(2022)}} &
    \multicolumn{2}{c}{\scaleb{(2021)}} &
    \multicolumn{2}{c}{\scaleb{(2017)}} &
    \multicolumn{2}{c}{\scaleb{(2017)}} \\
    \cmidrule(lr){3-4} \cmidrule(lr){5-6}\cmidrule(lr){7-8} \cmidrule(lr){9-10}\cmidrule(lr){11-12} \cmidrule(lr){13-14} \cmidrule(lr){15-16} \cmidrule(lr){17-18} \cmidrule(lr){19-20} \cmidrule(lr){21-22} \cmidrule(lr){23-24}
    \multicolumn{2}{l}{\rotatebox{0}{\scaleb{Metrics}}}  & \scalea{MSE} & \scalea{MAE}  & \scalea{MSE} & \scalea{MAE}  & \scalea{MSE} & \scalea{MAE}  & \scalea{MSE} & \scalea{MAE}  & \scalea{MSE} & \scalea{MAE}  & \scalea{MSE} & \scalea{MAE} & \scalea{MSE} & \scalea{MAE} & \scalea{MSE} & \scalea{MAE} & \scalea{MSE} & \scalea{MAE} & \scalea{MSE} & \scalea{MAE} & \scalea{MSE} & \scalea{MAE} \\
    \midrule

    \multicolumn{2}{l}{\scalea{ETTm1}}
    & \scalea{\bst{0.392}} & \scalea{\bst{0.399}} & \scalea{0.415} & \scalea{0.416} & \scalea{0.407} & \scalea{0.415} & \scalea{0.413} & \scalea{0.418} & \scalea{\subbst{0.399}} & \scalea{0.423} & \scalea{0.419} & \scalea{0.419} & \scalea{0.404} & \scalea{\subbst{0.407}} & \scalea{0.440} & \scalea{0.451} & \scalea{0.596} & \scalea{0.517} & \scalea{0.943} & \scalea{0.733} & \scalea{0.891} & \scalea{0.632} \\
    \midrule

    \multicolumn{2}{l}{\scalea{ETTm2}}
    & \scalea{\bst{0.278}} & \scalea{\bst{0.319}} & \scalea{\subbst{0.294}} & \scalea{0.335} & \scalea{0.335} & \scalea{0.379} & \scalea{0.297} & \scalea{\subbst{0.332}} & \scalea{0.300} & \scalea{0.356} & \scalea{0.358} & \scalea{0.404} & \scalea{0.344} & \scalea{0.396} & \scalea{0.302} & \scalea{0.348} & \scalea{0.326} & \scalea{0.366} & \scalea{1.322} & \scalea{0.814} & \scalea{3.411} & \scalea{1.432} \\
    \midrule

    \multicolumn{2}{l}{\scalea{ETTh1}}
    & \scalea{\bst{0.437}} & \scalea{\bst{0.435}} & \scalea{0.449} & \scalea{\subbst{0.447}} & \scalea{0.488} & \scalea{0.474} & \scalea{0.478} & \scalea{0.466} & \scalea{0.525} & \scalea{0.515} & \scalea{0.628} & \scalea{0.574} & \scalea{0.462} & \scalea{0.458} & \scalea{\subbst{0.441}} & \scalea{0.457} & \scalea{0.476} & \scalea{0.477} & \scalea{0.993} & \scalea{0.788} & \scalea{0.763} & \scalea{0.636} \\
    \midrule

    \multicolumn{2}{l}{\scalea{ETTh2}}
    & \scalea{\bst{0.371}} & \scalea{\bst{0.396}} & \scalea{\subbst{0.390}} & \scalea{\subbst{0.410}} & \scalea{0.550} & \scalea{0.515} & \scalea{0.413} & \scalea{0.426} & \scalea{0.624} & \scalea{0.549} & \scalea{0.611} & \scalea{0.550} & \scalea{0.558} & \scalea{0.516} & \scalea{0.430} & \scalea{0.447} & \scalea{0.478} & \scalea{0.483} & \scalea{3.296} & \scalea{1.419} & \scalea{3.325} & \scalea{1.445} \\
    \midrule

    \multicolumn{2}{l}{\scalea{ECL}}
    & \scalea{\bst{0.170}} & \scalea{\bst{0.259}} & \scalea{\subbst{0.176}} & \scalea{\subbst{0.267}} & \scalea{0.209} & \scalea{0.297} & \scalea{0.214} & \scalea{0.307} & \scalea{0.187} & \scalea{0.297} & \scalea{0.251} & \scalea{0.344} & \scalea{0.225} & \scalea{0.319} & \scalea{0.229} & \scalea{0.339} & \scalea{0.228} & \scalea{0.339} & \scalea{0.274} & \scalea{0.367} & \scalea{0.617} & \scalea{0.598} \\
    \midrule

    \multicolumn{2}{l}{\scalea{Traffic}}
    & \scalea{\bst{0.421}} & \scalea{\bst{0.279}} & \scalea{\subbst{0.428}} & \scalea{\subbst{0.286}} & \scalea{0.552} & \scalea{0.348} & \scalea{0.535} & \scalea{0.309} & \scalea{0.636} & \scalea{0.335} & \scalea{0.760} & \scalea{0.473} & \scalea{0.673} & \scalea{0.419} & \scalea{0.611} & \scalea{0.379} & \scalea{0.637} & \scalea{0.399} & \scalea{0.680} & \scalea{0.376} & \scalea{1.001} & \scalea{0.652} \\
    \midrule

    \multicolumn{2}{l}{\scalea{Weather}}
     & \scalea{\bst{0.254}} & \scalea{\bst{0.274}} & \scalea{0.281} & \scalea{0.302} & \scalea{\subbst{0.255}} & \scalea{0.299} &  \scalea{0.262} & \scalea{\subbst{0.288}} & \scalea{0.261} & \scalea{0.319} & \scalea{0.271} & \scalea{0.320} & \scalea{0.265} & \scalea{0.317} & \scalea{0.311} & \scalea{0.361} & \scalea{0.349} & \scalea{0.391} & \scalea{0.632} & \scalea{0.552} & \scalea{0.584} & \scalea{0.572} \\
    \midrule

    \multicolumn{2}{l}{\scalea{PEMS03}}
     & \scalea{\subbst{0.113}} & \scalea{\subbst{0.219}} & \scalea{0.116} & \scalea{0.226} & \scalea{0.146} & \scalea{0.257} & \scalea{0.118} & \scalea{0.223} & \scalea{\bst{0.099}} & \scalea{\bst{0.214}} & \scalea{0.316} & \scalea{0.370} & \scalea{0.233} & \scalea{0.344} & \scalea{0.174} & \scalea{0.302} & \scalea{0.501} & \scalea{0.513} & \scalea{0.126} & \scalea{0.233} & \scalea{0.666} & \scalea{0.634} \\
    \midrule

    \multicolumn{2}{l}{\scalea{PEMS08}}
     & \scalea{\bst{0.141}} & \scalea{\bst{0.238}} & \scalea{0.159} & \scalea{0.258} & \scalea{0.174} & \scalea{0.277} & \scalea{\subbst{0.154}} & \scalea{\subbst{0.245}} & \scalea{0.717} & \scalea{0.459} & \scalea{0.319} & \scalea{0.378} & \scalea{0.294} & \scalea{0.377} & \scalea{0.232} & \scalea{0.322} & \scalea{0.630} & \scalea{0.572} & \scalea{0.249} & \scalea{0.266} & \scalea{0.713} & \scalea{0.629} \\
    \bottomrule
  \end{tabular}
  \begin{tablenotes}
    \item  \scriptsize \textit{Note}:  We fix the input length as 96 following the established benchmark~\citep{itransformer}. \bst{Bold} typeface highlights the top performance for each metric, while \subbst{underlined} text denotes the second-best results. The results are averaged over forecast lengths (96, 192, 336 and 720), with full results in Table~\ref{tab:longterm_app}.
\end{tablenotes}
\end{threeparttable}
  \vskip -0.1in
\end{table*}

\subsection{Overall Performance}\label{sec:overall}

The performance on the long-term forecast task is presented in \autoref{tab:longterm}, where we select iTransformer as the forecast model $g$ and enhance it with FreDF. Overall, FreDF improves the performance of iTransformer substantially. For instance, on the ETTm1 dataset, FreDF decreases the MSE of iTransformer by 0.019.  Similar gains are evident in other datasets, which can be attributed to reconciliation of label autocorrelation with the DF paradigm, validating efficacy of FreDF. 

Moreover, FreDF enhances the performance of iTransformer to surpass even those models that originally outperformed iTransformer on some datasets. It indicates that the improvements by FreDF exceed those achievable through dedicated architectural design alone, emphasizing the importance of handling label autocorrelation and FreDF.

\begin{figure}
\begin{center}
\subfigure[\hspace{-20pt}]{\includegraphics[width=0.24\linewidth]{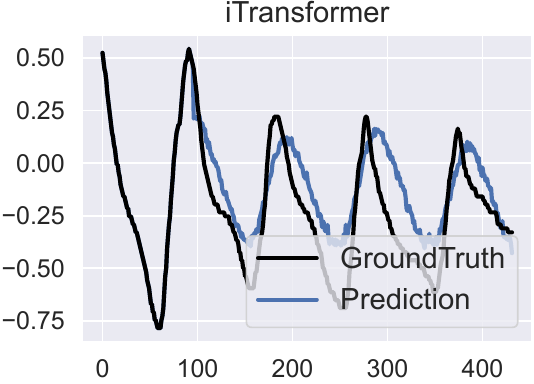}}
\subfigure[\hspace{-20pt}]{\includegraphics[width=0.24\linewidth]{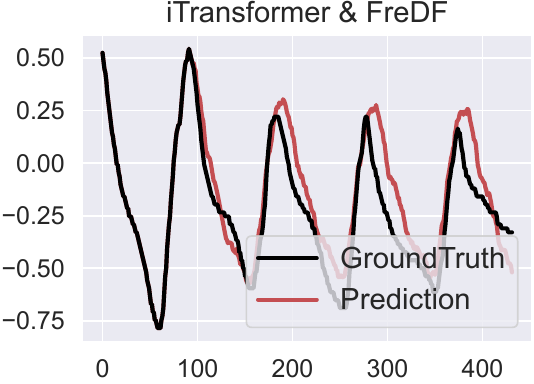}}
\subfigure[\hspace{-20pt}]{\includegraphics[width=0.24\linewidth]{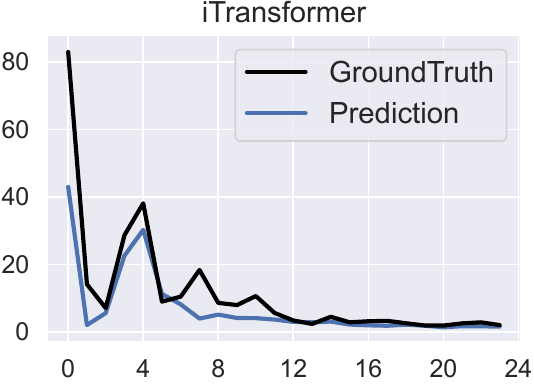}}
\subfigure[\hspace{-20pt}]{\includegraphics[width=0.24\linewidth]{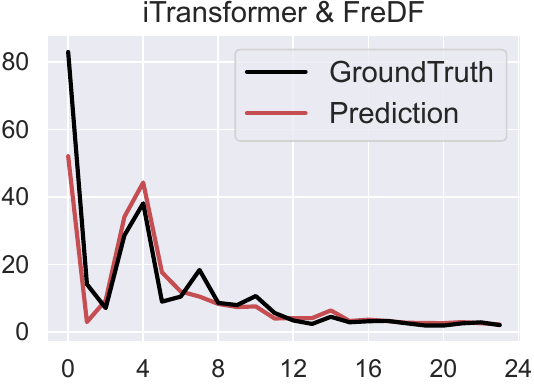}}
\caption{Visualization of forecast sequence generated with and without FreDF in the time (a-b) and frequency (c-d) domains, using the iTransformer as the backbone model.}\label{fig:case}
\end{center}
\end{figure}

\paragraph{Showcases.}
We visualize the forecast sequences to highlight the improvements of FreDF in forecast quality. 
An ETTm2 snapshot with T=336 is depicted in \autoref{fig:case}.
Although the model without FreDF can follow the general trends of the label sequence, it struggles to capture the sequence's high-frequency components, resulting in a forecast with a visibly lower frequency. Additionally, the forecast sequence exhibits numerous burrs. These issues reflect the limitations of forecasting in the time domain, namely the difficulty in capturing high-frequency components and the neglect of autocorrelation between sequential steps.
FreDF addresses these limitations effectively. The forecasts generated under FreDF not only keep pace with the label sequence, accurately capturing high-frequency components, but also exhibit a smoother appearance with fewer irregularities, due to its awareness of autocorrelation.

\subsection{Ablation Studies}\label{sec:ablation}

\begin{table*}[]
\caption{Ablation study results. }\label{tab:system_ablation_app}
 \setlength{\tabcolsep}{4.7pt} \tiny \centering
\begin{tabular}{llllccccccccccccccc}
\toprule
\multirow{2}{*}{Model} & \multirow{2}{*}{$\mathcal{L}^\mathrm{(tmp)}$} & \multirow{2}{*}{$\mathcal{L}^\mathrm{(feq)}$} &\multirow{2}{*}{Data} && \multicolumn{2}{c}{T=96} && \multicolumn{2}{c}{T=192} && \multicolumn{2}{c}{T=336} && \multicolumn{2}{c}{T=720} && \multicolumn{2}{c}{Avg}\\
\cmidrule{6-7} \cmidrule{9-10} \cmidrule{12-13} \cmidrule{15-16}   \cmidrule{18-19}
&&&&& MSE  & MAE && MSE & MAE && MSE & MAE && MSE & MAE && MSE & MAE      \\\midrule
 \multirow{5}{*}{DF}&\multirow{5}{*}{\Checkmark} & \multirow{5}{*}{\XSolidBrush}  
 &ETTm1&& 0.346 & 0.379 && 0.391 & 0.400 && 0.426 & 0.422 && 0.493 & 0.460 && 0.414 & 0.415 \\
 &&&ETTh1&& 0.390 & 0.409 && 0.442 & 0.440 && 0.479 & 0.457 && 0.483 & 0.479 && 0.449 & 0.446 \\
 &&&ECL&& 0.147 & 0.239 && 0.166 & 0.258 && 0.178 & 0.271 && 0.209 & 0.298 && 0.175 & 0.266 \\
 &&&Weather&& 0.201 & 0.246 && 0.250 & 0.282 && 0.302 & 0.317 && 0.370 & 0.361 && 0.280 & 0.302 \\
 \midrule
 \multirow{5}{*}{FreDF$^\dagger$}&\multirow{5}{*}{\XSolidBrush} & \multirow{5}{*}{\Checkmark} 
 &ETTm1&& \subbst{0.324} & \subbst{0.361} && \subbst{0.374} & \subbst{0.387} && \subbst{0.403} & \subbst{0.405} && \subbst{0.468} & \subbst{0.443} && \subbst{0.392} & \subbst{0.399} \\
 &&&ETTh1&& \subbst{0.380} & \subbst{0.399} && \subbst{0.429} & \subbst{0.425} && \subbst{0.474} & \subbst{0.451} && \subbst{0.467} & \subbst{0.464} && \subbst{0.437} & \subbst{0.435}\\
 &&&ECL&& \subbst{0.144} & \subbst{0.232} && \subbst{0.158} & \subbst{0.247} && \subbst{0.171} & \subbst{0.262} && \subbst{0.204} & \subbst{0.291} && \subbst{0.169} & \subbst{0.258} \\
 &&&Weather&& \subbst{0.165} & \subbst{0.205} && \subbst{0.225} & \subbst{0.255} && \subbst{0.278} & \subbst{0.295} && \subbst{0.359} & \subbst{0.349} && \subbst{0.257} & \subbst{0.276} \\
 \midrule
 \multirow{5}{*}{FreDF}&\multirow{5}{*}{\Checkmark} & \multirow{5}{*}{\Checkmark} 
 &ETTm1&& \bst{0.324} & \bst{0.362} && \bst{0.372} & \bst{0.385} && \bst{0.402} & \bst{0.404} && \bst{0.468} & \bst{0.443} && \bst{0.391} & \bst{0.398} \\
 &&&ETTh1&& \bst{0.381} & \bst{0.400} && \bst{0.430} & \bst{0.426} && \bst{0.474} & \bst{0.451} && \bst{0.463} & \bst{0.461} && \bst{0.437} & \bst{0.435}\\
 &&&ECL&& \bst{0.144} & \bst{0.233} && \bst{0.158} & \bst{0.247} && \bst{0.172} & \bst{0.263} && \bst{0.204} & \bst{0.293} && \bst{0.169} & \bst{0.259}\\
 &&&Weather&&  \bst{0.163} &\bst{0.202}&&\bst{0.220}&\bst{0.252}&&\bst{0.274}&\bst{0.293}&&\bst{0.356}&\bst{0.346}&&\bst{0.253}&\bst{0.273}\\
 \bottomrule
\end{tabular}
\end{table*}

\begin{table*}
\caption{Varying FFT implementation results. }\label{tab:fft}
\centering
\begin{threeparttable}
\setlength{\tabcolsep}{4pt} \scriptsize 
\begin{tabular}{l|cc|cc|cc|cc|cc|cc}
\toprule
\multirow{2}{*}{Model} && \multicolumn{2}{c}{ETTh1} &&& \multicolumn{2}{c}{ETTm1} && &\multicolumn{2}{c}{ECL} &\\
\cmidrule{2-5} \cmidrule{6-9} \cmidrule{10-13}
& MSE & $\Delta$ & MAE & $\Delta$ & MSE & $\Delta$ & MAE & $\Delta$ & MSE & $\Delta$ & MAE & $\Delta$    \\\midrule
 iTransformer     & 0.449 & - & 0.447 & - & 0.415 & - & 0.416 & - & 0.176 & - & 0.267 & -  \\
 
 {+ FreDF-T} & 0.437 & \reduce{$\downarrow$ 2.63\%} & 0.435 & \reduce{$\downarrow$ 2.62\%} & 0.392 & \reduce{$\downarrow$ 5.49\%} & 0.399 & \reduce{$\downarrow$ 4.01\%} & 0.170 & \reduce{$\downarrow$ 3.41\%} & 0.259 & \reduce{$\downarrow$ 2.77\%} \\
 
 {+ FreDF-D} & 0.445 & \reduce{$\downarrow$ 0.92\%} & 0.440 & \reduce{$\downarrow$ 1.42\%} & 0.395 & \reduce{$\downarrow$ 4.77\%} & 0.398 & \reduce{$\downarrow$ 4.33\%} & 0.171 & \reduce{$\downarrow$ 2.51\%} & 0.260 & \reduce{$\downarrow$ 2.52\%}  \\
 
 {+ FreDF-2} & 0.432 & \reduce{$\downarrow$ 3.94\%} & 0.431 & \reduce{$\downarrow$ 3.57\%} & 0.392 & \reduce{$\downarrow$ 5.60\%} & 0.399 & \reduce{$\downarrow$ 4.05\%} & 0.166 & \reduce{$\downarrow$ 5.32\%} & 0.256 & \reduce{$\downarrow$ 4.20\%} \\
 \bottomrule
\end{tabular}

\begin{tablenotes}
    \item  \scriptsize \textit{Note}: $\Delta$ denotes the relative error reduction compared to iTransformer with DF paradigm.
\end{tablenotes}
\end{threeparttable}
\end{table*}

In this section, we dissect the contributions of the temporal and frequency loss for enhancing forecast performance. The results are detailed in \autoref{tab:system_ablation_app}, where iTransformer is used as the forecast model. Overall, the frequency loss consistently improves performance compared to the temporal loss. 
The rationale is that label autocorrelation can be effectively managed in the frequency domain, aligning better with the conditional independence assumption inherent in DF. Moreover, learning to forecast in both domains generally showcase improvement compared to relying solely on one domain. However, the improvement over $\mathcal{L}^\mathrm{(feq)}$ is marginal. Hence, exclusively focusing on frequency domain forecasting emerges as a viable strategy in most cases, offering promising performance without the complexity of balancing learning objectives.

\subsection{Generalization Studies}\label{sec:generalize}
In this section, we investigate the utility of FreDF with different forecast models and domain transformation strategies, to showcase the generality of FreDF. In the bar-plots, the forecast errors are averaged over forecast lengths (96, 192, 336, 720), with error bars as 95\% confidence intervals.

\begin{figure}
\begin{center}
\subfigure[ECL with MSE]{\includegraphics[width=0.24\linewidth]{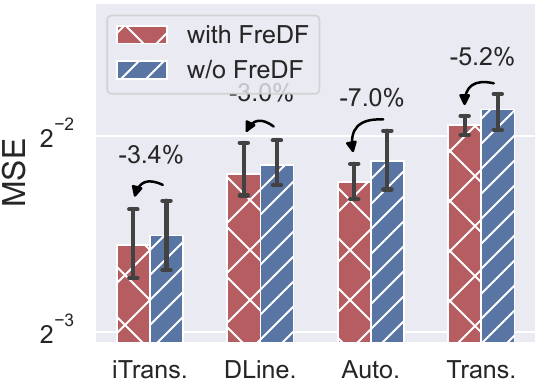}}
\subfigure[ECL with MAE]{\includegraphics[width=0.24\linewidth]{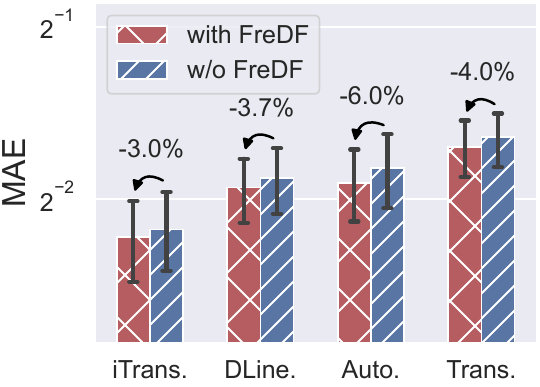}}
\subfigure[Weather with MSE]{\includegraphics[width=0.24\linewidth]{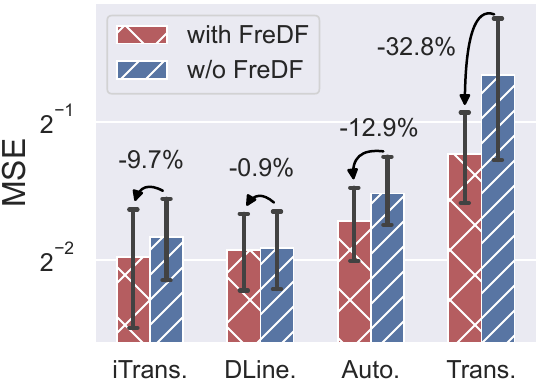}}
\subfigure[Weather with MAE]{\includegraphics[width=0.24\linewidth]{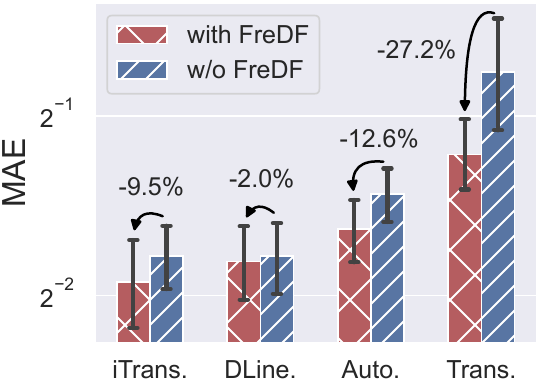}}
\caption{Benefit of incorporating FreDF in varying models, shown with colored bars for means over forecast lengths (96, 192, 336, 720) and error bars for 99.9\% confidence intervals. }
\label{fig:backbone}
\end{center}
\end{figure}

\paragraph{Varying forecast models.}
We explore the versatility of FreDF in augmenting representative neural forecasting models: iTransformer, DLinear, Autoformer, and Transformer. FreDF demonstrates significant enhancements across these models compared to the traditional DF paradigm, as illustrated in \autoref{fig:backbone}. Notably, Transformer-based models such as the Autoformer and Transformer substantially benefit from the integration of FreDF. On the ECL dataset, for instance, the Autoformer (developed in 2021) enhanced by FreDF outperforms DLinear (developed in 2023). More evidence of FreDF's versatility is provided in Appendix E. These results confirm FreDF's potential as a plugin-and-play strategy to enhance various time series forecasting models.

\paragraph{Varying FFT implementations.}

We note that label autocorrelation exists between not only different steps, but also variables in multivariate forecasting. Therefore, we implement FFT along the time (FreDF-T) and variable dimension (FreDF-D) to handle the corresponding correlations, with the outcomes illustrated in \autoref{tab:fft}. 
In general, conducting FFT along the time and variable axis brings similar performance gain, which showcases the existence of correlation between different steps and variables, respectively. In particular, FreDF-T slightly outperforms FreDF-D, which underscores the relative importance of auto-correlation in the label sequence. Finally, a strategic approach is viewing the multivariate sequence as an image, performing 2-dimensional FFT on both time and variable axes (FreDF-2), which accommodates the correlations between both time steps and variables simultaneously and further improves performance.

\paragraph{Varying transformations.}

\begin{figure}
\centering
\begin{center}
\subfigure[ETTh1 with MSE]{\includegraphics[width=0.24\linewidth]{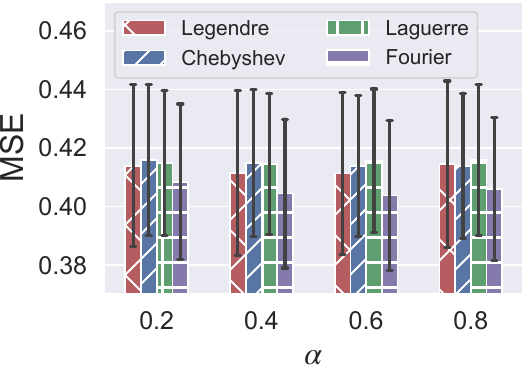}}
\subfigure[ETTh1 with MAE]{\includegraphics[width=0.24\linewidth]{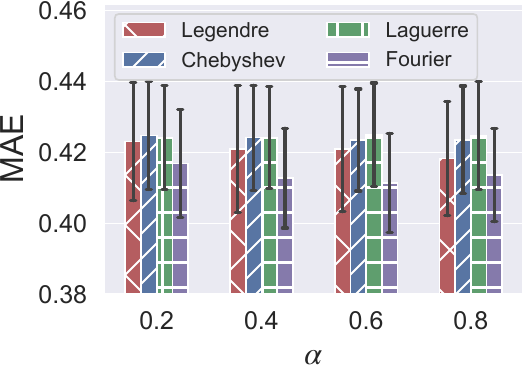}}
\subfigure[ETTm1 with MSE]{\includegraphics[width=0.24\linewidth]{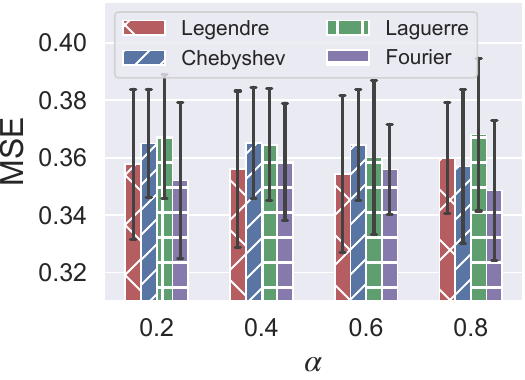}}
\subfigure[ETTm1 with MAE]{\includegraphics[width=0.24\linewidth]{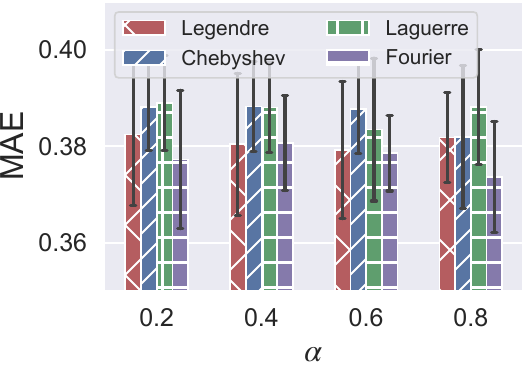}}
\caption{Varying projection bases results, shown with colored bars for means over forecast lengths (96, 192, 336, 720) and error bars for 99.9\% confidence intervals.}
\label{fig:trans}
\end{center}
\end{figure}
Motivated by the fact that FFT can be viewed as projections onto exponential bases, we extend the implementation of FreDF by replacing FFT with projections onto other established polynomials. Each polynomial set is adept at capturing specific data patterns, such as trends and periodicity, which are challenging to learn in the time domain. The results are summarized in \autoref{fig:trans}. 
Notably, projections onto Legendre and Fourier bases demonstrate superior performance. This superiority is attributed to the orthogonality of the polynomials, a feature not guaranteed by others as analyzed in Appendix C.
It underscores orthogonality when selecting polynomials for implementing FreDF, which is pivotal for eliminating autocorrelation.

\subsection{Hyperparameter Sensitivity}\label{sec:hyper}

\begin{figure}
\begin{center}
\subfigure[ECL with MSE]{\includegraphics[width=0.24\linewidth]{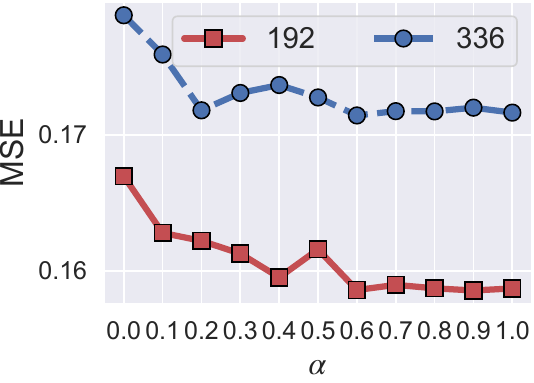}}
\subfigure[ECL with MAE]{\includegraphics[width=0.24\linewidth]{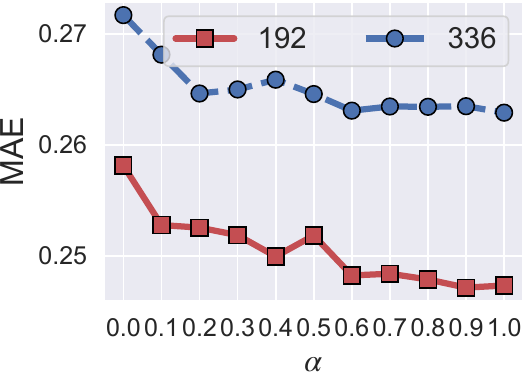}}
\subfigure[ETTm1 with MSE]{\includegraphics[width=0.24\linewidth]{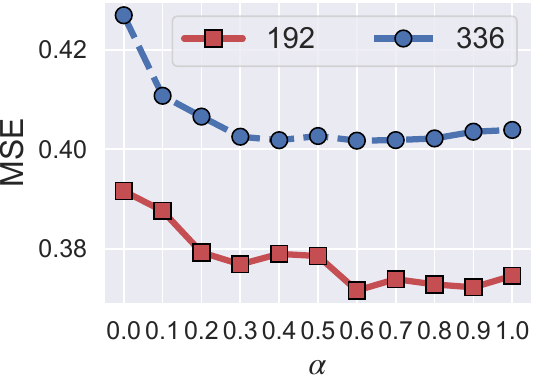}}
\subfigure[ETTm1 with MAE]{\includegraphics[width=0.24\linewidth]{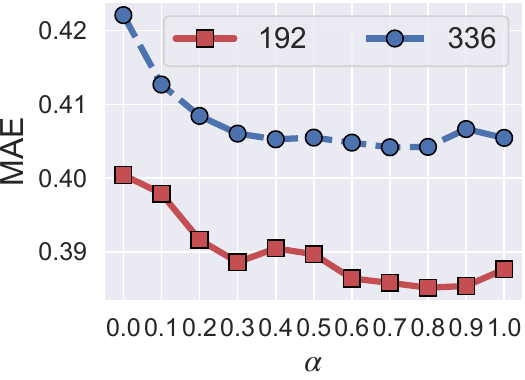}}
\caption{Varying strength of frequency loss ($\alpha$) results, shown with colored lines for T=192, 336.}
\label{fig:sensi}
\end{center}
\end{figure}

The key hyperparameter of FreDF is the frequency loss strength $\alpha$. The performance given different $\alpha$ is summarized in \autoref{fig:sensi}. 
Overall, increasing $\alpha$ from 0 to 1 results in a reduction of forecast error, albeit with a slight increase towards the end of this range. 
For instance, on the ECL dataset with T=192, both MAE and MSE decrease from approximately 0.258 and 0.167 to 0.247 and 0.158, respectively. Such trend of diminishing error seems consistent across different forecast lengths and datasets, supporting the benefit of learning to forecast in the frequency domain. 
Notably, the optimal reduction in forecast error typically occurs at $\alpha$ values near 1, such as 0.8 for the ETTh1 dataset, rather than at the absolute value of 1. Therefore, unifying supervision signals from both time and frequency domains brings performance improvement. Similar trends are presented across different datasets and foreacst models, as discussed in Appendix \ref{sec:app_sense}. 
\subsection{Learning-curve Analysis}\label{sec:scala}

\begin{wrapfigure}{r}{6.5cm}
\centering
\vspace{-8mm}
\begin{center}
\subfigure[\hspace{-20pt}]{\includegraphics[width=0.48\linewidth]{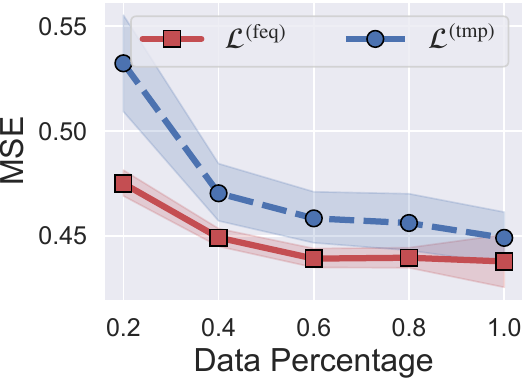}}
\subfigure[\hspace{-20pt}]{\includegraphics[width=0.48\linewidth]{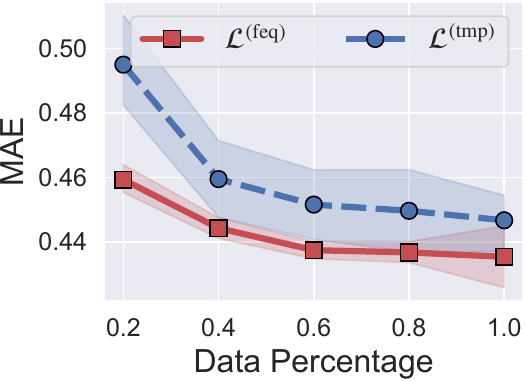}}
\caption{Learning curve on ETTm1 dataset.}
\label{fig:curve}
\end{center}
\vspace{-1mm}
\end{wrapfigure}
In this section, we investigate the sample efficiency of learning in the time versus frequency domains, with the corresponding learning curves in \autoref{fig:curve}. Overall, given limited training data, learning in the frequency domain demonstrates remarkable efficacy. With only 30\% of the training data, it achieves performance comparable to learning in the time domain using the full training dataset.

The underlying reason for this enhanced sample efficiency can be attributed to the consistent and more straightforward nature of the data representation. For instance, a sliding window on a sine signal yields a set of distinct sequences in the time domain. However, in the frequency domain, these sequences present a similar pattern: a prominent spike at a specific frequency and negligible values elsewhere.
This uniformity simplifies the learning process by making patterns more consistent and easier to decipher, thus reducing the need for extensive training datasets.




\section{Conclusion}
In this study, we underscore the challenge of label autocorrelation in time series modeling, which biases the learning objective of the widely adopted DF paradigm. To tackle this challenge, we introduce a model-agnostic learning objective: FreDF, which mitigates label autocorrelation by transforming the label sequence into the frequency domain, thereby effectively reducing the bias caused by label autocorrelation.
The experiments demonstrate that FreDF effectively enhances the performance of prevalent forecast models.

\textit{\textbf{Limitation \& future works.}} 
In this work, we primarily utilize the Fourier transform for domain transformation. Despite empirical efficacy, the predefined set of exponential bases lacks the ability to adapt to specific data properties. Alternative transforms such independent component analysis can produce orthogonal bases considering data properties, representing a valuable avenue for future research.
Additionally, the issue of label autocorrelation extends beyond time series, affecting diverse contexts involving structural labels, such as 3D point clouds, speech, and images. The potential of FreDF to enhance performance in these contexts awaits further exploration.

\section*{Acknowledgement}
This work was supported by National Natural Science Foundation of China
(623B2002, 12075212).
The first author extends heartfelt gratitude to Prof. Degui Yang of Central South University, for his exceptional signal processing lectures and generous research guidance during S.T.E.M. studies.

\bibliography{abbr,ref}
\bibliographystyle{iclr2025_conference}

\clearpage
\appendix

\section{Overview of DML for Partial Correlation Estimation}
\subsection{Motivation}
In this section, we introduce the rationale for employing double machine learning (DML) to quantify the partial correlations. Our focus is on the autocorrelation represented by $Y_t\rightarrow Y_{t^\prime}$ where $0\leq t < t^\prime < \mathrm{T}$. However, the fork structure $Y_t \leftarrow L(n) \rightarrow Y_{t^\prime}$ creates a pseudo correlation between $Y_{t^\prime}$ and $Y_t$~\citep{escmtifs}.  In this case, the autocorrelation $Y_t\rightarrow Y_{t^\prime}$ is influenced by the pseudo correlations from the fork structure, rendering traditional correlation measures, such as Pearson correlation, ineffective for quantifying the autocorrelation $Y_t\rightarrow Y_{t^\prime}$~\citep{liremoving,lirelaxing}.

To effectively address this influence and quantify partial correlation, it is essential to employ methods that excel in distinguishing direct relationships from spurious ones~\citep{wang2023optimal}.
DML is chosen for calculating partial correlation due to its ease of implementation and independence from exhaustive hyperparameter tuning. DML offers a robust and reliable quantification of the autocorrelation that we care about~\citep{bia2024double,chernozhukov2018double}.

\subsection{Method}
In this section, we detail the implementation of DML, a two-step procedure designed for estimating partial correlation. We define $\mathcal{T}\in\mathbb{R}$ as the treatment variable, $\mathcal{Y}\in\mathbb{R}$ as the outcome variable, $\mathcal{X}\in\mathbb{R}^\mathrm{D}$ as the control variable that needs to be accounted for. The implementation of DML is depicted in \autoref{fig:dml} (b) which consists of two steps below. 

\begin{itemize}[leftmargin=*]
    \item \textbf{Orthogonalization.} This step involves orthogonalizing both the outcome ($\mathcal{Y}$) and the treatment ($\mathcal{T}$) with respect to the control variables ($\mathcal{X}$). To this end, we first use two machine learning models, namely $\phi$ and $\psi$, to predict the outcome and the treatment based on $\mathcal{X}$. These predictions aim to capture the components in $\mathcal{Y}$ and $\mathcal{T}$ that are influenced by $\mathcal{X}$. Subsequently, such impact of $\mathcal{X}$ can be eliminated by calculating the residuals:
\begin{equation}
\begin{aligned}
    \tilde{\mathcal{Y}}&=\mathcal{Y}-\phi(\mathcal{X}), \\
    \tilde{\mathcal{T}}&=\mathcal{T}-\psi(\mathcal{X}).
\end{aligned}
\end{equation}
    \item \textbf{Regression.} This step involves regressing the orthogonalized outcome $\tilde{\mathcal{Y}}$ on the orthogonalized treatment $\tilde{\mathcal{T}}$. A linear regression model is utilized for this purpose:
    \begin{equation}
        \tilde{\mathcal{Y}} = \beta \tilde{\mathcal{T}} + \epsilon,
    \end{equation}
    where $\epsilon$ is the error term; $\beta$ is the model coefficient that can be identified via ordinary least squares. The $\beta$ can be identified in a supervised learning manner, with the objective of minimizing the MSE between the prediction and real values.
    The identified $\beta$ quantifies the partial correlation between the treatment and the outcome, having accounted for the influence of $\mathcal{X}$.
\end{itemize}

By regressing the orthogonalized outcome on the orthogonalized treatment, DML captures the direct effect of the treatment on the outcome without the interference from control variables, as depicted in \autoref{fig:dml} (c).
That is, DML isolates the desired partial correlation $\mathcal{T}\rightarrow \mathcal{Y}$ from the influencing correlation $\mathcal{T}\leftarrow \mathcal{X}\rightarrow \mathcal{Y}$.

\subsection{Experimental Settings}
In this section, we outline the experimental settings implemented to employ DML for quantifying the correlations of interest. 
\paragraph{General settings.} For the base learners $\phi$ and $\psi$, we opt for a linear regression model optimized using ordinary least squares for its efficiency\footnote{The linear regression model, chosen for its computational efficiency, is crucial in managing the experiment's scale, where the total number of DML estimators can be exceedingly high (e.g., 36,864 for T=192).}. Following Appendix A.1, we treat the input sequence $L$ as the control variable to adjust, and simplify the process by considering the last step in $L$ as representative. Moreover, we focus exclusively on the correlations within the last feature of each dataset\footnote{This focus is aligned with the study's objective of analyzing autocorrelation instead of inter-feature correlations, which simplifies the interpretation of results.}. This focus makes $Y$ a scalar value within the real number space rather than a D-dimensional vector in this experiment.

\paragraph{Specifications for identifying time-domain partial correlation.} To assess the partial correlation $Y_t \rightarrow Y_{t^\prime}$, we treat $Y_t$ as the treatment and $Y_{t^\prime}$ as the outcome. The DML model is trained using a set of N observations: $\{L(n)\}_{n=1:\mathrm{N}}$, $\{Y_t(n)\}_{n=1:\mathrm{N}}$, and $\{Y_{t^\prime}(n)\}_{n=1:\mathrm{N}}$. The coefficient $\beta$ derived from the DML model is interpreted as the strength of the partial correlation $Y_{t} \rightarrow Y_{t^\prime}$.

\paragraph{Specifications for identifying frequency-domain partial correlation.} To quantify the partial correlation $F_{k} \rightarrow F_{k^\prime}$, we treat $F_{k}$ as the treatment and $F_{k^\prime}$ as the outcome. The DML model is trained using a set of N observations: $\{L(n)\}_{n=1:\mathrm{N}}$, $\{F_{k}(n)\}_{n=1:\mathrm{N}}$, and $\{F_{k^\prime}(n)\}_{n=1:\mathrm{N}}$. The coefficient $\beta$ derived from the DML model is interpreted as the strength of the partial correlation $F_k \rightarrow F_{k^\prime}$. A notable complexity arises because $F_k$ is a complex number. Since DML is typically designed for real numbers instead of complex numbers, it requires a separate consideration of the real and imaginary parts of $F_k$.

\subsection{More Experimental Results}
\begin{figure}
    \centering
    \subfigure[]{\includegraphics[width=0.25\linewidth]{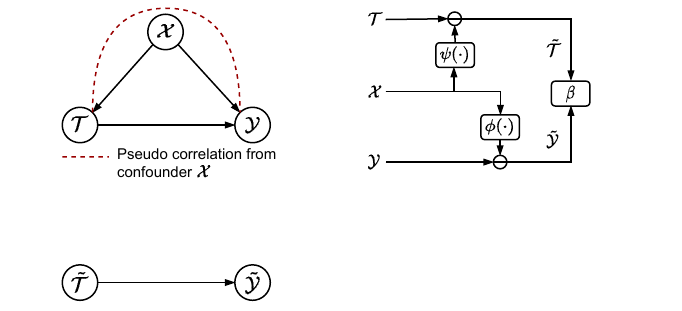}}\hspace{15mm}
    \subfigure[]{\includegraphics[width=0.25\linewidth]{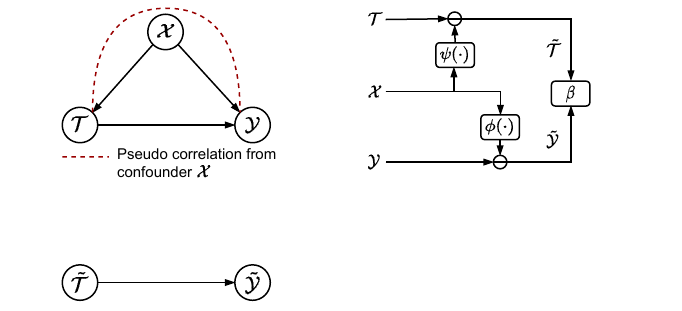}} \hspace{15mm}
    \subfigure[]{\includegraphics[width=0.25\linewidth]{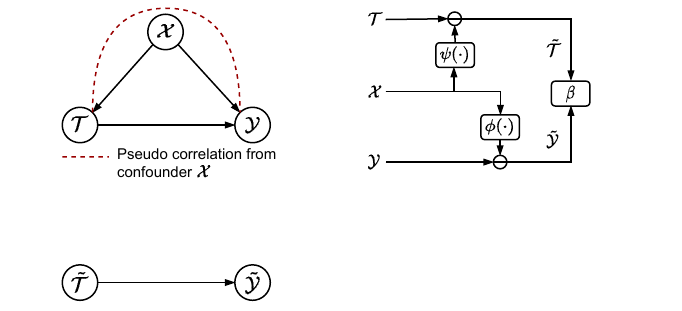}}
\caption{Visualization of partial correlation and DML approach for partial correlation quantification. (a) The correlation graph where the pseudo correlation is caused by the fork structure $\mathcal{T}\leftarrow\mathcal{X}\rightarrow\mathcal{Y}$. (b) The implementation of DML, where $\beta$ is the identified strength of the partial correlation $\mathcal{T}\rightarrow\mathcal{Y}$. (c) The partial correlation identified by DML.}\label{fig:dml}
\end{figure}

\begin{figure*}
\begin{center}
\subfigure[]{\includegraphics[width=0.32\linewidth]{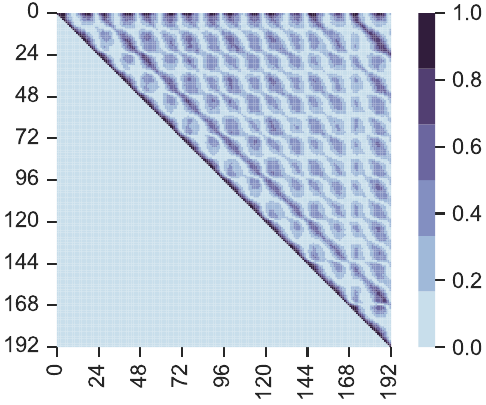}}
\subfigure[]{\includegraphics[width=0.32\linewidth]{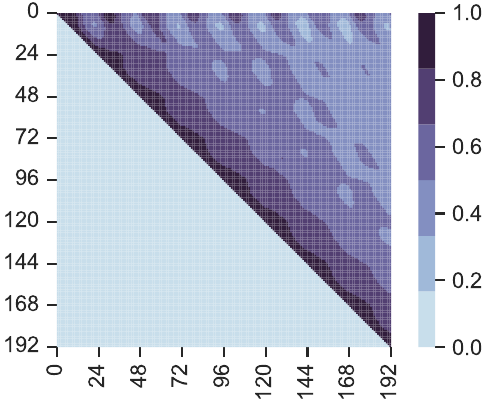}}
\subfigure[]{\includegraphics[width=0.32\linewidth]{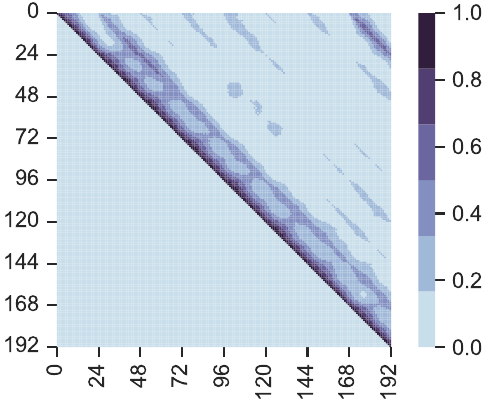}}
\subfigure[]{\includegraphics[width=0.32\linewidth]{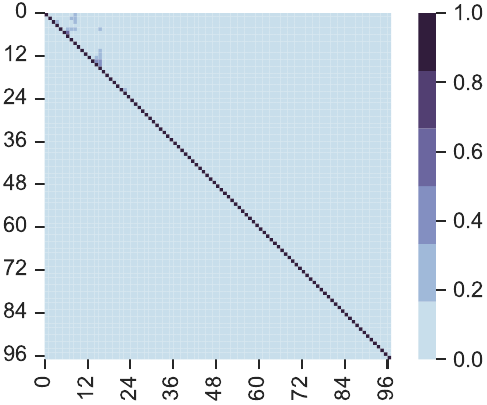}}
\subfigure[]{\includegraphics[width=0.32\linewidth]{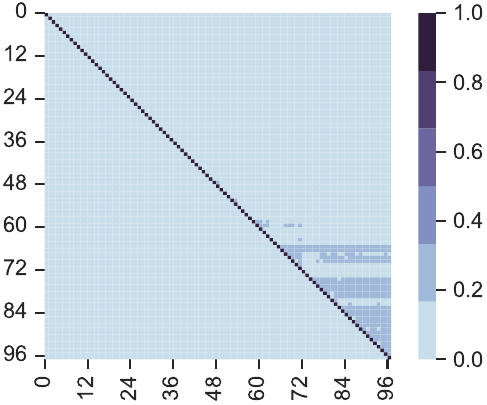}}
\subfigure[]{\includegraphics[width=0.32\linewidth]{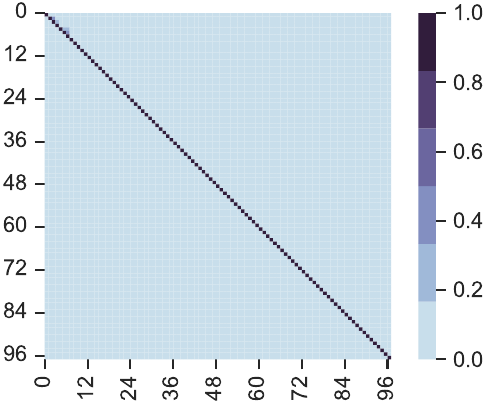}}
\subfigure[]{\includegraphics[width=0.32\linewidth]{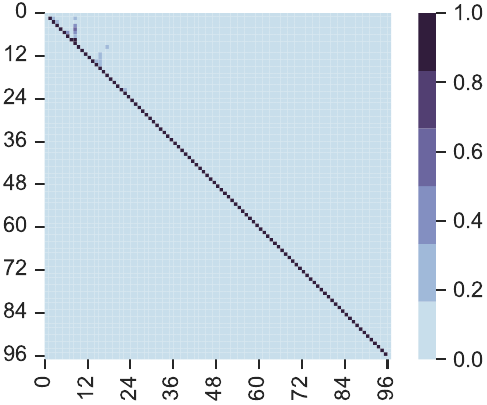}}
\subfigure[]{\includegraphics[width=0.32\linewidth]{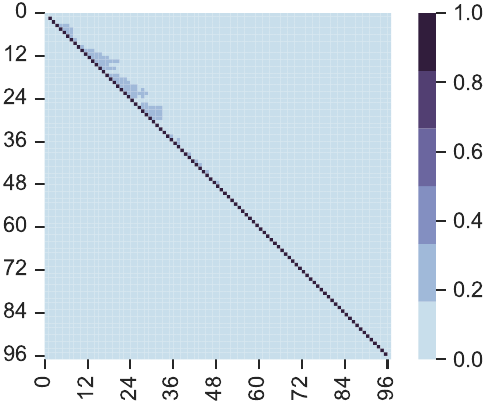}}
\subfigure[]{\includegraphics[width=0.32\linewidth]{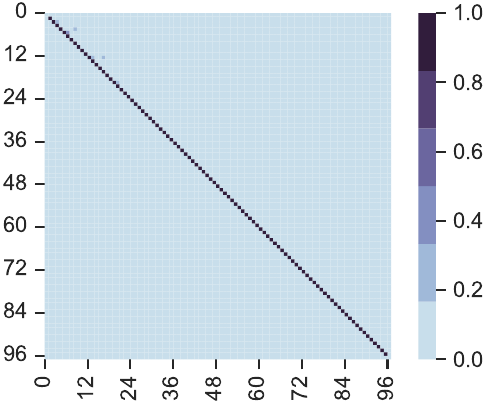}}
\caption{More comprehensive visualizations of label autocorrelation in different domains and datasets, with columns representing different datasets: Traffic, ETTh1, and ECL, from left to right. Panels (a-c) show the label correlation in the time domain, where each element $\rho_{i,j}$ indicates the partial correlation between $Y_i$ and $Y_j$ given $L$. Panels (d-i) show the label correlation in the frequency domain, where each element $\rho_{i,j}$ indicates the partial correlation between $F_i$ and $F_j$ given $L$, shown with the real (d-f) and imaginary part (g-i). Due to the symmetry
inherent in FFT, the forecast length in the frequency domain is halved. }\label{fig:auto_time_app1}

\end{center}
\end{figure*}

\begin{figure*}
\begin{center}
\subfigure[]{\includegraphics[width=0.245\linewidth]{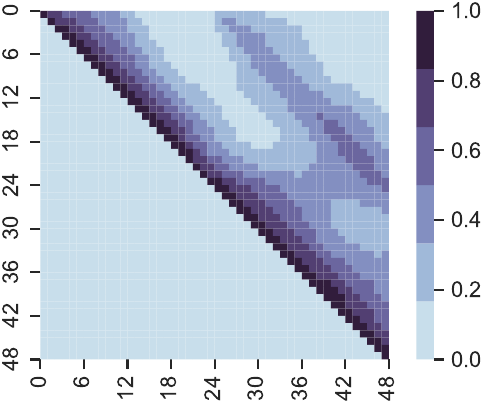}}
\subfigure[]{\includegraphics[width=0.245\linewidth]{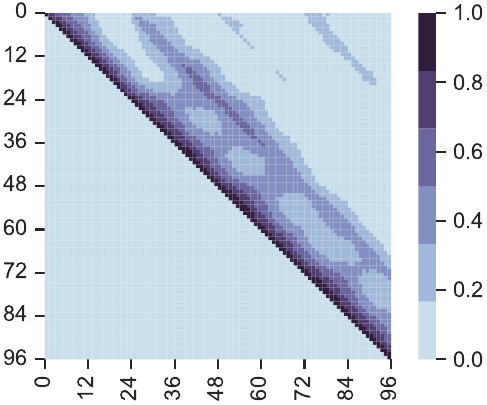}}
\subfigure[]{\includegraphics[width=0.245\linewidth]{fig/diagrams/electricity_time_1_192.pdf}}
\subfigure[]{\includegraphics[width=0.245\linewidth]{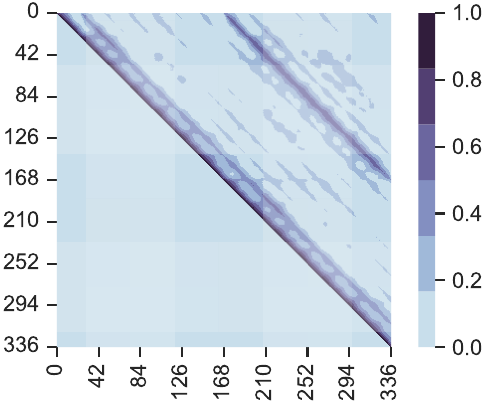}}
\subfigure[]{\includegraphics[width=0.245\linewidth]{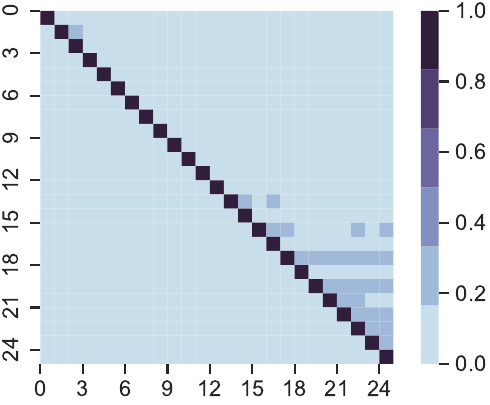}}
\subfigure[]{\includegraphics[width=0.245\linewidth]{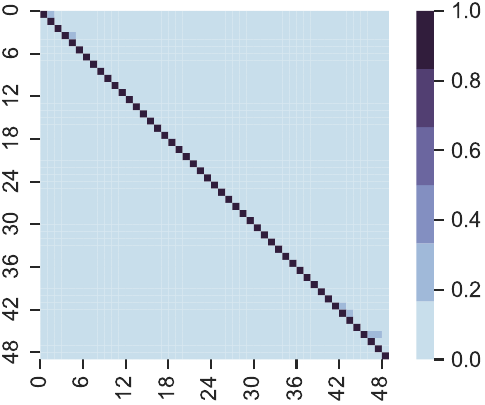}}
\subfigure[]{\includegraphics[width=0.245\linewidth]{fig/diagrams/electricity_freq_real_1_192.pdf}}
\subfigure[]{\includegraphics[width=0.245\linewidth]{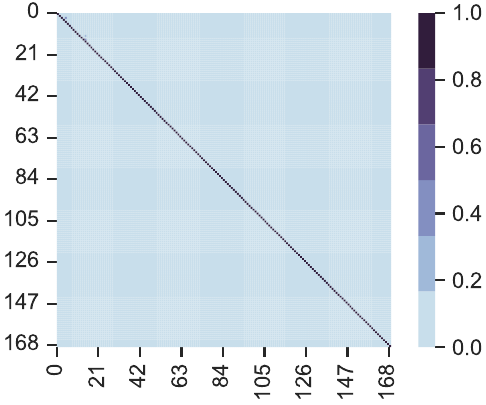}}
\subfigure[]{\includegraphics[width=0.245\linewidth]{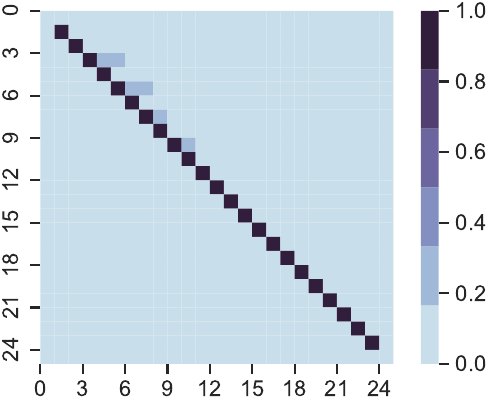}}
\subfigure[]{\includegraphics[width=0.245\linewidth]{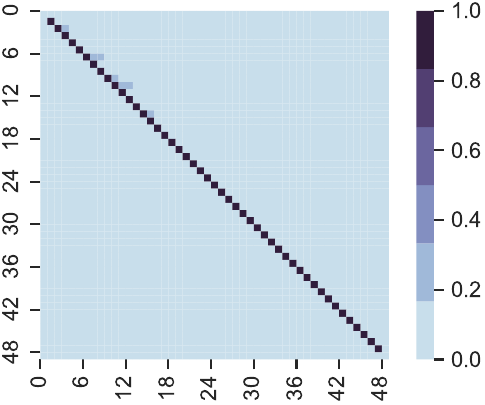}}
\subfigure[]{\includegraphics[width=0.245\linewidth]{fig/diagrams/electricity_freq_imag_1_192.pdf}}
\subfigure[]{\includegraphics[width=0.245\linewidth]{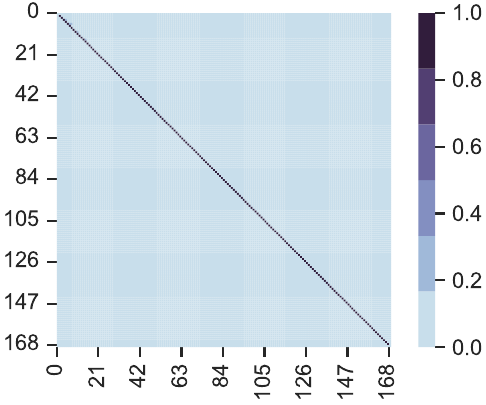}}
\caption{More comprehensive visualizations of label autocorrelation in different domains and label lengths, with columns representing label lengths H=48, 96, 192, 336 from left to right. Panels (a-d) show the label correlation in the time domain, where each element $\rho_{i,j}$ indicates the partial correlation between $Y_i$ and $Y_j$ given $L$. Panels (e-l) show the label correlation in the frequency domain, where each element $\rho_{i,j}$ indicates the partial correlation between $F_i$ and $F_j$ given $L$, shown with the real (e-h) and imaginary part (i-l). }\label{fig:auto_time_app2}
\end{center}
\end{figure*}

In this section, we provide comprehensive results of the identified partial correlation strengths, which quantifies the autocorrelation effect in the time and frequency domain. \autoref{fig:auto_time_app1} presents the results on three different datasets: Traffic, ETTh1, and ECL, with forecast length set to 192. \autoref{fig:auto_time_app2} presents the results for varying forecast lengths: 48, 96, 192, 336, on the ECL dataset.

The results show similar patterns to those in the main text. Specifically, the non-diagonal elements in \autoref{fig:auto_time_app1} (a-c) and \autoref{fig:auto_time_app2} (a-d) often exhibit huge values, which affirms the presence of label autocorrelation in the time domain.
In contrast, the non-diagonal elements in \autoref{fig:auto_time_app1} (d-i) and \autoref{fig:auto_time_app2} (e-l) show negligible values, which suggests that frequency components of $F$ are almost independent given $L$.
These findings collectively verify (1) the existence of label autocorrelation in the time domain; (2) the mitigation of label correlation in the frequency domain.

\clearpage
\section{Theoretical Justification}

\begin{theorem}[Bias of vanilla DF, simplified]
Given an input sequence $L$ and a univariate label sequence $Y=[Y_1,Y_2]$ (the forecast length is set to 2 for simplicity), the learning objective~\eqref{eq:temp} of the DF paradigm is biased against the practical NLL, expressed as:
\begin{equation}
    \mathrm{Bias}=\frac{1}{2\sigma^2}(Y_2-\hat{Y}_2)^2-\frac{1}{2\sigma^2(1-\rho^2)} (Y_2-(\hat{Y}_2+\rho(Y_1-\hat{Y}_1))^2,
\end{equation}
where $\hat{Y}_i$ indicates the prediction at the $i$-th step and $\rho$ denotes the partial correlation between $Y_1$ and $Y_2$ given $L$.
\end{theorem}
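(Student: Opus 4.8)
The plan is to compute the negative log-likelihood (NLL) of the true joint distribution of $(Y_1,Y_2)$ given $L$ under a Gaussian model, compare it term-by-term with the DF objective~\eqref{eq:temp}, and identify the difference as the claimed bias. First I would model the conditional law of the label sequence as jointly Gaussian given $L$: assume $Y_i \mid L \sim \mathcal{N}(\hat{Y}_i,\sigma^2)$ marginally, where $\hat{Y}_i$ is the model's step-$i$ prediction (playing the role of the conditional mean), and let $\rho$ be the partial correlation between $Y_1$ and $Y_2$ given $L$. Under this Gaussian assumption, the DF objective $\sum_{i=1}^2 \frac{1}{2\sigma^2}(Y_i-\hat{Y}_i)^2$ is exactly the NLL one would write down if one \emph{wrongly} assumed $Y_1 \indep Y_2 \mid L$ (dropping the normalization constants that do not depend on the predictions).

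Next I would write the \emph{correct} joint NLL using the chain rule $p(Y_1,Y_2\mid L) = p(Y_1\mid L)\,p(Y_2\mid Y_1, L)$. The first factor contributes $\frac{1}{2\sigma^2}(Y_1-\hat{Y}_1)^2$, matching the $i=1$ term of the DF objective. The second factor is the key: for a bivariate Gaussian, the conditional distribution of $Y_2$ given $Y_1$ (and $L$) is Gaussian with mean $\hat{Y}_2 + \rho(Y_1-\hat{Y}_1)$ (since both have marginal variance $\sigma^2$, the regression coefficient is exactly $\rho$) and variance $\sigma^2(1-\rho^2)$. Hence the second factor contributes $\frac{1}{2\sigma^2(1-\rho^2)}\bigl(Y_2-(\hat{Y}_2+\rho(Y_1-\hat{Y}_1))\bigr)^2$ (again discarding prediction-independent constants). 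Subtracting the true NLL from the DF objective, the $i=1$ terms cancel, leaving exactly
\[
\mathrm{Bias}=\frac{1}{2\sigma^2}(Y_2-\hat{Y}_2)^2-\frac{1}{2\sigma^2(1-\rho^2)}\bigl(Y_2-(\hat{Y}_2+\rho(Y_1-\hat{Y}_1))\bigr)^2,
\]
as claimed. I would also remark that when $\rho=0$ the two expressions coincide and the bias vanishes, consistent with the conditional-independence case.

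The main obstacle is not the algebra — which is the standard bivariate-Gaussian conditioning formula — but rather pinning down and justifying the modeling assumptions so the statement is meaningful: namely, that the squared loss corresponds to a Gaussian likelihood with a common variance $\sigma^2$ across steps, that $\hat{Y}_i$ is legitimately interpreted as the conditional mean $\mathbb{E}[Y_i\mid L]$, and that the relevant dependence is captured by the \emph{partial} correlation given $L$ (so that the fork structure through $L$ does not contaminate the measure of label autocorrelation, as discussed in Appendix~A). Once these are fixed, the general-$\mathrm{T}$ version in Theorem~\ref{thm:bias} follows by the same argument applied to the full chain-rule factorization $p(Y_1,\dots,Y_\mathrm{T}\mid L)=\prod_{i=1}^\mathrm{T} p(Y_i\mid Y_{1:i-1},L)$, where conditioning a jointly Gaussian vector on its predecessors yields mean $\hat{Y}_i+\sum_{j=1}^{i-1}\rho_{ij}(Y_j-\hat{Y}_j)$ and variance $\sigma^2(1-\rho_i^2)$ with $\rho_i^2=\sum_{j=1}^{i-1}\rho_{ij}^2$.
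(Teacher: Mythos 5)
Your proposal is correct and follows essentially the same route as the paper's own proof: assume a bivariate Gaussian for $(Y_1,Y_2)$ given $L$ with common marginal variance $\sigma^2$ and correlation $\rho$, factor the joint NLL via the chain rule so that $p(Y_2\mid Y_1,L)$ has mean $\hat{Y}_2+\rho(Y_1-\hat{Y}_1)$ and variance $\sigma^2(1-\rho^2)$, drop prediction-independent constants, and subtract from the MSE objective so the $i=1$ terms cancel. Your added care in articulating the modeling assumptions (common variance, $\hat{Y}_i$ as conditional mean, partial rather than marginal correlation) and the sketch of the general-$\mathrm{T}$ extension are consistent with what the paper does in its full version of the theorem.
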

\begin{proof}
Aligning with the maximum likelihood analysis, we assume the label sequence obeys a normal distribution with mean $\mu=[\hat{Y}_1,\hat{Y}_2]$ and covariance $ \zeta = [[\sigma^2, \rho\sigma^2], [\rho \sigma^2, \sigma_2^2]]$. The negative log-likelihood (NLL) of $Y$ given the input sequence $L$ can be expressed as
\begin{equation*}
\begin{aligned}
    -\log p(Y|L)=&-\log p(Y_1|L) - \log p(Y_2|L,Y_1) \\
    =&-\log(\frac{1}{\sqrt{2\pi} \sigma}\exp(-\frac{(Y_1-\hat{Y}_1)^2}{2\sigma^2})) \\
    &-\log(\frac{1}{\sqrt{2\pi(1-\rho^2)} \sigma}\exp(-\frac{(Y_2-(\hat{Y}_2+\rho(Y_1-\hat{Y}_1))^2}{2\sigma^2(1-\rho^2)})).
\end{aligned}
\end{equation*}

Removing coefficients unrelated to $g$, the practical NLL that contributes the gradients to update $g$ is
\begin{equation*}
    \mathrm{NLL}:=\frac{1}{2\sigma^2}(Y_1-\hat{Y}_1)^2 + \frac{1}{2\sigma^2(1-\rho^2)} (Y_2-(\hat{Y}_2+\rho(Y_1-\hat{Y}_1))^2.
\end{equation*}

If the independence assumption of different time step holds (i.e., $Y_1$ and $Y_2$ are conditionally independent given $L$), we have $\rho=0$, followed by $p(Y_2|L,Y_1)=p(Y_2|L)$. In this case, the MSE loss in canonical DF mirrors the practical NLL: 
\begin{equation*}
    \mathrm{MSE}=\frac{1}{2\sigma^2}(Y_1-\hat{Y}_1)^2 + \frac{1}{2\sigma^2}(Y_2-\hat{Y}_2)^2,
\end{equation*}
where $\sigma$ is often set to 1 when implementing MSE.
If the independence assumption does not hold, i.e., considering autocorrelation in the label sequence, we have $\rho\neq 0$. In this case, the MSE loss in the time domain is biased to the practical NLL, expressed as: 
\begin{equation*}
    \mathrm{Bias}=\frac{1}{2\sigma^2}(Y_2-\hat{Y}_2)^2-\frac{1}{2\sigma^2(1-\rho^2)} (Y_2-(\hat{Y}_2+\rho(Y_1-\hat{Y}_1))^2.
\end{equation*}

This bias introduced by label autocorrelation makes the MSE loss in the time domain fail to reflect the practical NLL and therefore misleads the update of forecast model $g$ under DF paradigm.
\end{proof}

\begin{theorem}[Bias of vanillia DF]
\label{thm:extended_bias}
Given an input sequence $L$ and a univariate label sequence $Y$, the learning objective~\eqref{eq:temp} of the DF paradigm is biased against the practical NLL, expressed as:
\begin{equation}
    \mathrm{Bias} = \sum_{i=1}^\mathrm{T} \frac{1}{2\sigma^2} (Y_i - \hat{Y}_i)^2 - \sum_{i=1}^\mathrm{T} \frac{1}{2\sigma^2 (1 - \rho_i^2)} \left(Y_i - \left(\hat{Y}_i + \sum_{j=1}^{i-1} \rho_{ij} (Y_j - \hat{Y}_j)\right)\right)^2,
\end{equation}
where $\hat{Y}_i$ indicates the prediction at the $i$-th step, $\rho_{ij}$ denotes the partial correlation between $Y_i$ and $Y_j$ given $L$, $\rho_i^2 = \sum_{j=1}^{i-1} \rho_{ij}^2.$
\end{theorem}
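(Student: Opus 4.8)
The plan is to generalize the two-step argument of the simplified theorem to an arbitrary forecast length $\mathrm{T}$ by exploiting the chain-rule factorization of the joint density. First I would write the negative log-likelihood of the label sequence $Y$ given $L$ as a telescoping sum $-\log p(Y\mid L) = -\sum_{i=1}^{\mathrm{T}} \log p(Y_i \mid L, Y_1,\dots,Y_{i-1})$, which holds for any joint distribution. The key modeling assumption, carried over from the simplified case, is that $Y\mid L$ is jointly Gaussian with mean $\hat{Y}=[\hat{Y}_1,\dots,\hat{Y}_{\mathrm{T}}]$ and a covariance structure in which each conditional $p(Y_i \mid L, Y_1,\dots,Y_{i-1})$ is again Gaussian. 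For a Gaussian, the conditional mean of $Y_i$ given its predecessors is the linear predictor $\hat{Y}_i + \sum_{j=1}^{i-1}\rho_{ij}(Y_j-\hat{Y}_j)$, where $\rho_{ij}$ is exactly the partial-regression coefficient of $Y_i$ on $Y_j$ controlling for $L$ and the remaining predecessors, and the conditional variance shrinks from $\sigma^2$ to $\sigma^2(1-\rho_i^2)$ with $\rho_i^2=\sum_{j=1}^{i-1}\rho_{ij}^2$.

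Next I would substitute these Gaussian conditionals into the telescoped NLL. Each term contributes $\tfrac{1}{2}\log(2\pi\sigma^2(1-\rho_i^2)) + \tfrac{1}{2\sigma^2(1-\rho_i^2)}\bigl(Y_i-(\hat{Y}_i+\sum_{j<i}\rho_{ij}(Y_j-\hat{Y}_j))\bigr)^2$; discarding the additive constants that do not depend on the forecast model $g$ (exactly as in the simplified proof), the practical NLL driving the gradient is $\mathrm{NLL} = \sum_{i=1}^{\mathrm{T}} \tfrac{1}{2\sigma^2(1-\rho_i^2)}\bigl(Y_i-(\hat{Y}_i+\sum_{j=1}^{i-1}\rho_{ij}(Y_j-\hat{Y}_j))\bigr)^2$. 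On the other hand, the DF objective in \eqref{eq:temp}, after the usual identification $\sigma=1$ and up to the factor $\tfrac{1}{2\sigma^2}$, is $\sum_{i=1}^{\mathrm{T}} \tfrac{1}{2\sigma^2}(Y_i-\hat{Y}_i)^2$. The bias is then just the difference of these two expressions, which is precisely the claimed formula. When all $\rho_{ij}=0$ the two sums coincide termwise and the bias vanishes, recovering the consistency remark.

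The main obstacle is making the partial-correlation parametrization of the Gaussian conditionals precise and self-consistent: one must justify that the decomposition $\mathrm{Var}(Y_i\mid L,Y_{1:i-1}) = \sigma^2(1-\sum_{j<i}\rho_{ij}^2)$ and $\mathbb{E}[Y_i\mid L,Y_{1:i-1}] = \hat{Y}_i + \sum_{j<i}\rho_{ij}(Y_j-\hat{Y}_j)$ is a valid realization of some positive-definite covariance matrix, and that the $\rho_{ij}$ appearing here match the "partial correlation between $Y_i$ and $Y_j$ given $L$" in the statement. The cleanest route is to posit the Cholesky/innovations representation directly: assume the standardized innovations $\varepsilon_i = \bigl(Y_i - \hat{Y}_i - \sum_{j<i}\rho_{ij}(Y_j-\hat{Y}_j)\bigr)$ are independent Gaussians with variances $\sigma^2(1-\rho_i^2)$, which is an \emph{assumption} playing the same role the bivariate Gaussian assumption played in the simplified proof, rather than a theorem to be derived; under this assumption the telescoping computation above is routine. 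I would flag that, as in the simplified case, the result is a statement about the modeling mismatch under a Gaussian innovations model, not a distribution-free identity, and note that in the genuinely $\mathrm{T}=2$ case $\rho_{21}=\rho$, $\rho_2^2=\rho^2$, so the formula collapses to the simplified theorem as a sanity check.
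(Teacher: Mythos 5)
Your proposal follows essentially the same route as the paper: factor the NLL via the chain rule into Gaussian conditionals with mean $\hat{Y}_i+\sum_{j<i}\rho_{ij}(Y_j-\hat{Y}_j)$ and variance $\sigma^2(1-\rho_i^2)$, drop the model-independent log-determinant terms, and subtract from the MSE objective. If anything you are more careful than the paper, which simply asserts these conditional forms from a covariance matrix with entries $\rho_{ij}\sigma^2$ (a step that does not hold exactly for a general multivariate Gaussian), whereas you correctly identify that the innovations/Cholesky parametrization should be taken as the primitive assumption.
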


\begin{proof}

We assume that the label sequence $Y$ conditioned on the input sequence $L$ follows a multivariate normal distribution with mean vector  $\mu = [\hat{Y}_1, \hat{Y}_2, \ldots, \hat{Y}_\mathrm{T}]$  and covariance matrix  $\Sigma$ , where the diagonal entries $\Sigma_{ii} = \sigma^2$  and the off-diagonal entries are  $\Sigma_{ij} = \rho_{ij} \sigma^2$  for  $i \neq j$ . Here,  $\rho_{ij}$  denotes the partial correlation between  $Y_i$  and  $Y_j$  given  the input sequence $L$. On the basis, the NLL of the label sequence  $Y$  given  $L$  can be decomposed into a sum of conditional NLLs due to the properties of the multivariate normal distribution:
\begin{equation*}
    -\log p(Y \mid L) = -\sum_{i=1}^\mathrm{T} \log p(Y_i \mid L, Y_1, Y_2, \ldots, Y_{i-1}),
\end{equation*}
where each conditional probability  $p(Y_i \mid L, Y_1, \ldots, Y_{i-1})$  is Gaussian with mean $\hat{Y}_i + \sum_{j=1}^{i-1} \rho_{ij} (Y_j - \hat{Y}_j)$ and variance $\sigma^2 (1 - \rho_i^2)$, $ \rho_i^2 = \sum_{j=1}^{i-1} \rho_{ij}^2$.
Thus, the NLL can be expressed as
\begin{equation*}
    -\log p(Y \mid L) = \sum_{i=1}^\mathrm{T} \left( \frac{1}{2} \log(2\pi \sigma^2 (1 - \rho_i^2)) + \frac{1}{2\sigma^2 (1 - \rho_i^2)} \left(Y_i - \left(\hat{Y}_i + \sum_{j=1}^{i-1} \rho_{ij} (Y_j - \hat{Y}_j)\right)\right)^2 \right).
\end{equation*}

For the purpose of gradient-based optimization, terms independent of the model predictions  $\hat{Y}_i$  can be omitted. Therefore, the practical NLL contributing to the gradients is given by
\begin{equation*}
    \mathrm{NLL} = \sum_{i=1}^\mathrm{T} \frac{1}{2\sigma^2 (1 - \rho_i^2)} \left(Y_i - \left(\hat{Y}_i + \sum_{j=1}^{i-1} \rho_{ij} (Y_j - \hat{Y}_j)\right)\right)^2.
\end{equation*}

On the other hand, the DF paradigm typically employs the MSE loss, expressed as 
\begin{equation*}
    \mathrm{MSE} = \sum_{i=1}^\mathrm{T} \frac{1}{2\sigma^2} (Y_i - \hat{Y}_i)^2.
\end{equation*}
which deviates from the practical NLL. The bias is expressed as:
\begin{equation*}
    \mathrm{Bias} = \mathrm{MSE} - \mathrm{NLL}
= \sum_{i=1}^\mathrm{T} \frac{1}{2\sigma^2} (Y_i - \hat{Y}_i)^2 - \sum_{i=1}^\mathrm{T} \frac{1}{2\sigma^2 (1 - \rho_i^2)} \left(Y_i - \hat{Y}_i + \sum_{j=1}^{i-1} \rho_{ij} (Y_j - \hat{Y}_j)\right)^2.
\end{equation*}

When there exists label autocorrelation, i.e.,  $\rho_{ij} \neq 0$, the bias above exists. \textit{In the special case where the label autocorrelation is diminished, i.e., $\rho_{ij} \rightarrow 0$, the bias approaches zero almost surely.}

\end{proof}

\begin{corollary}[Bias of vanilla DF, multivariate]
Given an input sequence $L$ and a multivariate label sequence $Y\in\mathbb{R}^\mathrm{T\times D}$, suppose $Z\in\mathbb{R}^\mathrm{T\times D}$ is the flattened version of $Y$ obtained by concatenating the rows, the learning objective~\eqref{eq:temp} of the DF paradigm is biased against the practical NLL:
\begin{equation}
    \mathrm{Bias} = \sum_{i=1}^\mathrm{T\times D} \frac{1}{2\sigma^2} (Z_i - \hat{Z}_i)^2 - \sum_{i=1}^\mathrm{T\times D} \frac{1}{2\sigma^2 (1 - \rho_i^2)} \left(Z_i - \left(\hat{Z}_i + \sum_{j=1}^{i-1} \rho_{ij} (Z_j - \hat{Z}_j)\right)\right)^2,
\end{equation}
where $\hat{Z}_i$ indicates the prediction of $Z_i$, $\rho_{ij}$ denotes the partial correlation between $Z_i$ and $Z_j$ given $L$, $\rho_i^2 = \sum_{j=1}^{i-1} \rho_{ij}^2.$
\end{corollary}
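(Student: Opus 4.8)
The plan is to reduce the multivariate statement directly to the univariate case already established in Theorem~\ref{thm:extended_bias}. The key observation is that the DF paradigm applied to a matrix-valued label sequence $Y\in\mathbb{R}^{\mathrm{T}\times\mathrm{D}}$ is, operationally, identical to the DF paradigm applied to the flattened vector $Z\in\mathbb{R}^{\mathrm{T}\mathrm{D}}$: the temporal loss in~\eqref{eq:temp} sums squared errors over all entries of $Y$, which is exactly $\sum_{i=1}^{\mathrm{T}\mathrm{D}}(Z_i-\hat{Z}_i)^2$ after concatenating rows. So the MSE side of the bias expression is immediate.

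For the NLL side, I would posit that $Z$ conditioned on $L$ follows a multivariate normal distribution with mean $\hat{Z}=[\hat{Z}_1,\ldots,\hat{Z}_{\mathrm{T}\mathrm{D}}]$ and a covariance matrix $\Sigma$ whose diagonal entries equal $\sigma^2$ and whose off-diagonal entries are $\Sigma_{ij}=\rho_{ij}\sigma^2$, where $\rho_{ij}$ is the partial correlation between $Z_i$ and $Z_j$ given $L$. This captures \emph{both} temporal autocorrelation (within a column of $Y$) and cross-variable correlation (across columns), since flattening erases the distinction between the two axes and leaves a single index running over all $\mathrm{T}\mathrm{D}$ scalar labels. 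From here the argument is verbatim the one in the proof of Theorem~\ref{thm:extended_bias}: decompose $-\log p(Z\mid L)=-\sum_{i=1}^{\mathrm{T}\mathrm{D}}\log p(Z_i\mid L,Z_1,\ldots,Z_{i-1})$ using the chain rule for joint densities; invoke the standard Gaussian conditioning formula, which gives each conditional as normal with mean $\hat{Z}_i+\sum_{j=1}^{i-1}\rho_{ij}(Z_j-\hat{Z}_j)$ and variance $\sigma^2(1-\rho_i^2)$ with $\rho_i^2=\sum_{j=1}^{i-1}\rho_{ij}^2$; drop the log-normalization terms that do not depend on the predictions; and subtract to obtain the stated bias. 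The final sentence about the bias vanishing as $\rho_{ij}\to 0$ carries over unchanged.

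The only genuine subtlety—and the step I expect to need the most care—is justifying that the successive conditional means and variances take the clean form used above. In a general Gaussian the conditional mean of $Z_i$ given $Z_1,\ldots,Z_{i-1}$ is a full regression $\hat{Z}_i+\sum_{j<i}\beta_{ij}(Z_j-\hat{Z}_j)$ whose coefficients $\beta_{ij}$ are entries of $\Sigma_{i,<i}\Sigma_{<i,<i}^{-1}$, which need not coincide with the pairwise partial correlations $\rho_{ij}$ unless one adopts a specific structural assumption (e.g.\ an autoregressive/Markov-type covariance or, as the earlier proofs implicitly do, reads $\rho_{ij}$ as the relevant regression coefficient rather than the marginal partial correlation). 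I would handle this exactly as the preceding theorems do: state the Gaussian modeling assumption on $Z\mid L$ explicitly with the parametrization that makes $\rho_{ij}$ the conditional-regression coefficient, so that the decomposition goes through term by term, and then note that this is the natural $\mathrm{T}\mathrm{D}$-dimensional analogue of the assumption used for the univariate result. Everything else is bookkeeping: relabeling indices from $1,\ldots,\mathrm{T}$ to $1,\ldots,\mathrm{T}\mathrm{D}$ and observing that the row-concatenation is a bijection between the entries of $Y$ and the components of $Z$, so no information and no cross-term is lost.
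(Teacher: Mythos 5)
Your proof is correct and takes essentially the same route as the paper, which disposes of the corollary in one line by viewing the flattened sequence $Z$ as an augmented univariate label sequence and invoking Theorem~\ref{thm:extended_bias} verbatim. Your added caveat about reading $\rho_{ij}$ as the conditional-regression coefficient from $\Sigma_{i,<i}\Sigma_{<i,<i}^{-1}$ rather than the marginal pairwise partial correlation is a legitimate point that the paper's univariate proof also glosses over, and resolving it the way you propose (by parametrizing the Gaussian assumption so the chain-rule decomposition holds term by term) is consistent with how the paper implicitly proceeds.
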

\begin{proof}
    This corollary immediately follows from Theorem~\ref{thm:extended_bias}, by viewing the multivariate label sequence $Z$ as an augmented univariate sequence.
\end{proof}

\begin{theorem}[Decorrelation between frequency components]
    Suppose  $Y$  is a zero-mean, discrete-time, wide-sense stationary random process of length  $\mathrm{T}$.  As $ \mathrm{T} \to \infty$ , the normalized DFT coefficients become asymptotically uncorrelated at different frequencies:
    \begin{equation*}
        \lim_{\mathrm{T} \to \infty} \mathbb{E}[ F_k F^*_{k^\prime} ] = \begin{cases} S_Y(f_k), & \text{if } k = k^\prime, \\ 0, & \text{if } k \neq k^\prime, \end{cases}
    \end{equation*}
    where  $f_k = \frac{k}{\mathrm{T}}$ and $S_Y(f)$  is the power spectral density of  $Y$ .
\end{theorem}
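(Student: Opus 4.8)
The plan is to prove the classical asymptotic decorrelation of DFT coefficients for a wide-sense stationary (WSS) process, which is essentially the statement that the periodogram's Fourier modes become uncorrelated in the limit. First I would write out $\mathbb{E}[F_k F_{k'}^*]$ explicitly using the definition from Definition~\ref{def:dft}: expanding the double sum gives
\begin{equation*}
\mathbb{E}[F_k F_{k'}^*] = \frac{1}{\mathrm{T}} \sum_{t=0}^{\mathrm{T}-1} \sum_{s=0}^{\mathrm{T}-1} \mathbb{E}[Y_t Y_s] \exp\!\left(-j\tfrac{2\pi}{\mathrm{T}}(kt - k's)\right).
\end{equation*}
By wide-sense stationarity, $\mathbb{E}[Y_t Y_s] = r(t-s)$ depends only on the lag $\tau = t-s$. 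The next step is the standard change of variables $\tau = t-s$, which converts the double sum into a single sum over $\tau$ from $-(\mathrm{T}-1)$ to $\mathrm{T}-1$, weighted by the triangular (Fejér-type) kernel counting the number of pairs $(t,s)$ with a given difference, together with a residual phase factor $\exp(-j\tfrac{2\pi}{\mathrm{T}}(k-k')s)$ summed over the overlap.

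The core of the argument then splits into the two cases. For $k = k'$, the phase factor involving $(k-k')$ vanishes, and the expression reduces to $\sum_\tau (1 - |\tau|/\mathrm{T}) r(\tau) \exp(-j 2\pi f_k \tau)$; assuming the autocovariance is absolutely summable ($\sum_\tau |r(\tau)| < \infty$, the standard regularity hypothesis ensuring the power spectral density exists and is continuous), dominated convergence lets the triangular weights tend to $1$, yielding $\sum_{\tau=-\infty}^{\infty} r(\tau) e^{-j2\pi f_k \tau} = S_Y(f_k)$ by definition of the PSD. For $k \neq k'$, I would bound the expression: after the change of variables one gets a sum over $\tau$ of $r(\tau)$ times a Dirichlet-kernel-like factor $\sum_s \exp(-j\tfrac{2\pi}{\mathrm{T}}(k-k')s)$ restricted to the overlapping index range; since $k-k'$ is a nonzero integer mod $\mathrm{T}$, the full geometric sum over a complete period is exactly zero, and the boundary correction from the incomplete range over the last $|\tau|$ terms is $O(|\tau|)$, so the whole quantity is bounded by $\tfrac{1}{\mathrm{T}} \sum_\tau |\tau|\,|r(\tau)|$, which tends to $0$ provided $\sum_\tau |\tau|\,|r(\tau)| < \infty$ (or, with a slightly more careful Cesàro-type argument, merely $\sum_\tau |r(\tau)| < \infty$ suffices).

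The main obstacle is handling the off-diagonal ($k \neq k'$) case rigorously: one must carefully track the incomplete geometric sums arising from the finite overlap of the index ranges and show that these boundary terms are genuinely negligible after normalization by $1/\mathrm{T}$. A clean way to organize this is to write $\mathbb{E}[F_k F_{k'}^*]$ as a Cesàro (Fejér) average of the truncated spectral sums and invoke the standard fact that Fejér means of an absolutely summable series converge to the series; the $k\neq k'$ terms are then averages of the exponential $e^{-j2\pi(f_k - f_{k'})\tau}$ against $r(\tau)$ with vanishing Cesàro weight at the relevant frequency separation. I would state the absolute-summability of $r(\tau)$ as the operative regularity assumption (implicit in ``the power spectral density of $Y$'' existing), cite this as a textbook result on periodogram asymptotics, and present the computation at the level of the change of variables plus the kernel estimate, leaving the elementary bounds to the reader.
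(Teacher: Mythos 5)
Your proposal is correct and follows essentially the same route as the paper's proof: expand $\mathbb{E}[F_k F_{k'}^*]$ as a double sum, invoke wide-sense stationarity to reduce to a single sum over lags weighted by a triangular kernel, and then treat the diagonal case (Fej\'er means converging to the PSD) and off-diagonal case (cancellation of the geometric sum) separately. You are in fact more careful than the paper, which asserts the off-diagonal cancellation without tracking the incomplete geometric sums over the overlap range and never states the absolute-summability hypothesis on the autocovariance that both arguments implicitly require.
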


\begin{proof}
    Recalling that the normalized DFT coefficients  $F_k$  are defined as $F_k = 1/\sqrt{\mathrm{T}}\sum_{t=0}^\mathrm{T-1} Y_t e^{-j 2\pi k t / \mathrm{T}}, \quad k = 0, 1, \dots, \mathrm{T-1}$.
    On this basis, the expected value of the product  $F_k F^*_{k^\prime}$ can be expressed as:
    \begin{equation}
    \begin{aligned}
        \mathbb{E}[ F_k F^*_{k^\prime} ] 
        &= \mathbb{E}\left[ \sum_{t=0}^\mathrm{T-1} Y_t e^{-j 2\pi k t / \mathrm{T}} \cdot \sum_{t^\prime=0}^\mathrm{T-1} Y_{t^\prime} e^{j 2\pi k^\prime t^\prime / \mathrm{T}} \right]/\T\\
        &= \sum_{t=0}^\mathrm{T-1} \sum_{t^\prime=0}^\mathrm{T-1} R_Y[t-t^\prime] e^{-j 2\pi k t / \mathrm{T}} e^{j 2\pi k^\prime t^\prime / \mathrm{T}}/\T,
    \end{aligned}
    \end{equation}
    where we interchanged the order of summation and expectation, and utilize the autocorrelation function $ R_Y[\tau] = \mathbb{E}[Y_t Y_{t'}] $. Denote $ \tau = t - t' $, which allows us to rewrite $ t' = t - \tau $. This substitution leads us to: 
    \begin{equation*}
    \begin{aligned}
        \mathbb{E}[ F_k F^*_{k^\prime} ] 
        &= \sum_{t=0}^\mathrm{T-1} \sum_{\tau = t}^{t-\mathrm{T+1}} R_Y[\tau] e^{-j 2\pi (k t / \mathrm{T} - k^\prime (t - \tau) / \mathrm{T})}/\T\\
        &= \sum_{\tau = -(\mathrm{T}-1)}^{\mathrm{T}-1} R_Y[\tau] e^{-j 2\pi k^\prime \tau / \mathrm{T}} \left( \sum_{t = \max(0, \tau)}^{\min(\mathrm{T}-1, \mathrm{T}-1 + \tau)} e^{j 2\pi (k^\prime - k) t / \mathrm{T}}/\T \right).
    \end{aligned}
    \end{equation*}
    which immediately follows switching the order of summation. The expression within the parentheses is a summation of complex exponentials.   When  $k \neq k^\prime$ , the inner term approaches zero due to the mutual cancellation of the oscillatory exponentials:
    \begin{equation*}
        \lim_{\mathrm{T} \to \infty} \mathbb{E}[ F_k F^*_{k^\prime} ] = 0.
    \end{equation*}
    When  $k = k^\prime$, the exponential term becomes unity, and the inner sum simplifies to: 
    \begin{equation*}
         \lim_{\mathrm{T} \to \infty}\sum_{t = \max(0, \tau)}^{\min(\mathrm{T-1}, \mathrm{T-1} + \tau)} 1/\T =  \lim_{\mathrm{T} \to \infty} 1 - |\tau|/\T = 1.
    \end{equation*}
    which immediately follows by $\mathbb{E}[ F_k F^*_{k^\prime} ] = S_Y(f_k)$, where $S_Y$ is the power spectral density of $Y$ that can be calculated as the DFT of $R_Y$.
    The proof is therefore completed.
\end{proof}

\section{Generalized Transformation onto Different Bases}

Transforming time series data onto predefined spaces is a fundamental aspect of signal processing and data analysis, with various strategies available  based on the selected bases, such as Fourier and Chebyshev bases. The selection of bases is determined by the specific characteristics and requirements of the analysis. Below, we present formal definitions of common transform techniques and their associated bases, where we formulate signals as continuous functions for the ease of demonstration.

\paragraph{Fourier transform.} It employs exponential polynominals as bases which prove to be mutually orthogonal. These polynomials are effective for analyzing periodic signals or signals with a strong frequency component. Let $k$ be the frequency, the associated basis function and projection onto it can be formulated as follows:
\begin{equation}
    \begin{aligned}
        f_k(t)&=\exp(-j(2\pi/\mathrm{H})kt), \\
        \quad F_k&= \int_{-\infty}^{\infty} x(t) f_k(t) dt
    \end{aligned}
\end{equation}

\paragraph{Legendre transform.} It uses the Legendre polynomials as bases which prove to be mutually orthogonal on the interval $[-1, 1]$. These polynomials are particularly useful for representing functions defined on a finite interval, which makes them suitable for certain types of data smoothing and approximation tasks. The $k$-th polynomial and the associated projection can be formulated as:
\begin{equation}
    \begin{aligned}
        f_k(t)&=\frac{1}{2^k k!} \frac{d^k}{dt^k} [(t^2 - 1)^k], \\
        \quad F_k&= \int_{-1}^{1} x(t) f_k(t) dt
    \end{aligned}
\end{equation}

\paragraph{Chebyshev transform.} It uses the Chebyshev polynomials as bases. These bases are not originally orthogonal but become mutually orthogonal on the interval $[-1, 1]$ with respect to the weight $1/\sqrt{1-t^2}$. These polynomials are particularly useful for approximating functions with rapid variations. The $k$-th Chebyshev polynomial and the associated projection can be formulated as follows:
\begin{equation}
    \begin{aligned}
    f_k(t) &= \cos(k \arccos(t)), \\
    \quad F_k&= \int_{-1}^{1}  \frac{x(t) f_k(t)}{\sqrt{1-t^2}} dt
    \end{aligned}
\end{equation}

\paragraph{Laguerre  transform.} It uses the Laguerre polynomials as bases. These bases are NOT originally orthogonal but become mutually orthogonal on the interval $[0, \infty]$ with respect to the exponential weight $\exp(t)$. These polynomials are particularly useful in quantum mechanics and other fields involving exponential decay. The $k$-th Laguerre polynomial and the associated projection can be formulated as follows:
\begin{equation}
    \begin{aligned}
    f_k(t) &= \exp(t)\frac{d^k}{dt^k}(\exp(-t)t^k), \\
    \quad F_k&= \int_{0}^{\infty}  \frac{x(t) f_k(t)}{\exp(t)} dt
    \end{aligned}
\end{equation}

\begin{figure*}
\begin{center}
\subfigure[DF and FreDF (Fourier bases).]{\includegraphics[width=0.24\linewidth]{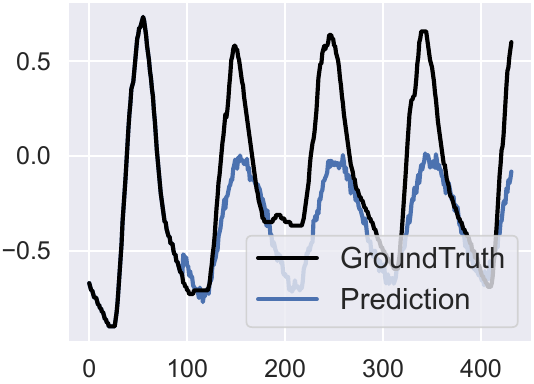}
\includegraphics[width=0.24\linewidth]{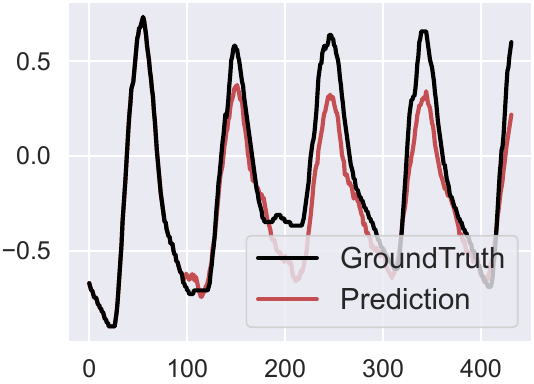}}\hfill
\subfigure[DF and FreDF (Legendre bases).]{\includegraphics[width=0.24\linewidth]{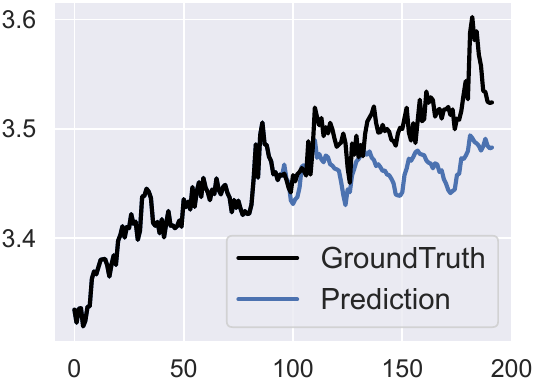}
\includegraphics[width=0.24\linewidth]{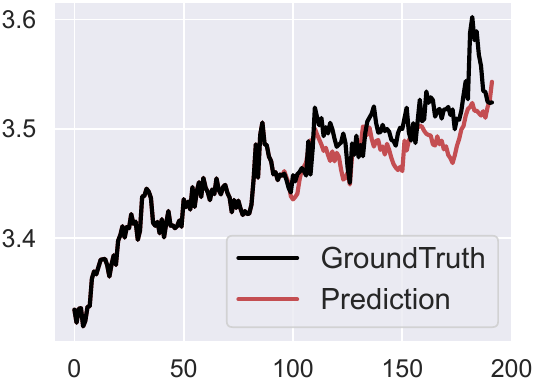}}
\caption{The label sequences (black lines) and forecast sequences generated by DF (blue lines) and FreDF (red lines). The forecast model used is iTransformer, with experiments conducted on selected snapshots characterized by periodicity (a) and trend (b).}
\label{fig:trendperiod}
\end{center}
\end{figure*}

These polynomial sets are effective for capturing specific data patterns, such as trends and periodicity, which can be difficult to learn in the time domain. By incorporating these polynomial sets, FreDF enhances its flexibility to handle time series data with varying characteristics.
A case study is presented in \autoref{fig:trendperiod}. Specifically, the forecast sequences generated by the canonical DF struggle to capture increasing trends or high-frequency periods; whereas those produced by FreDF effectively capture the dominant characteristics, thereby significantly improving forecast quality. 

\textit{In summary, FreDF does not rely solely on Fourier bases but can be adapted to various bases, each with unique properties suitable for different applications. The selection of bases for FreDF depends on the characteristics of the data and the specific objectives of the analysis.}

\section{Reproduction Details}

\subsection{Dataset descriptions}\label{sec:dataset}
\begin{table}
  \caption{Dataset description. }\label{tab:dataset}
  \centering
  \renewcommand{\multirowsetup}{\centering}
  \setlength{\tabcolsep}{10pt}
  \small
    \begin{threeparttable}
  \begin{tabular}{llllll}
    \toprule
    Dataset & D & Forecast length & Train / validation / test & Frequency& Domain \\
    \toprule
     ETTh1 & 7 & 96, 192, 336, 720 & 8545/2881/2881 & Hourly & Health\\
     \midrule
     ETTh2 & 7 & 96, 192, 336, 720 & 8545/2881/2881 & Hourly & Health\\
     \midrule
     ETTm1 & 7 & 96, 192, 336, 720 & 34465/11521/11521 & 15min & Health\\
     \midrule
     ETTm2 & 7 & 96, 192, 336, 720 & 34465/11521/11521 & 15min & Health\\
    \midrule
    Weather & 21 & 96, 192, 336, 720 & 36792/5271/10540 & 10min & Weather\\
    \midrule
    ECL & 321 & 96, 192, 336, 720 & 18317/2633/5261 & Hourly & Electricity \\
    \midrule
    Traffic & 862 & 96, 192, 336, 720 & 12185/1757/3509 & Hourly & Transportation \\
    \midrule
    PEMS03 & 358 & 12, 24, 36, 48 & 15617/5135/5135 & 5min & Transportation\\
    \midrule
    PEMS08 & 170 & 12, 24, 36, 48 & 10690/3548/265 & 5min & Transportation\\
    \bottomrule
    \end{tabular}
    \begin{tablenotes}
    \item  \scriptsize \textit{Note}:  \textit{D} denotes the number of variates. \emph{Frequency} denotes the sampling interval of time points. \emph{Train, Validation, Test} denotes the number of samples employed in each split. The taxonomy aligns with~\cite{Timesnet}.
    \end{tablenotes}
    \end{threeparttable}
\end{table}

The datasets utilized in this study cover a wide range of time series data, detailed in \autoref{tab:dataset}, each exhibiting unique characteristics and temporal resolutions:
\begin{itemize}[leftmargin=*]
    \item ETT~\citep{Informer} comprises data on 7 factors related to electricity transformers, collected from July 2016 to July 2018. This dataset is divided into four subsets: ETTh1 and ETTh2, with hourly recordings, and ETTm1 and ETTm2, documented every 15 minutes.
    \item Weather~\citep{Autoformer} includes 21 meteorological variables gathered every 10 minutes throughout 2020 from the Weather Station of the Max Planck Biogeochemistry Institute.
    \item ECL (Electricity Consumption Load)~\citep{Autoformer} presents hourly electricity consumption data for 321 clients.
    \item Traffic~\citep{Autoformer} features hourly road occupancy rates from 862 sensors in the San Francisco Bay area freeways, spanning from January 2015 to December 2016.
    \item PEMS~\citep{SCINet} contains the public traffic network data in California collected by 5-minute windows. Two public subsets (PEMS03, PEMS08) are adopted in this work.
\end{itemize}

The datasets are chronologically divided into training, validation, and test sets following the protocols outlined in~\citep{qiu2024tfb,itransformer}. The dropping-last trick is disabled during the test phase. The length of the input sequence is standardized at 96 across the ETT, Weather, ECL, and Traffic datasets, with varying label sequence lengths of 96, 192, 336, and 720.

\subsection{Implementation details}\label{sec:details_app}
The baseline models in this study are reproduced using training scripts obtained from the iTransformer repository~\citep{itransformer} after reproducibility verification. Models are trained using the Adam optimizer~\citep{Adam}, with learning rates selected from the set ${10^{-3}, 5\times 10^{-4}, 10^{-4}}$ to minimize the MSE loss. The training is limited to a maximum of 10 epochs, incorporating an early stopping mechanism activated upon a lack of improvement in validation performance over 3 epochs.

In experiments integrating FreDF to enhance an existing forecast model, we adhere to the associated hyperparameter settings from the public benchmark~\citep{itransformer}, tuning only $\alpha$ within $[0,1]$ and learning rate conservatively. Finetuning the learning rate is essential to handle the different magnitude of temporal and frequency losses. Fine-tuning is conducted to minimize the MSE averaged across all forecast lengths on the validation dataset.

\section{More Experimental Results}

\subsection{Overall performance}\label{sec:overall_app}
\begin{table}
  \caption{The comprehensive results on the long-term forecasting task. }\label{tab:longterm_app}
  \renewcommand{\arraystretch}{0.85} \setlength{\tabcolsep}{2.4pt} \scriptsize
  \centering
  \renewcommand{\multirowsetup}{\centering}
  \begin{threeparttable}
  \begin{tabular}{c|c|cc|cc|cc|cc|cc|cc|cc|cc|cc|cc|cc}
    \toprule
    \multicolumn{2}{l}{\multirow{2}{*}{\rotatebox{0}{\scaleb{Models}}}} & 
    \multicolumn{2}{c}{\rotatebox{0}{\scaleb{\textbf{FreDF}}}} &
    \multicolumn{2}{c}{\rotatebox{0}{\scaleb{iTransformer}}} &
    \multicolumn{2}{c}{\rotatebox{0}{\scaleb{FreTS}}} &
    \multicolumn{2}{c}{\rotatebox{0}{\scaleb{TimesNet}}} &
    \multicolumn{2}{c}{\rotatebox{0}{\scaleb{MICN}}} &
    \multicolumn{2}{c}{\rotatebox{0}{\scaleb{TiDE}}} &
    \multicolumn{2}{c}{\rotatebox{0}{\scaleb{DLinear}}} &
    \multicolumn{2}{c}{\rotatebox{0}{\scaleb{FEDformer}}} &
    \multicolumn{2}{c}{\rotatebox{0}{\scaleb{Autoformer}}} &
    \multicolumn{2}{c}{\rotatebox{0}{\scaleb{Transformer}}} &
    \multicolumn{2}{c}{\rotatebox{0}{\scaleb{TCN}}} \\
    \multicolumn{2}{c}{} &
    \multicolumn{2}{c}{\scaleb{\textbf{(Ours)}}} & 
    \multicolumn{2}{c}{\scaleb{(2024)}} & 
    \multicolumn{2}{c}{\scaleb{(2023)}} & 
    \multicolumn{2}{c}{\scaleb{(2023)}} &
    \multicolumn{2}{c}{\scaleb{(2023)}} & 
    \multicolumn{2}{c}{\scaleb{(2023)}} & 
    \multicolumn{2}{c}{\scaleb{(2023)}} & 
    \multicolumn{2}{c}{\scaleb{(2022)}} &
    \multicolumn{2}{c}{\scaleb{(2021)}} &
    \multicolumn{2}{c}{\scaleb{(2017)}} &
    \multicolumn{2}{c}{\scaleb{(2017)}} \\
    \cmidrule(lr){3-4} \cmidrule(lr){5-6}\cmidrule(lr){7-8} \cmidrule(lr){9-10}\cmidrule(lr){11-12} \cmidrule(lr){13-14} \cmidrule(lr){15-16} \cmidrule(lr){17-18} \cmidrule(lr){19-20} \cmidrule(lr){21-22} \cmidrule(lr){23-24}
    \multicolumn{2}{l}{\rotatebox{0}{\scaleb{Metrics}}}  & \scalea{MSE} & \scalea{MAE}  & \scalea{MSE} & \scalea{MAE}  & \scalea{MSE} & \scalea{MAE}  & \scalea{MSE} & \scalea{MAE}  & \scalea{MSE} & \scalea{MAE}  & \scalea{MSE} & \scalea{MAE} & \scalea{MSE} & \scalea{MAE} & \scalea{MSE} & \scalea{MAE} & \scalea{MSE} & \scalea{MAE} & \scalea{MSE} & \scalea{MAE} & \scalea{MSE} & \scalea{MAE} \\
    \toprule

    \multirow{5}{*}{{\rotatebox{90}{\scalebox{0.95}{ETTm1}}}}
    & \scalea{96} & \scalea{\subbst{0.324}} & \scalea{\bst{0.362}} & \scalea{0.346} & \scalea{0.379} & \scalea{0.339} & \scalea{0.374} & \scalea{0.338} & \scalea{0.379} & \scalea{\bst{0.318}} & \scalea{\subbst{0.366}} & \scalea{0.364} & \scalea{0.387} & \scalea{0.345} & \scalea{0.372} & \scalea{0.389} & \scalea{0.427} & \scalea{0.468} & \scalea{0.463} & \scalea{0.591} & \scalea{0.549} & \scalea{0.887} & \scalea{0.613} \\
    & \scalea{192} & \scalea{\subbst{0.373}} & \scalea{\bst{0.385}} & \scalea{0.392} & \scalea{0.400} & \scalea{0.382} & \scalea{0.397} & \scalea{0.389} & \scalea{0.400} & \scalea{\bst{0.364}} & \scalea{0.396} & \scalea{0.398} & \scalea{0.404} & \scalea{0.381} & \scalea{\subbst{0.390}} & \scalea{0.402} & \scalea{0.431} & \scalea{0.573} & \scalea{0.509} & \scalea{0.704} & \scalea{0.629} & \scalea{0.877} & \scalea{0.626} \\
    & \scalea{336} & \scalea{\subbst{0.402}} & \scalea{\bst{0.404}} & \scalea{0.427} & \scalea{0.422} & \scalea{0.421} & \scalea{0.426} & \scalea{0.429} & \scalea{0.428} & \scalea{\bst{0.398}} & \scalea{0.428} & \scalea{0.428} & \scalea{0.425} & \scalea{0.414} & \scalea{\subbst{0.414}} & \scalea{0.438} & \scalea{0.451} & \scalea{0.596} & \scalea{0.527} & \scalea{1.171} & \scalea{0.861} & \scalea{0.890} & \scalea{0.636} \\
    & \scalea{720} & \scalea{\bst{0.469}} & \scalea{\bst{0.444}} & \scalea{0.494} & \scalea{0.461} & \scalea{0.485} & \scalea{0.462} & \scalea{0.495} & \scalea{0.464} & \scalea{0.514} & \scalea{0.501} & \scalea{0.487} & \scalea{0.461} & \scalea{\subbst{0.473}} & \scalea{0.451} & \scalea{0.529} & \scalea{0.498} & \scalea{0.749} & \scalea{0.569} & \scalea{1.307} & \scalea{0.893} & \scalea{0.911} & \scalea{0.653} \\
    \cmidrule(lr){2-24}
    & \scalea{Avg} & \scalea{\bst{0.392}} & \scalea{\bst{0.399}} & \scalea{0.415} & \scalea{0.416} & \scalea{0.407} & \scalea{0.415} & \scalea{0.413} & \scalea{0.418} & \scalea{\subbst{0.399}} & \scalea{0.423} & \scalea{0.419} & \scalea{0.419} & \scalea{0.404} & \scalea{\subbst{0.407}} & \scalea{0.440} & \scalea{0.451} & \scalea{0.596} & \scalea{0.517} & \scalea{0.943} & \scalea{0.733} & \scalea{0.891} & \scalea{0.632} \\
    \midrule

    \multirow{5}{*}{{\rotatebox{90}{\scalebox{0.95}{ETTm2}}}}
    & \scalea{96}  & \scalea{\bst{0.173}} & \scalea{\bst{0.252}} & \scalea{0.184} & \scalea{0.266} & \scalea{0.190} & \scalea{0.282} & \scalea{0.185} & \scalea{\subbst{0.264}} & \scalea{\bst{0.178}} & \scalea{0.275} & \scalea{0.207} & \scalea{0.305} & \scalea{0.195} & \scalea{0.294} & \scalea{0.194} & \scalea{0.284} & \scalea{0.240} & \scalea{0.319} & \scalea{0.317} & \scalea{0.408} & \scalea{3.125} & \scalea{1.345} \\
    & \scalea{192} & \scalea{\subbst{0.241}} & \scalea{\bst{0.298}} & \scalea{0.257} & \scalea{0.315} & \scalea{0.260} & \scalea{0.329} & \scalea{0.254} & \scalea{\subbst{0.307}} & \scalea{\bst{0.240}} & \scalea{0.317} & \scalea{0.290} & \scalea{0.364} & \scalea{0.283} & \scalea{0.359} & \scalea{0.264} & \scalea{0.324} & \scalea{0.300} & \scalea{0.349} & \scalea{1.069} & \scalea{0.758} & \scalea{3.130} & \scalea{1.350} \\
    & \scalea{336} & \scalea{\bst{0.298}} & \scalea{\bst{0.334}} & \scalea{0.315} & \scalea{0.351} & \scalea{0.373} & \scalea{0.405} & \scalea{0.314} & \scalea{\subbst{0.345}} & \scalea{\subbst{0.299}} & \scalea{0.354} & \scalea{0.377} & \scalea{0.422} & \scalea{0.384} & \scalea{0.427} & \scalea{0.319} & \scalea{0.359} & \scalea{0.339} & \scalea{0.375} & \scalea{1.325} & \scalea{0.869} & \scalea{3.185} & \scalea{1.375} \\
    & \scalea{720} & \scalea{\bst{0.398}} & \scalea{\bst{0.393}} & \scalea{\subbst{0.419}} & \scalea{\subbst{0.409}} & \scalea{0.517} & \scalea{0.499} & \scalea{0.434} & \scalea{0.413} & \scalea{0.482} & \scalea{0.479} & \scalea{0.558} & \scalea{0.524} & \scalea{0.516} & \scalea{0.502} & \scalea{0.430} & \scalea{0.424} & \scalea{0.423} & \scalea{0.421} & \scalea{2.576} & \scalea{1.223} & \scalea{4.203} & \scalea{1.658} \\
    \cmidrule(lr){2-24}
    & \scalea{Avg} & \scalea{\bst{0.278}} & \scalea{\bst{0.319}} & \scalea{\subbst{0.294}} & \scalea{0.335} & \scalea{0.335} & \scalea{0.379} & \scalea{0.297} & \scalea{\subbst{0.332}} & \scalea{0.300} & \scalea{0.356} & \scalea{0.358} & \scalea{0.404} & \scalea{0.344} & \scalea{0.396} & \scalea{0.302} & \scalea{0.348} & \scalea{0.326} & \scalea{0.366} & \scalea{1.322} & \scalea{0.814} & \scalea{3.411} & \scalea{1.432} \\
    \midrule

    \multirow{5}{*}{\rotatebox{90}{{\scalebox{0.95}{ETTh1}}}}
    & \scalea{96} & \scalea{\subbst{0.382}} & \scalea{\bst{0.400}} & \scalea{0.390} & \scalea{\subbst{0.410}} & \scalea{0.399} & \scalea{0.412} & \scalea{0.422} & \scalea{0.433} & \scalea{0.383} & \scalea{0.418} & \scalea{0.479} & \scalea{0.464} & \scalea{0.396} & \scalea{0.410} & \scalea{\bst{0.377}} & \scalea{0.418} & \scalea{0.423} & \scalea{0.441} & \scalea{0.796} & \scalea{0.691} & \scalea{0.767} & \scalea{0.633} \\
    & \scalea{192} & \scalea{\subbst{0.430}} & \scalea{\bst{0.427}} & \scalea{0.443} & \scalea{\subbst{0.441}} & \scalea{0.453} & \scalea{0.443} & \scalea{0.465} & \scalea{0.457} & \scalea{0.500} & \scalea{0.491} & \scalea{0.521} & \scalea{0.503} & \scalea{0.449} & \scalea{0.444} & \scalea{\bst{0.421}} & \scalea{0.445} & \scalea{0.498} & \scalea{0.485} & \scalea{0.813} & \scalea{0.699} & \scalea{0.739} & \scalea{0.619} \\
    & \scalea{336} & \scalea{\subbst{0.474}} & \scalea{\bst{0.451}} & \scalea{0.480} & \scalea{\subbst{0.457}} & \scalea{0.503} & \scalea{0.475} & \scalea{0.492} & \scalea{0.470} & \scalea{0.546} & \scalea{0.530} & \scalea{0.659} & \scalea{0.603} & \scalea{0.487} & \scalea{0.465} & \scalea{\bst{0.468}} & \scalea{0.472} & \scalea{0.506} & \scalea{0.496} & \scalea{1.181} & \scalea{0.876} & \scalea{0.717} & \scalea{0.613} \\
    & \scalea{720} & \scalea{\bst{0.463}} & \scalea{\bst{0.462}} & \scalea{0.484} & \scalea{\subbst{0.479}} & \scalea{0.596} & \scalea{0.565} & \scalea{0.532} & \scalea{0.502} & \scalea{0.671} & \scalea{0.620} & \scalea{0.893} & \scalea{0.736} & \scalea{0.516} & \scalea{0.513} & \scalea{0.500} & \scalea{0.493} & \scalea{\subbst{0.477}} & \scalea{0.487} & \scalea{1.182} & \scalea{0.885} & \scalea{0.828} & \scalea{0.678} \\
    \cmidrule(lr){2-24}
    & \scalea{Avg} & \scalea{\bst{0.437}} & \scalea{\bst{0.435}} & \scalea{0.449} & \scalea{\subbst{0.447}} & \scalea{0.488} & \scalea{0.474} & \scalea{0.478} & \scalea{0.466} & \scalea{0.525} & \scalea{0.515} & \scalea{0.628} & \scalea{0.574} & \scalea{0.462} & \scalea{0.458} & \scalea{\subbst{0.441}} & \scalea{0.457} & \scalea{0.476} & \scalea{0.477} & \scalea{0.993} & \scalea{0.788} & \scalea{0.763} & \scalea{0.636} \\
    \midrule

    \multirow{5}{*}{\rotatebox{90}{{\scalebox{0.95}{ETTh2}}}}
    & \scalea{96}  & \scalea{\bst{0.289}} & \scalea{\bst{0.337}} & \scalea{\subbst{0.301}} & \scalea{\subbst{0.349}} & \scalea{0.350} & \scalea{0.403} & \scalea{0.320} & \scalea{0.364} & \scalea{0.361} & \scalea{0.404} & \scalea{0.400} & \scalea{0.440} & \scalea{0.343} & \scalea{0.396} & \scalea{0.347} & \scalea{0.391} & \scalea{0.383} & \scalea{0.424} & \scalea{2.072} & \scalea{1.140} & \scalea{3.171} & \scalea{1.364} \\
    & \scalea{192} & \scalea{\bst{0.363}} & \scalea{\bst{0.385}} & \scalea{\subbst{0.382}} & \scalea{\subbst{0.402}} & \scalea{0.472} & \scalea{0.475} & \scalea{0.409} & \scalea{0.417} & \scalea{0.495} & \scalea{0.490} & \scalea{0.528} & \scalea{0.509} & \scalea{0.473} & \scalea{0.474} & \scalea{0.430} & \scalea{0.443} & \scalea{0.557} & \scalea{0.511} & \scalea{5.081} & \scalea{1.814} & \scalea{3.222} & \scalea{1.398} \\
    & \scalea{336} & \scalea{\bst{0.419}} & \scalea{\bst{0.426}} & \scalea{\subbst{0.430}} & \scalea{\subbst{0.434}} & \scalea{0.564} & \scalea{0.528} & \scalea{0.449} & \scalea{0.451} & \scalea{0.671} & \scalea{0.588} & \scalea{0.643} & \scalea{0.571} & \scalea{0.603} & \scalea{0.546} & \scalea{0.469} & \scalea{0.475} & \scalea{0.470} & \scalea{0.481} & \scalea{3.564} & \scalea{1.475} & \scalea{3.306} & \scalea{1.452} \\
    & \scalea{720} & \scalea{\bst{0.415}} & \scalea{\bst{0.437}} & \scalea{\subbst{0.447}} & \scalea{\subbst{0.455}} & \scalea{0.815} & \scalea{0.654} & \scalea{0.473} & \scalea{0.474} & \scalea{0.968} & \scalea{0.712} & \scalea{0.874} & \scalea{0.679} & \scalea{0.812} & \scalea{0.650} & \scalea{0.473} & \scalea{0.480} & \scalea{0.501} & \scalea{0.515} & \scalea{2.469} & \scalea{1.247} & \scalea{3.599} & \scalea{1.565} \\
    \cmidrule(lr){2-24}
    & \scalea{Avg} & \scalea{\bst{0.371}} & \scalea{\bst{0.396}} & \scalea{\subbst{0.390}} & \scalea{\subbst{0.410}} & \scalea{0.550} & \scalea{0.515} & \scalea{0.413} & \scalea{0.426} & \scalea{0.624} & \scalea{0.549} & \scalea{0.611} & \scalea{0.550} & \scalea{0.558} & \scalea{0.516} & \scalea{0.430} & \scalea{0.447} & \scalea{0.478} & \scalea{0.483} & \scalea{3.296} & \scalea{1.419} & \scalea{3.325} & \scalea{1.445} \\
    \midrule

    \multirow{5}{*}{{\rotatebox{90}{\scalebox{0.95}{ECL}}}} 
    & \scalea{96} & \scalea{\bst{0.144}} & \scalea{\bst{0.233}} & \scalea{\subbst{0.148}} & \scalea{\subbst{0.239}} & \scalea{0.189} & \scalea{0.277} & \scalea{0.171} & \scalea{0.273} & \scalea{0.168} & \scalea{0.280} & \scalea{0.237} & \scalea{0.329} & \scalea{0.210} & \scalea{0.302} & \scalea{0.200} & \scalea{0.315} & \scalea{0.199} & \scalea{0.315} & \scalea{0.252} & \scalea{0.352} & \scalea{0.688} & \scalea{0.621} \\
    & \scalea{192} & \scalea{\bst{0.159}} & \scalea{\bst{0.247}} & \scalea{\subbst{0.167}} & \scalea{\subbst{0.258}} & \scalea{0.193} & \scalea{0.282} & \scalea{0.188} & \scalea{0.289} & \scalea{0.177} & \scalea{0.289} & \scalea{0.236} & \scalea{0.330} & \scalea{0.210} & \scalea{0.305} & \scalea{0.207} & \scalea{0.322} & \scalea{0.215} & \scalea{0.327} & \scalea{0.266} & \scalea{0.364} & \scalea{0.587} & \scalea{0.582} \\
    & \scalea{336} & \scalea{\bst{0.172}} & \scalea{\bst{0.263}} & \scalea{\subbst{0.179}} & \scalea{\subbst{0.272}} & \scalea{0.207} & \scalea{0.296} & \scalea{0.208} & \scalea{0.304} & \scalea{0.185} & \scalea{0.296} & \scalea{0.249} & \scalea{0.344} & \scalea{0.223} & \scalea{0.319} & \scalea{0.226} & \scalea{0.340} & \scalea{0.232} & \scalea{0.343} & \scalea{0.292} & \scalea{0.383} & \scalea{0.590} & \scalea{0.588} \\
    & \scalea{720} & \scalea{\bst{0.204}} & \scalea{\bst{0.294}} & \scalea{\subbst{0.209}} & \scalea{\subbst{0.298}} & \scalea{0.245} & \scalea{0.332} & \scalea{0.289} & \scalea{0.363} & \scalea{0.218} & \scalea{0.323} & \scalea{0.284} & \scalea{0.373} & \scalea{0.258} & \scalea{0.350} & \scalea{0.282} & \scalea{0.379} & \scalea{0.268} & \scalea{0.371} & \scalea{0.287} & \scalea{0.371} & \scalea{0.602} & \scalea{0.601} \\
    \cmidrule(lr){2-24}
    & \scalea{Avg} & \scalea{\bst{0.170}} & \scalea{\bst{0.259}} & \scalea{\subbst{0.176}} & \scalea{\subbst{0.267}} & \scalea{0.209} & \scalea{0.297} & \scalea{0.214} & \scalea{0.307} & \scalea{0.187} & \scalea{0.297} & \scalea{0.251} & \scalea{0.344} & \scalea{0.225} & \scalea{0.319} & \scalea{0.229} & \scalea{0.339} & \scalea{0.228} & \scalea{0.339} & \scalea{0.274} & \scalea{0.367} & \scalea{0.617} & \scalea{0.598} \\
    \midrule

    \multirow{5}{*}{{\rotatebox{90}{\scalebox{0.95}{Traffic}}}} 
    & \scalea{96} & \scalea{\bst{0.391}} & \scalea{\bst{0.265}}  & \scalea{\subbst{0.397}} & \scalea{\subbst{0.272}} & \scalea{0.528} & \scalea{0.341} & \scalea{0.504} & \scalea{0.298} & \scalea{0.609} & \scalea{0.317} & \scalea{0.805} & \scalea{0.493} & \scalea{0.697} & \scalea{0.429} & \scalea{0.577} & \scalea{0.362} & \scalea{0.609} & \scalea{0.385} & \scalea{0.686} & \scalea{0.385} & \scalea{1.451} & \scalea{0.744} \\
    & \scalea{192} & \scalea{\bst{0.410}} & \scalea{\bst{0.273}} & \scalea{\subbst{0.418}} & \scalea{\subbst{0.279}} & \scalea{0.531} & \scalea{0.338} & \scalea{0.526} & \scalea{0.305} & \scalea{0.621} & \scalea{0.328} & \scalea{0.756} & \scalea{0.474} & \scalea{0.647} & \scalea{0.407} & \scalea{0.603} & \scalea{0.372} & \scalea{0.633} & \scalea{0.400} & \scalea{0.679} & \scalea{0.377} & \scalea{0.842} & \scalea{0.622} \\
    & \scalea{336} & \scalea{\bst{0.424}} & \scalea{\bst{0.280}} & \scalea{\subbst{0.432}} & \scalea{\subbst{0.286}} & \scalea{0.551} & \scalea{0.345} & \scalea{0.540} & \scalea{0.310} & \scalea{0.641} & \scalea{0.342} & \scalea{0.762} & \scalea{0.477} & \scalea{0.653} & \scalea{0.410} & \scalea{0.615} & \scalea{0.378} & \scalea{0.637} & \scalea{0.398} & \scalea{0.663} & \scalea{0.361} & \scalea{0.844} & \scalea{0.620} \\
    & \scalea{720} & \scalea{\bst{0.460}} & \scalea{\bst{0.298}} & \scalea{\subbst{0.467}} & \scalea{\subbst{0.305}} & \scalea{0.598} & \scalea{0.367} & \scalea{0.570} & \scalea{0.324} & \scalea{0.671} & \scalea{0.354} & \scalea{0.719} & \scalea{0.449} & \scalea{0.694} & \scalea{0.429} & \scalea{0.649} & \scalea{0.403} & \scalea{0.668} & \scalea{0.415} & \scalea{0.693} & \scalea{0.381} & \scalea{0.867} & \scalea{0.624} \\
    \cmidrule(lr){2-22}
    & \scalea{Avg} & \scalea{\bst{0.421}} & \scalea{\bst{0.279}} & \scalea{\subbst{0.428}} & \scalea{\subbst{0.286}} & \scalea{0.552} & \scalea{0.348} & \scalea{0.535} & \scalea{0.309} & \scalea{0.636} & \scalea{0.335} & \scalea{0.760} & \scalea{0.473} & \scalea{0.673} & \scalea{0.419} & \scalea{0.611} & \scalea{0.379} & \scalea{0.637} & \scalea{0.399} & \scalea{0.680} & \scalea{0.376} & \scalea{1.001} & \scalea{0.652} \\
    \midrule

    \multirow{5}{*}{{\rotatebox{90}{\scalebox{0.95}{Weather}}}}
    & \scalea{96}  & \scalea{\bst{0.164}} & \scalea{\bst{0.202}} & \scalea{0.201} & \scalea{0.247} & \scalea{0.184} & \scalea{0.239} &  \scalea{0.178} & \scalea{\subbst{0.226}} & \scalea{\subbst{0.182}} & \scalea{0.250} & \scalea{0.202} & \scalea{0.261} & \scalea{0.197} & \scalea{0.259} & \scalea{0.221} & \scalea{0.304} & \scalea{0.284} & \scalea{0.355} & \scalea{0.332} & \scalea{0.383} & \scalea{0.610} & \scalea{0.568} \\
    & \scalea{192} & \scalea{\bst{0.220}} & \scalea{\bst{0.253}} & \scalea{0.250} & \scalea{0.283} & \scalea{\subbst{0.223}} & \scalea{0.275} &  \scalea{0.227} & \scalea{\subbst{0.266}} & \scalea{0.234} & \scalea{0.301} & \scalea{0.242} & \scalea{0.298} & \scalea{0.236} & \scalea{0.294} & \scalea{0.275} & \scalea{0.345} & \scalea{0.313} & \scalea{0.371} & \scalea{0.634} & \scalea{0.539} & \scalea{0.541} & \scalea{0.552} \\
    & \scalea{336} & \scalea{0.275} & \scalea{\bst{0.294}} & \scalea{0.302} & \scalea{0.317} & \scalea{\subbst{0.272}} & \scalea{0.316} &  \scalea{0.283} & \scalea{\subbst{0.305}} & \scalea{\bst{0.268}} & \scalea{0.325} & \scalea{0.287} & \scalea{0.335} & \scalea{0.282} & \scalea{0.332} & \scalea{0.338} & \scalea{0.379} & \scalea{0.359} & \scalea{0.393} & \scalea{0.656} & \scalea{0.579} & \scalea{0.565} & \scalea{0.569} \\
    & \scalea{720} & \scalea{0.356} & \scalea{\bst{0.347}} & \scalea{0.370} & \scalea{0.362} & \scalea{\bst{0.340}} & \scalea{0.363} &  \scalea{0.359} & \scalea{\subbst{0.355}} & \scalea{0.361} & \scalea{0.399} & \scalea{0.351} & \scalea{0.386} & \scalea{\subbst{0.347}} & \scalea{0.384} & \scalea{0.408} & \scalea{0.418} & \scalea{0.440} & \scalea{0.446} & \scalea{0.908} & \scalea{0.706} & \scalea{0.622} & \scalea{0.601} \\
    \cmidrule(lr){2-24}
    & \scalea{Avg}  & \scalea{\bst{0.254}} & \scalea{\bst{0.274}} & \scalea{0.281} & \scalea{0.302} & \scalea{\subbst{0.255}} & \scalea{0.299} &  \scalea{0.262} & \scalea{\subbst{0.288}} & \scalea{0.261} & \scalea{0.319} & \scalea{0.271} & \scalea{0.320} & \scalea{0.265} & \scalea{0.317} & \scalea{0.311} & \scalea{0.361} & \scalea{0.349} & \scalea{0.391} & \scalea{0.632} & \scalea{0.552} & \scalea{0.584} & \scalea{0.572} \\
    \midrule

    \multirow{5}{*}{{\rotatebox{90}{\scalebox{0.95}{PEMS03}}}}
    & \scalea{12}  & \scalea{\bst{0.068}} & \scalea{\bst{0.172}} & \scalea{\subbst{0.069}} & \scalea{\subbst{0.175}} & \scalea{0.083} & \scalea{0.194} & \scalea{0.082} & \scalea{0.188} & \scalea{0.087} & \scalea{0.203} & \scalea{0.117} & \scalea{0.225} & \scalea{0.122} & \scalea{0.245} & \scalea{0.123} & \scalea{0.248} & \scalea{0.239} & \scalea{0.365} & \scalea{0.107} & \scalea{0.209} & \scalea{0.632} & \scalea{0.606} \\
    & \scalea{24}  & \scalea{\subbst{0.096}} & \scalea{\subbst{0.205}} & \scalea{0.098} & \scalea{0.210} & \scalea{0.127} & \scalea{0.241} & \scalea{0.110} & \scalea{0.216} & \scalea{\bst{0.086}} & \scalea{\bst{0.198}} & \scalea{0.233} & \scalea{0.320} & \scalea{0.202} & \scalea{0.320} & \scalea{0.160} & \scalea{0.287} & \scalea{0.492} & \scalea{0.506} & \scalea{0.121} & \scalea{0.227} & \scalea{0.655} & \scalea{0.626} \\
    & \scalea{36}  & \scalea{\subbst{0.128}} & \scalea{0.240} & \scalea{0.131} & \scalea{0.243} & \scalea{0.169} & \scalea{0.281} & \scalea{0.133} & \scalea{\subbst{0.236}} & \scalea{\bst{0.105}} & \scalea{\bst{0.220}} & \scalea{0.380} & \scalea{0.422} & \scalea{0.275} & \scalea{0.382} & \scalea{0.191} & \scalea{0.321} & \scalea{0.399} & \scalea{0.459} & \scalea{0.133} & \scalea{0.243} & \scalea{0.678} & \scalea{0.644} \\
    & \scalea{48}  & \scalea{0.161} & \scalea{0.269} & \scalea{0.164} & \scalea{0.275} & \scalea{0.204} & \scalea{0.311} & \scalea{\subbst{0.146}} & \scalea{\subbst{0.251}} & \scalea{\bst{0.120}} & \scalea{\bst{0.235}} & \scalea{0.536} & \scalea{0.511} & \scalea{0.335} & \scalea{0.429} & \scalea{0.223} & \scalea{0.350} & \scalea{0.875} & \scalea{0.723} & \scalea{0.144} & \scalea{0.253} & \scalea{0.699} & \scalea{0.659} \\
    \cmidrule(lr){2-24}
    & \scalea{Avg}  & \scalea{\subbst{0.113}} & \scalea{\subbst{0.219}} & \scalea{0.116} & \scalea{0.226} & \scalea{0.146} & \scalea{0.257} & \scalea{0.118} & \scalea{0.223} & \scalea{\bst{0.099}} & \scalea{\bst{0.214}} & \scalea{0.316} & \scalea{0.370} & \scalea{0.233} & \scalea{0.344} & \scalea{0.174} & \scalea{0.302} & \scalea{0.501} & \scalea{0.513} & \scalea{0.126} & \scalea{0.233} & \scalea{0.666} & \scalea{0.634} \\
    \midrule

    \multirow{5}{*}{{\rotatebox{90}{\scalebox{0.95}{PEMS08}}}}
    & \scalea{12}  & \scalea{\bst{0.080}} & \scalea{\bst{0.182}} & \scalea{\subbst{0.085}} & \scalea{\subbst{0.189}} & \scalea{0.095} & \scalea{0.204} & \scalea{0.110} & \scalea{0.209} & \scalea{2.193} & \scalea{0.871} & \scalea{0.121} & \scalea{0.231} & \scalea{0.152} & \scalea{0.274} & \scalea{0.175} & \scalea{0.275} & \scalea{0.446} & \scalea{0.483} & \scalea{0.213} & \scalea{0.236} & \scalea{0.680} & \scalea{0.607} \\
    & \scalea{24}  & \scalea{\bst{0.118}} & \scalea{\bst{0.220}} & \scalea{\subbst{0.131}} & \scalea{\subbst{0.236}} & \scalea{0.150} & \scalea{0.259} & \scalea{0.142} & \scalea{0.239} & \scalea{0.235} & \scalea{0.339} & \scalea{0.232} & \scalea{0.326} & \scalea{0.245} & \scalea{0.350} & \scalea{0.211} & \scalea{0.305} & \scalea{0.488} & \scalea{0.509} & \scalea{0.238} & \scalea{0.256} & \scalea{0.701} & \scalea{0.622} \\
    & \scalea{36}  & \scalea{\bst{0.161}} & \scalea{\bst{0.258}} & \scalea{0.182} & \scalea{0.282} & \scalea{0.202} & \scalea{0.305} & \scalea{\subbst{0.167}} & \scalea{\subbst{0.258}} & \scalea{0.197} & \scalea{0.300} & \scalea{0.379} & \scalea{0.428} & \scalea{0.344} & \scalea{0.417} & \scalea{0.250} & \scalea{0.338} & \scalea{0.532} & \scalea{0.513} & \scalea{0.263} & \scalea{0.277} & \scalea{0.727} & \scalea{0.637} \\
    & \scalea{48}  & \scalea{\subbst{0.206}} & \scalea{\subbst{0.293}} & \scalea{0.236} & \scalea{0.323} & \scalea{0.250} & \scalea{0.341} & \scalea{\bst{0.195}} & \scalea{\bst{0.274}} & \scalea{0.242} & \scalea{0.324} & \scalea{0.543} & \scalea{0.527} & \scalea{0.437} & \scalea{0.469} & \scalea{0.293} & \scalea{0.371} & \scalea{1.052} & \scalea{0.781} & \scalea{0.283} & \scalea{0.295} & \scalea{0.746} & \scalea{0.648} \\
    \cmidrule(lr){2-24}
    & \scalea{Avg}  & \scalea{\bst{0.141}} & \scalea{\bst{0.238}} & \scalea{0.159} & \scalea{0.258} & \scalea{0.174} & \scalea{0.277} & \scalea{\subbst{0.154}} & \scalea{\subbst{0.245}} & \scalea{0.717} & \scalea{0.459} & \scalea{0.319} & \scalea{0.378} & \scalea{0.294} & \scalea{0.377} & \scalea{0.232} & \scalea{0.322} & \scalea{0.630} & \scalea{0.572} & \scalea{0.249} & \scalea{0.266} & \scalea{0.713} & \scalea{0.629} \\
    \midrule

    \multicolumn{2}{c|}{\scalea{{$1^{\text{st}}$ Count}}} & \scalea{\bst{31}} & \scalea{\bst{40}} & \scalea{0} & \scalea{0} & \scalea{1} & \scalea{0} & \scalea{1} & \scalea{1} & \scalea{\subbst{10}} & \scalea{\subbst{4}} & \scalea{0} & \scalea{0} & \scalea{0} & \scalea{0} & \scalea{3} & \scalea{0} & \scalea{0} & \scalea{0} & \scalea{0} & \scalea{0} & \scalea{0} & \scalea{0} \\
    \bottomrule
  \end{tabular}
   \begin{tablenotes}
    \item  \scriptsize \textit{Note}:  We fix the input length as 96 following~\citep{itransformer}. \bst{Bold} typeface highlights the top performance for each metric, while \subbst{underlined} text denotes the second-best results. \emph{Avg} indicates the results averaged over forecasting lengths: T=96, 192, 336 and 720.
\end{tablenotes}
\end{threeparttable}
\end{table}

\begin{table}
  \caption{The comprehensive results on the short-term forecasting task. }\label{tab:short_app}
  \renewcommand{\arraystretch}{1} \setlength{\tabcolsep}{1pt} \scriptsize
  \centering
  \renewcommand{\multirowsetup}{\centering}
  \begin{threeparttable}
  \begin{tabular}{l|ccc|ccc|ccc|ccc|ccc|ccc|ccc}
    \toprule
    \multicolumn{1}{c}{\multirow{2}{*}{\rotatebox{0}{\scaleb{Models}}}} & 
    \multicolumn{3}{c}{\rotatebox{0}{\scaleb{\textbf{FreDF}}}} &
    \multicolumn{3}{c}{\rotatebox{0}{\scaleb{{FreTS}}}} &
    \multicolumn{3}{c}{\rotatebox{0}{\scaleb{{iTransformer}}}} &
    \multicolumn{3}{c}{\rotatebox{0}{\scaleb{MICN}}} &
    \multicolumn{3}{c}{\rotatebox{0}{\scaleb{DLinear}}}  &
    \multicolumn{3}{c}{\rotatebox{0}{\scaleb{Fedformer}}} &
    \multicolumn{3}{c}{\rotatebox{0}{\scaleb{{Autoformer}}}} \\
    \multicolumn{1}{c}{} &
    \multicolumn{3}{c}{\scaleb{\textbf{(Ours)}}} & 
    \multicolumn{3}{c}{\scaleb{(2023)}} & 
    \multicolumn{3}{c}{\scaleb{(2024)}} & 
    \multicolumn{3}{c}{\scaleb{(2023)}} & 
    \multicolumn{3}{c}{\scaleb{(2023)}} & 
    \multicolumn{3}{c}{\scaleb{(2023)}} & 
    \multicolumn{3}{c}{\scaleb{(2023)}}\\
    \cmidrule(lr){2-4} \cmidrule(lr){5-7}\cmidrule(lr){8-10} \cmidrule(lr){11-13}\cmidrule(lr){14-16}\cmidrule(lr){17-19} \cmidrule(lr){20-22}
    \rotatebox{0}{\scaleb{Metric}}  & \scalea{SMAPE} & \scalea{MASE}  & \scalea{OWA}  & \scalea{SMAPE} & \scalea{MASE}  & \scalea{OWA}  & \scalea{SMAPE} & \scalea{MASE}  & \scalea{OWA}  & \scalea{SMAPE} & \scalea{MASE}  & \scalea{OWA}  & \scalea{SMAPE} & \scalea{MASE}  & \scalea{OWA}  & \scalea{SMAPE} & \scalea{MASE}  & \scalea{OWA}  & \scalea{SMAPE} & \scalea{MASE}  & \scalea{OWA}   \\
    \toprule
    
   \scaleb{Yearly} & \scalea{\bst{13.556}} & \scalea{\bst{3.046}} & \scalea{\bst{0.798}} & \scalea{\subbst{13.576}} & \scalea{\subbst{3.068}} & \scalea{\subbst{0.801}} & \scalea{13.797} & \scalea{3.143} & \scalea{0.818} & \scalea{14.594} & \scalea{3.392} & \scalea{0.873} & \scalea{14.307} & \scalea{3.094} & \scalea{0.827} & \scalea{13.648} & \scalea{3.089} & \scalea{0.806} & \scalea{18.477} & \scalea{4.26} & \scalea{1.101} \\

   \scaleb{Quarterly} & \scalea{\subbst{10.374}} & \scalea{\subbst{1.229}} & \scalea{\subbst{0.919}} & \scalea{\bst{10.361}} & \scalea{\bst{1.223}} & \scalea{\bst{0.916}} & \scalea{10.503} & \scalea{1.248} & \scalea{0.932} & \scalea{11.417} & \scalea{1.385} & \scalea{1.023} & \scalea{10.500} & \scalea{1.237} & \scalea{0.928} & \scalea{10.612} & \scalea{1.246} & \scalea{0.936} & \scalea{14.254} & \scalea{1.829} & \scalea{1.314} \\
   
   \scaleb{Monthly} & \scalea{\bst{12.999}} & \scalea{\bst{0.983}} & \scalea{\bst{0.913}} & \scalea{\subbst{13.088}} & \scalea{\subbst{0.99}} & \scalea{\subbst{0.919}} & \scalea{13.227} & \scalea{1.013} & \scalea{0.935} & \scalea{13.834} & \scalea{1.080} & \scalea{0.987} & \scalea{13.362} & \scalea{1.007} & \scalea{0.937} & \scalea{14.181} & \scalea{1.105} & \scalea{1.011} & \scalea{18.421} & \scalea{1.616} & \scalea{1.398} \\
   
   \scaleb{Others} & \scalea{5.294} & \scalea{3.614} & \scalea{1.127} & \scalea{5.563} & \scalea{3.71} & \scalea{1.17} & \scalea{\subbst{5.101}} & \scalea{\subbst{3.419}} & \scalea{\subbst{1.076}} & \scalea{6.137} & \scalea{4.201} & \scalea{1.308} & \scalea{5.12} & \scalea{3.649} & \scalea{1.114} & \scalea{\bst{4.823}} & \scalea{\bst{3.243}} & \scalea{\bst{1.019}} & \scalea{6.772} & \scalea{4.963} & \scalea{1.495} \\

   \scaleb{Avg.} & \scalea{\bst{12.112}} & \scalea{\bst{1.648}} & \scalea{\bst{0.877}} & \scalea{\subbst{12.169}} & \scalea{\subbst{1.66}} & \scalea{\subbst{0.883}} & \scalea{12.298} & \scalea{1.68} & \scalea{0.893} & \scalea{13.044} & \scalea{1.841} & \scalea{0.962} & \scalea{12.48} & \scalea{1.674} & \scalea{0.898} & \scalea{12.734} & \scalea{1.702} & \scalea{0.914} & \scalea{16.851} & \scalea{2.443} & \scalea{1.26} \\
   \midrule

    \scalea{{$1^{\text{st}}$ Count}} & \scalea{\bst{3}} & \scalea{\bst{3}} & \scalea{\bst{3}}  & \scalea{\subbst{1}} & \scalea{\subbst{1}} & \scalea{\subbst{1}} & \scalea{0} & \scalea{0} & \scalea{0} & \scalea{0} & \scalea{0} & \scalea{0} & \scalea{0} & \scalea{0} & \scalea{0}   & \scalea{\subbst{1}} & \scalea{\subbst{1}} & \scalea{\subbst{1}}  & \scalea{0} & \scalea{0} & \scalea{0}\\
   
    \bottomrule    
  \end{tabular}
  \begin{tablenotes}
    \item  \scriptsize \textit{Note}:  \bst{Bold} typeface highlights the top performance for each metric, while \subbst{underlined} text denotes the second-best results. \emph{Avg} indicates the results averaged over forecasting lengths: yearly, quarterly, and monthly. 
\end{tablenotes}
\end{threeparttable}
\end{table}

\begin{table}
  \caption{The comprehensive results on the missing data imputation task. }\label{tab:imp_app}
  \renewcommand{\arraystretch}{1} \setlength{\tabcolsep}{1pt} \scriptsize
  \centering
  \renewcommand{\multirowsetup}{\centering}
  \begin{threeparttable}
  \begin{tabular}{c|c|cc|cc|cc|cc|cc|cc|cc|cc|cc|cc|cc|cc}
    \toprule
    \multicolumn{2}{l}{\multirow{2}{*}{\rotatebox{0}{\scaleb{Models}}}} & 
    \multicolumn{2}{c}{\rotatebox{0}{\scaleb{\textbf{FreDF}}}} &
    \multicolumn{2}{c}{\rotatebox{0}{\scaleb{{iTransformer}}}} &
    \multicolumn{2}{c}{\rotatebox{0}{\scaleb{{FreTS}}}} &
    \multicolumn{2}{c}{\rotatebox{0}{\scaleb{TimesNet}}} &
    \multicolumn{2}{c}{\rotatebox{0}{\scaleb{MICN}}}  &
    \multicolumn{2}{c}{\rotatebox{0}{\scaleb{TiDE}}} &
    \multicolumn{2}{c}{\rotatebox{0}{\scaleb{{DLinear}}}} &
    \multicolumn{2}{c}{\rotatebox{0}{\scaleb{FEDformer}}} &
    \multicolumn{2}{c}{\rotatebox{0}{\scaleb{Autoformer}}} &
    \multicolumn{2}{c}{\rotatebox{0}{\scaleb{}}} &
    \multicolumn{2}{c}{\rotatebox{0}{\scaleb{}}} &
    \multicolumn{2}{c}{\rotatebox{0}{\scaleb{}}} \\
    \multicolumn{2}{c}{} &
    \multicolumn{2}{c}{\scaleb{\textbf{(Ours)}}} & 
    \multicolumn{2}{c}{\scaleb{(2024)}} & 
    \multicolumn{2}{c}{\scaleb{(2023)}} & 
    \multicolumn{2}{c}{\scaleb{(2023)}} & 
    \multicolumn{2}{c}{\scaleb{(2023)}} & 
    \multicolumn{2}{c}{\scaleb{(2023)}} & 
    \multicolumn{2}{c}{\scaleb{(2023)}} & 
    \multicolumn{2}{c}{\scaleb{(2022)}} &
    \multicolumn{2}{c}{\scaleb{(2021)}} \\
    \cmidrule(lr){3-4} \cmidrule(lr){5-6}\cmidrule(lr){7-8} \cmidrule(lr){9-10}\cmidrule(lr){11-12}\cmidrule(lr){13-14} \cmidrule(lr){15-16} \cmidrule(lr){17-18} \cmidrule(lr){19-20} 
    & \rotatebox{0}{\scaleb{$p_\mathrm{miss}$}}  & \scalea{MSE} & \scalea{MAE}  & \scalea{MSE} & \scalea{MAE}  & \scalea{MSE} & \scalea{MAE}  & \scalea{MSE} & \scalea{MAE}  & \scalea{MSE} & \scalea{MAE}  & \scalea{MSE} & \scalea{MAE} & \scalea{MSE} & \scalea{MAE} & \scalea{MSE} & \scalea{MAE} & \scalea{MSE} & \scalea{MAE}\\
    \toprule
    
    \multirow{5}{*}{{\rotatebox{90}{\scalebox{0.95}{ETTm1}}}}  
    & \scalea{0.125} & \scalea{\subbst{0.00153}} & \scalea{\subbst{0.02790}} & \scalea{0.00213} & \scalea{0.03307} & \scalea{0.01102} & \scalea{0.07843} & \scalea{0.01152} & \scalea{0.07267} & \scalea{0.00236} & \scalea{0.03371} & \scalea{0.45052} & \scalea{0.45514} & \scalea{\bst{0.00148}} & \scalea{\bst{0.02380}} & \scalea{0.68262} & \scalea{0.38111} & \scalea{0.37654} & \scalea{0.35378} \\
    
    & \scalea{0.25} & \scalea{\subbst{0.00287}} & \scalea{\subbst{0.03801}} & \scalea{0.00402} & \scalea{0.04434} & \scalea{0.01089} & \scalea{0.07753} & \scalea{0.01245} & \scalea{0.07946} & \scalea{0.00284} & \scalea{0.03691} & \scalea{0.41777} & \scalea{0.45884} & \scalea{\bst{0.00154}} & \scalea{\bst{0.02351}} & \scalea{0.68235} & \scalea{0.38116} & \scalea{0.37059} & \scalea{0.35261} \\
    
    & \scalea{0.375} & \scalea{\subbst{0.00256}} & \scalea{\subbst{0.03669}} & \scalea{0.00458} & \scalea{0.04663} & \scalea{0.01100} & \scalea{0.07812} & \scalea{0.01407} & \scalea{0.08673} & \scalea{0.00323} & \scalea{0.03900} & \scalea{0.62935} & \scalea{0.55570} & \scalea{\bst{0.00175}} & \scalea{\bst{0.02385}} & \scalea{0.68191} & \scalea{0.38105} & \scalea{0.37877} & \scalea{0.36093} \\
    
    & \scalea{0.5} & \scalea{\bst{0.00152}} & \scalea{\subbst{0.02739}} & \scalea{0.00363} & \scalea{0.04359} & \scalea{0.01102} & \scalea{0.07818} & \scalea{0.01676} & \scalea{0.09610} & \scalea{0.00352} & \scalea{0.04028} & \scalea{0.29342} & \scalea{0.39320} & \scalea{\subbst{0.00192}} & \scalea{\bst{0.02219}} & \scalea{0.68119} & \scalea{0.38085} & \scalea{0.38052} & \scalea{0.36462} \\
    
    & \scalea{Avg} & \scalea{\subbst{0.00212}} & \scalea{\subbst{0.03250}} & \scalea{0.00359} & \scalea{0.04191} & \scalea{0.01098} & \scalea{0.07807} & \scalea{0.01370} & \scalea{0.08374} & \scalea{0.00299} & \scalea{0.03747} & \scalea{0.44776} & \scalea{0.46572} & \scalea{\bst{0.00167}} & \scalea{\bst{0.02334}} & \scalea{0.68202} & \scalea{0.38104} & \scalea{0.37660} & \scalea{0.35798} \\
    \midrule

    \multirow{5}{*}{{\rotatebox{90}{\scalebox{0.95}{ETTm2}}}}
    & \scalea{0.125} & \scalea{\subbst{0.00363}} & \scalea{\subbst{0.03840}} & \scalea{0.00398} & \scalea{0.04034} & \scalea{0.03194} & \scalea{0.13349} & \scalea{0.01189} & \scalea{0.06710} & \scalea{\bst{0.00219}} & \scalea{\bst{0.03345}} & \scalea{0.83023} & \scalea{0.62174} & \scalea{0.03822} & \scalea{0.12943} & \scalea{3.10388} & \scalea{1.31356} & \scalea{1.40160} & \scalea{0.80777} \\
    
    & \scalea{0.25} & \scalea{0.00437} & \scalea{\subbst{0.04255}} & \scalea{\subbst{0.00431}} & \scalea{0.04303} & \scalea{0.03591} & \scalea{0.13655} & \scalea{0.01795} & \scalea{0.08939} & \scalea{\bst{0.00331}} & \scalea{\bst{0.04100}} & \scalea{0.81402} & \scalea{0.61100} & \scalea{0.03063} & \scalea{0.11547} & \scalea{3.10364} & \scalea{1.31348} & \scalea{1.41033} & \scalea{0.81363} \\
    
    & \scalea{0.375} & \scalea{\subbst{0.00352}} & \scalea{\subbst{0.03823}} & \scalea{\bst{0.00342}} & \scalea{\bst{0.03793}} & \scalea{0.03250} & \scalea{0.13336} & \scalea{0.02742} & \scalea{0.11499} & \scalea{0.00431} & \scalea{0.04598} & \scalea{1.11225} & \scalea{0.73633} & \scalea{0.01709} & \scalea{0.08822} & \scalea{3.10328} & \scalea{1.31330} & \scalea{1.40812} & \scalea{0.81049} \\
    
    & \scalea{0.5} & \scalea{\bst{0.00137}} & \scalea{\bst{0.02382}} & \scalea{\subbst{0.00160}} & \scalea{\subbst{0.02538}} & \scalea{0.03126} & \scalea{0.13027} & \scalea{0.04053} & \scalea{0.14285} & \scalea{0.00505} & \scalea{0.04918} & \scalea{0.99459} & \scalea{0.70665} & \scalea{0.01025} & \scalea{0.06440} & \scalea{3.10527} & \scalea{1.31389} & \scalea{1.44617} & \scalea{0.81796} \\
    
    & \scalea{Avg} & \scalea{\bst{0.00322}} & \scalea{\bst{0.03575}} & \scalea{\subbst{0.00333}} & \scalea{\subbst{0.03667}} & \scalea{0.03290} & \scalea{0.13342} & \scalea{0.02445} & \scalea{0.10358} & \scalea{0.00371} & \scalea{0.04240} & \scalea{0.93777} & \scalea{0.66893} & \scalea{0.02405} & \scalea{0.09938} & \scalea{3.10402} & \scalea{1.31356} & \scalea{1.41655} & \scalea{0.81246} \\
    \midrule

    \multirow{5}{*}{\rotatebox{90}{{\scalebox{0.95}{ETTh1}}}}
    & \scalea{0.125} & \scalea{\bst{0.00178}} & \scalea{\bst{0.03059}} & \scalea{0.00319} & \scalea{0.04102} & \scalea{0.01400} & \scalea{0.08181} & \scalea{0.00441} & \scalea{0.04403} & \scalea{0.00432} & \scalea{0.04655} & \scalea{0.36363} & \scalea{0.45350} & \scalea{\subbst{0.00279}} & \scalea{\subbst{0.03617}} & \scalea{0.68307} & \scalea{0.38026} & \scalea{0.43136} & \scalea{0.41184} \\
    
    & \scalea{0.25} & \scalea{\bst{0.00218}} & \scalea{\subbst{0.03405}} & \scalea{0.00334} & \scalea{0.04205} & \scalea{0.01347} & \scalea{0.08097} & \scalea{0.00320} & \scalea{0.03850} & \scalea{0.00454} & \scalea{0.04769} & \scalea{0.28435} & \scalea{0.40516} & \scalea{\subbst{0.00236}} & \scalea{\bst{0.03324}} & \scalea{0.68162} & \scalea{0.37973} & \scalea{0.43515} & \scalea{0.41584} \\
    
    & \scalea{0.375} & \scalea{\bst{0.00182}} & \scalea{\bst{0.03108}} & \scalea{0.00280} & \scalea{0.03852} & \scalea{0.01308} & \scalea{0.08017} & \scalea{0.00261} & \scalea{0.03540} & \scalea{0.00454} & \scalea{0.04730} & \scalea{0.21038} & \scalea{0.34029} & \scalea{\subbst{0.00210}} & \scalea{\subbst{0.03121}} & \scalea{0.68181} & \scalea{0.37975} & \scalea{0.44431} & \scalea{0.42505} \\
    
    & \scalea{0.5} & \scalea{\bst{0.00114}} & \scalea{\bst{0.02414}} & \scalea{\subbst{0.00174}} & \scalea{0.03008} & \scalea{0.01276} & \scalea{0.07918} & \scalea{0.00245} & \scalea{0.03472} & \scalea{0.00437} & \scalea{0.04594} & \scalea{0.13344} & \scalea{0.27102} & \scalea{0.00175} & \scalea{\subbst{0.02844}} & \scalea{0.68137} & \scalea{0.37992} & \scalea{0.44312} & \scalea{0.42387} \\
    
    & \scalea{Avg} & \scalea{\bst{0.00173}} & \scalea{\bst{0.02996}} & \scalea{0.00277} & \scalea{0.03792} & \scalea{0.01333} & \scalea{0.08053} & \scalea{0.00317} & \scalea{0.03817} & \scalea{0.00444} & \scalea{0.04687} & \scalea{0.24795} & \scalea{0.36749} & \scalea{\subbst{0.00225}} & \scalea{\subbst{0.03226}} & \scalea{0.68197} & \scalea{0.37992} & \scalea{0.43848} & \scalea{0.41915} \\
    \midrule

    \multirow{5}{*}{\rotatebox{90}{\scalebox{0.95}{ETTh2}}}
    & \scalea{0.125} & \scalea{\bst{0.00222}} & \scalea{\bst{0.03124}} & \scalea{0.00473} & \scalea{0.04606} & \scalea{0.04485} & \scalea{0.13849} & \scalea{0.00535} & \scalea{0.04495} & \scalea{\subbst{0.00334}} & \scalea{\subbst{0.04202}} & \scalea{1.15859} & \scalea{0.73871} & \scalea{0.02287} & \scalea{0.10885} & \scalea{3.12756} & \scalea{1.31746} & \scalea{1.45130} & \scalea{0.84467} \\
    
    & \scalea{0.25} & \scalea{\bst{0.00407}} & \scalea{\bst{0.04258}} & \scalea{0.00571} & \scalea{0.05096} & \scalea{0.04647} & \scalea{0.13551} & \scalea{0.00494} & \scalea{\subbst{0.04476}} & \scalea{\subbst{0.00457}} & \scalea{0.04950} & \scalea{0.75643} & \scalea{0.59747} & \scalea{0.02491} & \scalea{0.11511} & \scalea{3.12891} & \scalea{1.31754} & \scalea{1.45386} & \scalea{0.84388} \\
    
    & \scalea{0.375} & \scalea{\bst{0.00306}} & \scalea{\bst{0.03693}} & \scalea{\subbst{0.00452}} & \scalea{\subbst{0.04519}} & \scalea{0.04830} & \scalea{0.13583} & \scalea{0.00512} & \scalea{0.04697} & \scalea{0.00535} & \scalea{0.05363} & \scalea{0.59470} & \scalea{0.52371} & \scalea{0.01944} & \scalea{0.10277} & \scalea{3.12788} & \scalea{1.31728} & \scalea{1.45464} & \scalea{0.84194} \\
    
    & \scalea{0.5} & \scalea{\bst{0.00129}} & \scalea{\bst{0.02365}} & \scalea{\subbst{0.00249}} & \scalea{\subbst{0.03304}} & \scalea{0.04900} & \scalea{0.13469} & \scalea{0.00604} & \scalea{0.05224} & \scalea{0.00584} & \scalea{0.05547} & \scalea{0.35775} & \scalea{0.40497} & \scalea{0.01465} & \scalea{0.08746} & \scalea{3.12882} & \scalea{1.31733} & \scalea{1.45997} & \scalea{0.84644} \\
    
    & \scalea{Avg} & \scalea{\bst{0.00266}} & \scalea{\bst{0.03360}} & \scalea{\subbst{0.00436}} & \scalea{\subbst{0.04381}} & \scalea{0.04715} & \scalea{0.13613} & \scalea{0.00536} & \scalea{0.04723} & \scalea{0.00477} & \scalea{0.05016} & \scalea{0.71687} & \scalea{0.56622} & \scalea{0.02046} & \scalea{0.10355} & \scalea{3.12829} & \scalea{1.31740} & \scalea{1.45494} & \scalea{0.84423} \\
    \midrule

    \multirow{5}{*}{\rotatebox{90}{\scalebox{0.95}{ECL}}}
    & \scalea{0.125} & \scalea{\bst{0.00029}} & \scalea{\bst{0.01257}} & \scalea{\subbst{0.00187}} & \scalea{\subbst{0.03191}} & \scalea{0.01018} & \scalea{0.08255} & \scalea{0.00466} & \scalea{0.04597} & \scalea{0.03678} & \scalea{0.14078} & \scalea{0.32942} & \scalea{0.42254} & \scalea{0.10658} & \scalea{0.23808} & \scalea{0.45884} & \scalea{0.41005} & \scalea{0.20147} & \scalea{0.29003} \\
    
    & \scalea{0.25} & \scalea{\bst{0.00061}} & \scalea{\bst{0.01846}} & \scalea{\subbst{0.00216}} & \scalea{\subbst{0.03491}} & \scalea{0.01022} & \scalea{0.08269} & \scalea{0.00341} & \scalea{0.03978} & \scalea{0.04106} & \scalea{0.14847} & \scalea{0.28831} & \scalea{0.40031} & \scalea{0.10682} & \scalea{0.23654} & \scalea{0.45887} & \scalea{0.41007} & \scalea{0.20618} & \scalea{0.29771} \\
    
    & \scalea{0.375} & \scalea{\bst{0.00090}} & \scalea{\bst{0.02242}} & \scalea{\subbst{0.00211}} & \scalea{0.03473} & \scalea{0.01022} & \scalea{0.08258} & \scalea{0.00230} & \scalea{\subbst{0.03296}} & \scalea{0.04373} & \scalea{0.15224} & \scalea{0.25310} & \scalea{0.37626} & \scalea{0.10500} & \scalea{0.23415} & \scalea{0.45886} & \scalea{0.41006} & \scalea{0.20998} & \scalea{0.30337} \\
    
    & \scalea{0.5} & \scalea{\bst{0.00103}} & \scalea{\bst{0.02393}} & \scalea{0.00175} & \scalea{0.03177} & \scalea{0.01025} & \scalea{0.08284} & \scalea{\subbst{0.00171}} & \scalea{\subbst{0.02856}} & \scalea{0.04520} & \scalea{0.15380} & \scalea{0.21280} & \scalea{0.34526} & \scalea{0.10362} & \scalea{0.23127} & \scalea{0.45891} & \scalea{0.41011} & \scalea{0.21322} & \scalea{0.30764} \\
    
    & \scalea{Avg} & \scalea{\bst{0.00071}} & \scalea{\bst{0.01935}} & \scalea{\subbst{0.00197}} & \scalea{\subbst{0.03333}} & \scalea{0.01022} & \scalea{0.08266} & \scalea{0.00302} & \scalea{0.03682} & \scalea{0.04169} & \scalea{0.14882} & \scalea{0.27091} & \scalea{0.38609} & \scalea{0.10550} & \scalea{0.23501} & \scalea{0.45887} & \scalea{0.41007} & \scalea{0.20771} & \scalea{0.29969} \\
    \midrule

    \multirow{5}{*}{\rotatebox{90}{\scalebox{0.95}{Weather}}}
    & \scalea{0.125} & \scalea{\bst{0.00050}} & \scalea{\bst{0.01259}} & \scalea{\subbst{0.00061}} & \scalea{\subbst{0.01446}} & \scalea{0.00661} & \scalea{0.06123} & \scalea{0.00300} & \scalea{0.02110} & \scalea{0.00317} & \scalea{0.03646} & \scalea{0.36982} & \scalea{0.40486} & \scalea{0.00514} & \scalea{0.05275} & \scalea{0.40556} & \scalea{0.42631} & \scalea{0.13538} & \scalea{0.17599} \\
    
    & \scalea{0.25} & \scalea{\bst{0.00067}} & \scalea{\bst{0.01513}} & \scalea{\subbst{0.00073}} & \scalea{\subbst{0.01715}} & \scalea{0.00657} & \scalea{0.06105} & \scalea{0.00214} & \scalea{0.01830} & \scalea{0.00325} & \scalea{0.03900} & \scalea{0.29296} & \scalea{0.36483} & \scalea{0.00476} & \scalea{0.05019} & \scalea{0.40558} & \scalea{0.42635} & \scalea{0.13688} & \scalea{0.18177} \\
    
    & \scalea{0.375} & \scalea{\bst{0.00054}} & \scalea{\subbst{0.01443}} & \scalea{\subbst{0.00067}} & \scalea{0.01700} & \scalea{0.00658} & \scalea{0.06113} & \scalea{0.00088} & \scalea{\bst{0.00924}} & \scalea{0.00326} & \scalea{0.03997} & \scalea{0.17569} & \scalea{0.28913} & \scalea{0.00454} & \scalea{0.04811} & \scalea{0.40550} & \scalea{0.42633} & \scalea{0.13831} & \scalea{0.18700} \\
    
    & \scalea{0.5} & \scalea{\bst{0.00031}} & \scalea{\subbst{0.01107}} & \scalea{0.00047} & \scalea{0.01429} & \scalea{0.00650} & \scalea{0.06071} & \scalea{\subbst{0.00042}} & \scalea{\bst{0.00463}} & \scalea{0.00309} & \scalea{0.03929} & \scalea{0.12578} & \scalea{0.24598} & \scalea{0.00492} & \scalea{0.04961} & \scalea{0.40551} & \scalea{0.42632} & \scalea{0.13850} & \scalea{0.19051} \\
    
    & \scalea{Avg} & \scalea{\bst{0.00051}} & \scalea{\bst{0.01331}} & \scalea{\subbst{0.00062}} & \scalea{0.01573} & \scalea{0.00656} & \scalea{0.06103} & \scalea{0.00161} & \scalea{\subbst{0.01332}} & \scalea{0.00320} & \scalea{0.03868} & \scalea{0.24106} & \scalea{0.32620} & \scalea{0.00484} & \scalea{0.05016} & \scalea{0.40554} & \scalea{0.42633} & \scalea{0.13727} & \scalea{0.18382} \\
    \midrule

    \multicolumn{2}{c|}{\scalea{{$1^{\text{st}}$ Count}}} & \scalea{\bst{23}} & \scalea{\bst{19}} & \scalea{1} & \scalea{1} & \scalea{0} & \scalea{0} & \scalea{0} & \scalea{2} & \scalea{2} & \scalea{2} & \scalea{0} & \scalea{0} & \scalea{\subbst{4}} & \scalea{\subbst{6}} & \scalea{0} & \scalea{0} & \scalea{0} & \scalea{0} \\
    \bottomrule
  \end{tabular}
  \begin{tablenotes}
    \item  \scriptsize \textit{Note}: The input length is set to 96 for all baselines.  \bst{Bold} typeface highlights the top performance for each metric, while \subbst{underlined} text denotes the second-best results. \emph{Avg} indicates the results averaged over missing ratios: 0.125, 0.25, 0.375, 0.5.
\end{tablenotes}
\end{threeparttable}
\end{table}

\paragraph{Long-term forecast.} We provide comprehensive performance comparison on the long-term forecast task in \autoref{tab:longterm_app}. The iTransformer model is used to operationalize the FreDF paradigm. Despite the iTransformer's existing performance gap compared to other baseline models, the incorporation of FreDF enhances its performance in the majority of cases, securing the lowest MSE in 31 out of 45 cases and MAE in 40 out of 45 cases. The few instances where FreDF does not achieve the lowest MSE are attributed to the inherent superiority of other models over iTransformer in specific datasets (for example, FreTS versus iTransformer on the Weather dataset).

\paragraph{Short-term forecast.} We investigate the short-term forecast task in \autoref{tab:short_app}, with FreTS~\cite{FreTS} serving as the forecasting model in the FreDF implementation. Consistent with the long-term forecasting results, FreDF enhances FreTS's performance in most instances.
Notably, there are three cases where FreTS outperforms FreDF. This occurs because the loss weight $\alpha$ is tuned to minimize the validation error averaged across all forecast lengths instead of focusing on specific lengths. While it is feasible to fine-tune $\alpha$ for each forecast length, we did not use this approach, as the current results suffice to demonstrate FreDF's effectiveness.

\paragraph{Missing data imputation.} We investigate the imputation task in \autoref{tab:imp_app}, with iTransformer serving as the forecasting model in the FreDF implementation.  All models are trained using an autoencoding approach: given input sequences with missing entries, they are tasked with recovering the non-missing entries during training, while they are employed to impute the missing entries during inference. The results demonstrate FreDF's efficacy in this task, significantly improving the performance of iTransformer and outperforming most competitive methods. A unique aspect of this task is the irregularity of the label sequences caused by the missing entries, which disrupts the physical semantics related to the Fourier transform. This indicates that the effectiveness of FreDF does not stem from the semantic characteristics of the Fourier transform itself, but rather from its ability to align the properties of time series data with the implicit assumptions of the DF paradigm, specifically the conditional independence of labels.

\paragraph{Showcases.} We provide additional showcases illustrating the change of forecast sequences in \autoref{fig:pred_app_336_1} and \ref{fig:pred_app_336_2}. Overall, FreDF effectively mitigates blurs and captures high frequency components. These successes can be attributed to FreDF's unique capability to operate in the frequency domain, where the challenges of autocorrelation are mitigated, and the expression of high-frequency components becomes straightforward.

\begin{figure*}
\begin{center}
\includegraphics[width=0.3\linewidth]{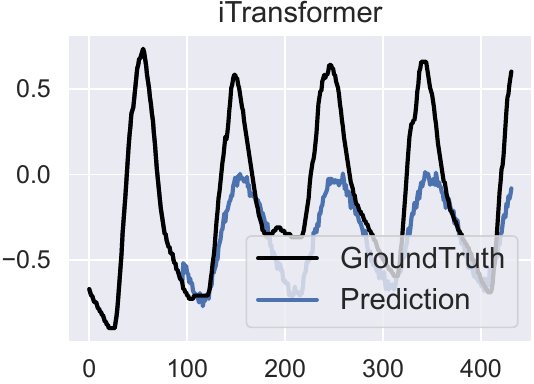}\hfill
\includegraphics[width=0.3\linewidth]{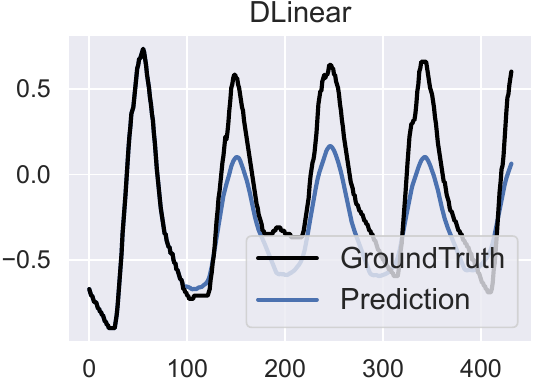}\hfill
\includegraphics[width=0.3\linewidth]{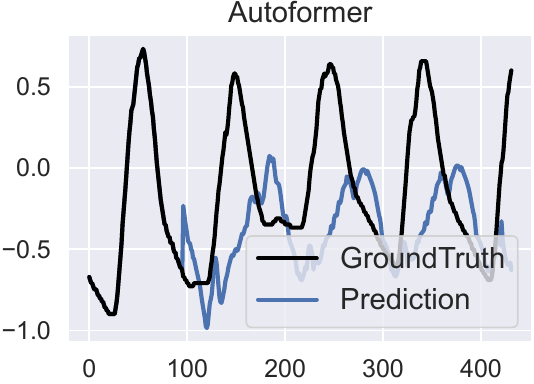}

\subfigure[\hspace{-20pt}]{\includegraphics[width=0.3\linewidth]{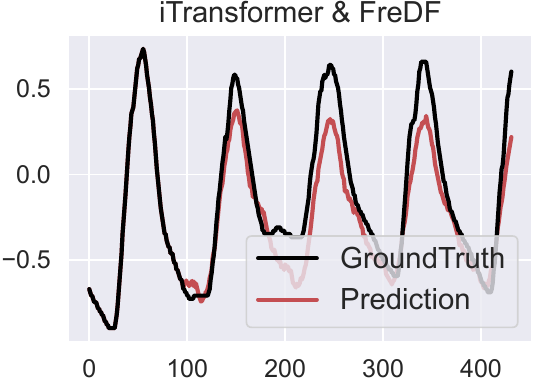}}\hfill
\subfigure[\hspace{-20pt}]{\includegraphics[width=0.3\linewidth]{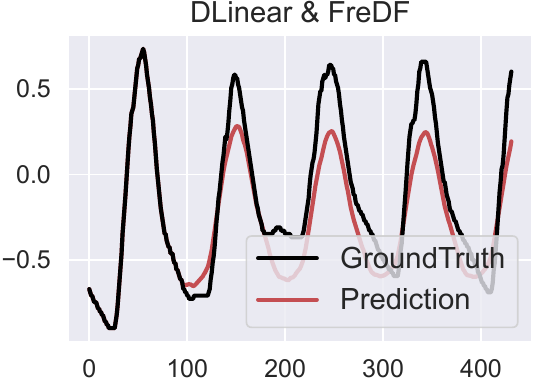}}\hfill
\subfigure[\hspace{-20pt}]{\includegraphics[width=0.3\linewidth]{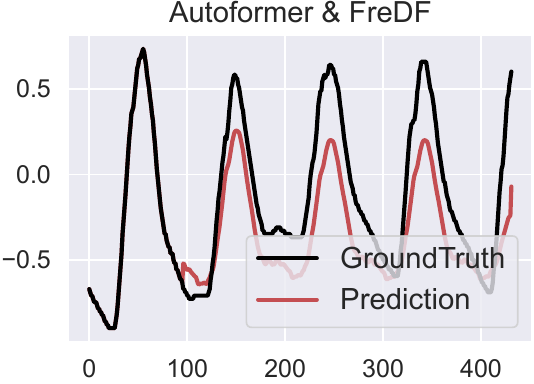}}
\caption{The forecast sequences generated with DF and FreDF. The forecast length is set to 336 and the experiment is conducted on a snapshot of ETTm2.}
\label{fig:pred_app_336_1}
\end{center}
\end{figure*}

\begin{figure*}
\begin{center}
\includegraphics[width=0.3\linewidth]{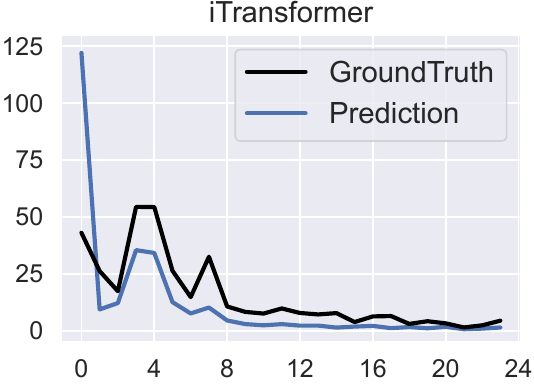}\hfill
\includegraphics[width=0.3\linewidth]{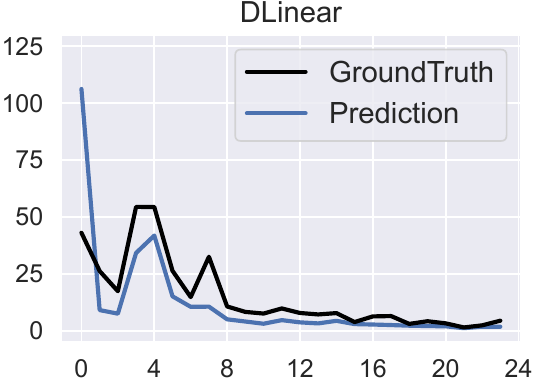}\hfill
\includegraphics[width=0.3\linewidth]{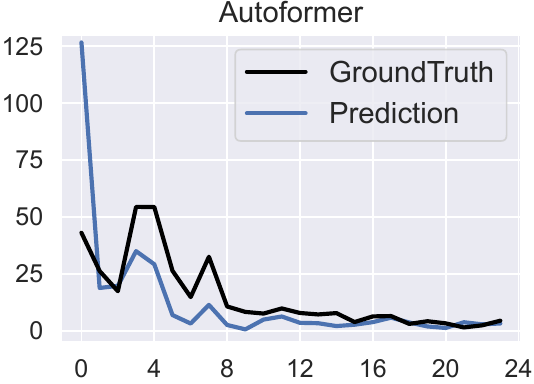}

\subfigure[\hspace{-20pt}]{\includegraphics[width=0.3\linewidth]{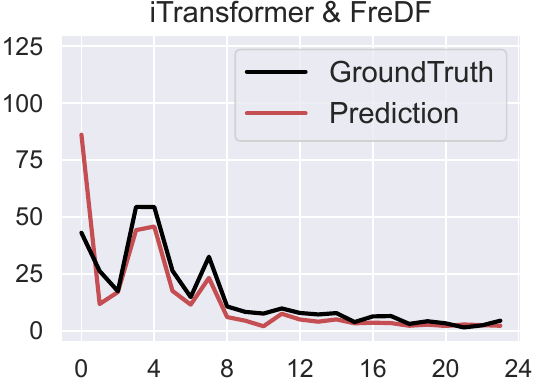}}\hfill
\subfigure[\hspace{-20pt}]{\includegraphics[width=0.3\linewidth]{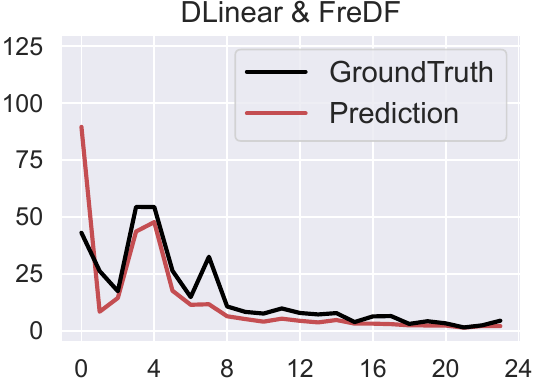}}\hfill
\subfigure[\hspace{-20pt}]{\includegraphics[width=0.3\linewidth]{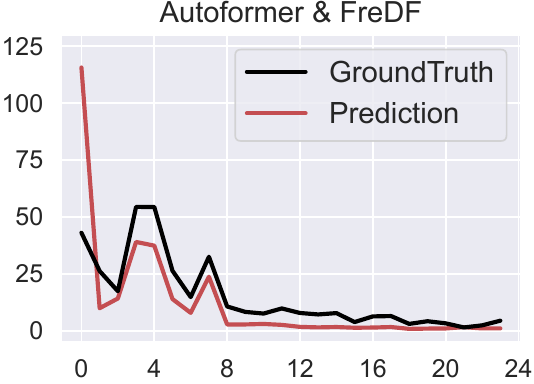}}
\caption{The spectrum of forecast sequences generated with DF and FreDF. The forecast length is set to 336 and the experiment is conducted on a snapshot of ETTm2.  Only the first 24 frequencies of the spectrum are presented.}
\label{fig:pred_app_336_feq_1}
\end{center}
\end{figure*}

\begin{figure*}
\begin{center}
\includegraphics[width=0.3\linewidth]{fig/pred/predict_1600_336_iTransformer_tmp.pdf}\hfill
\includegraphics[width=0.3\linewidth]{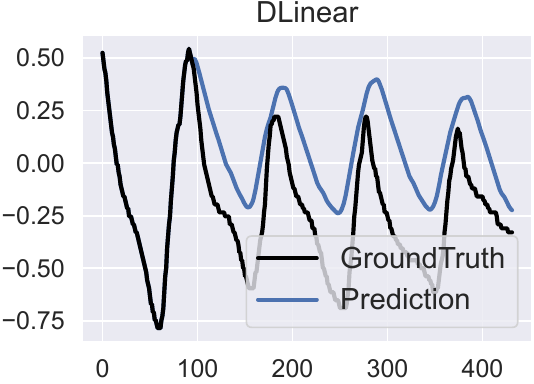}\hfill
\includegraphics[width=0.3\linewidth]{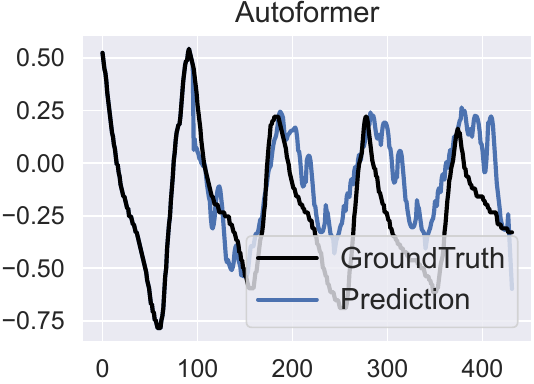}

\subfigure[\hspace{-20pt}]{\includegraphics[width=0.3\linewidth]{fig/pred/predict_1600_336_iTransformer_feq.pdf}}\hfill
\subfigure[\hspace{-20pt}]{\includegraphics[width=0.3\linewidth]{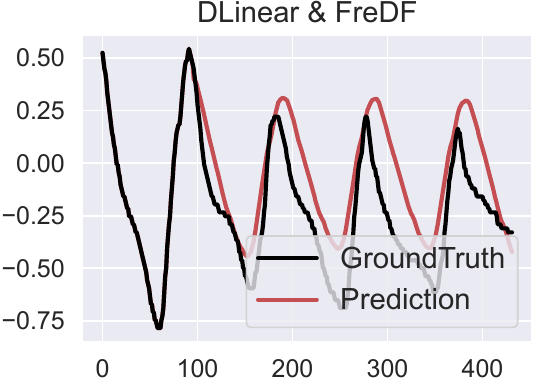}}\hfill
\subfigure[\hspace{-20pt}]{\includegraphics[width=0.3\linewidth]{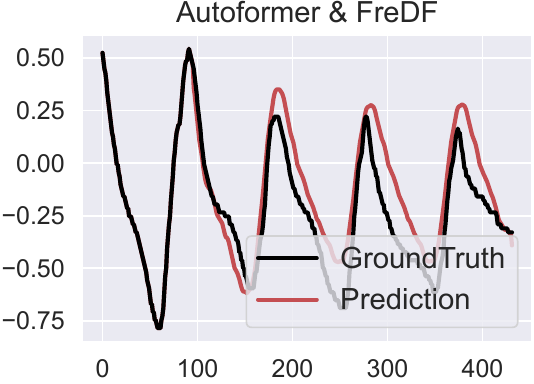}}
\caption{The forecast sequences generated with DF and FreDF. The forecast length is set to 336 and the experiment is conducted on another snapshot of ETTm2.}
\label{fig:pred_app_336_2}
\end{center}
\end{figure*}

\begin{figure*}
\begin{center}
\includegraphics[width=0.3\linewidth]{fig/pred/predict_1600_336_iTransformer_tmp_feq.pdf}\hfill
\includegraphics[width=0.3\linewidth]{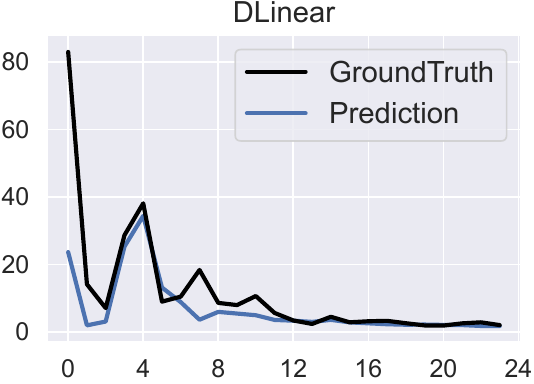}\hfill
\includegraphics[width=0.3\linewidth]{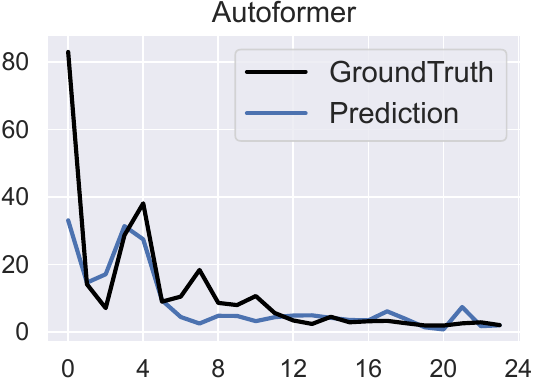}

\subfigure[\hspace{-20pt}]{\includegraphics[width=0.3\linewidth]{fig/pred/predict_1600_336_iTransformer_feq_feq.pdf}}\hfill
\subfigure[\hspace{-20pt}]{\includegraphics[width=0.3\linewidth]{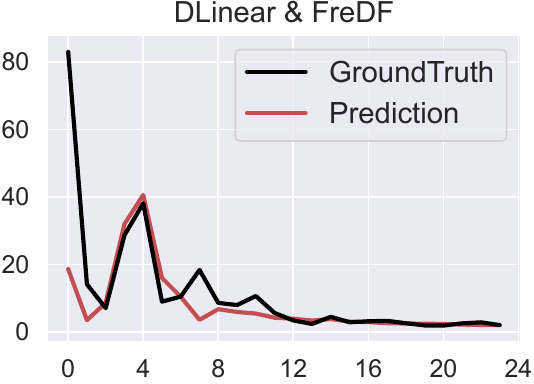}}\hfill
\subfigure[\hspace{-20pt}]{\includegraphics[width=0.3\linewidth]{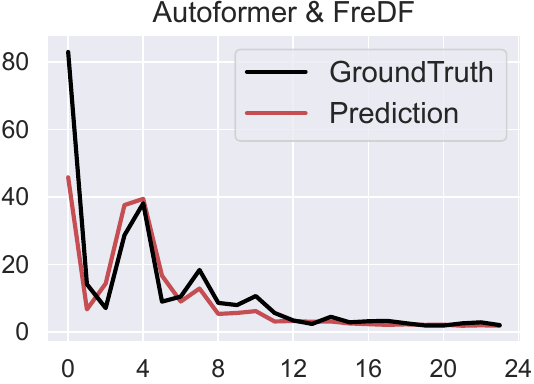}}
\caption{The spectrum of forecast sequences generated with DF and FreDF. The forecast length is set to 336 and the experiment is conducted on another snapshot of ETTm2. Only the first 24 frequencies of the spectrum are presented.}
\label{fig:pred_app_336_feq_2}
\end{center}
\end{figure*}

\subsection{Generalization studies}\label{sec:generalize_app}

In this section, we further explore the versatility of FreDF in improving various forecasting models: iTransformer, DLinear, Autoformer, and Transformer. The results, displayed in \autoref{fig:backbone_app}, encompass five distinct datasets and are averaged over forecast lengths (96, 192, 336, 720), with error bars reflecting 95\% confidence intervals.
FreDF significantly improves the performance of these forecasting models, particularly benefiting Transformer-based architectures like Autoformer and Transformer.
These results affirm FreDF's utility in enhancing neural forecasting models, highlighting its potential as a versatile training methodology in time series forecasting.

\begin{figure*}
\begin{center}
\includegraphics[width=0.195\linewidth]{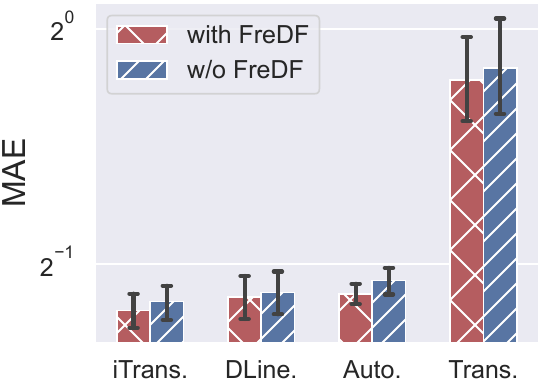}
\includegraphics[width=0.195\linewidth]{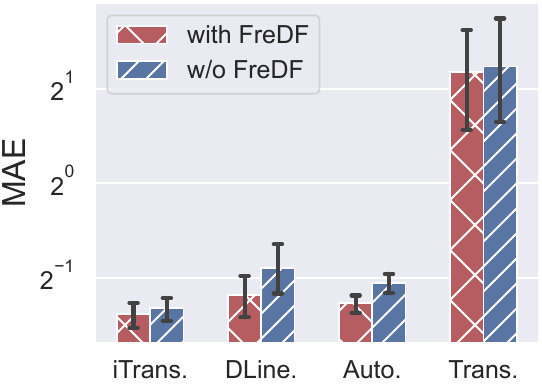}
\includegraphics[width=0.195\linewidth]{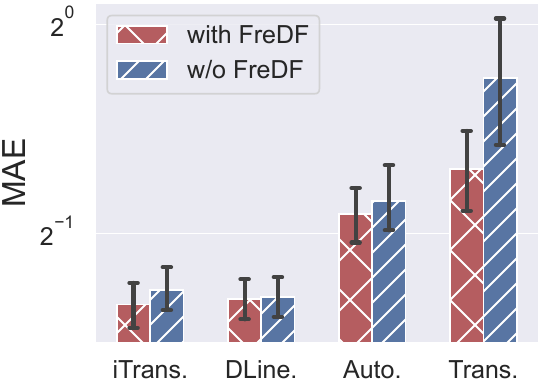}
\includegraphics[width=0.195\linewidth]{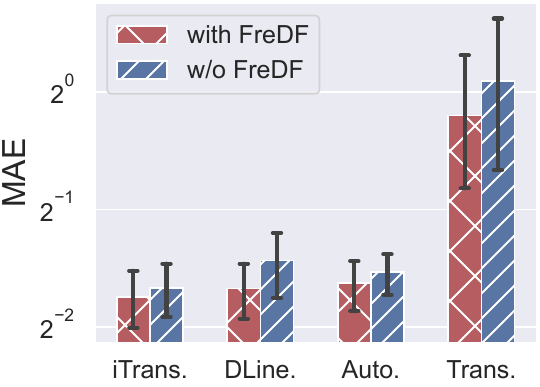}
\includegraphics[width=0.195\linewidth]{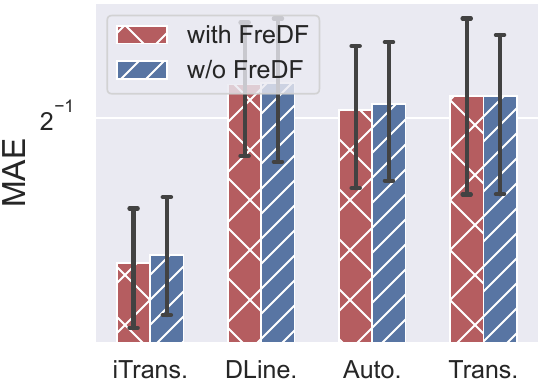}
\subfigure[ETTh1]{\includegraphics[width=0.195\linewidth]{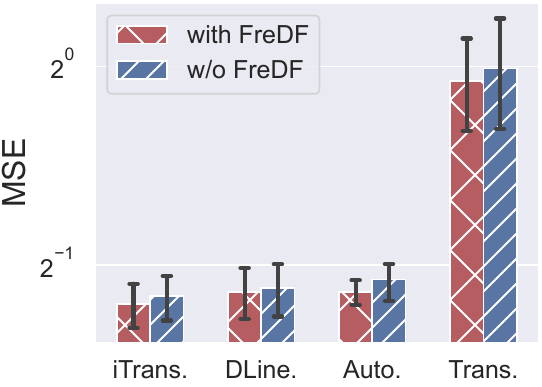}}
\subfigure[ETTh2]{\includegraphics[width=0.195\linewidth]{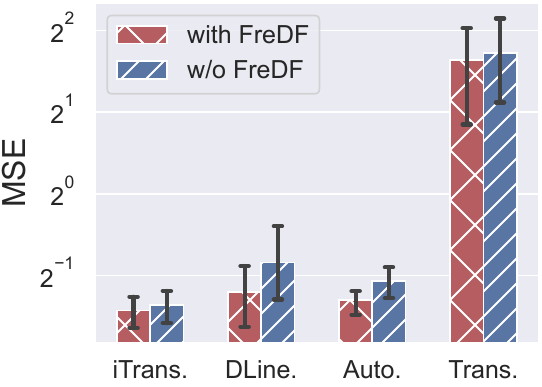}}
\subfigure[ETTm1]{\includegraphics[width=0.195\linewidth]{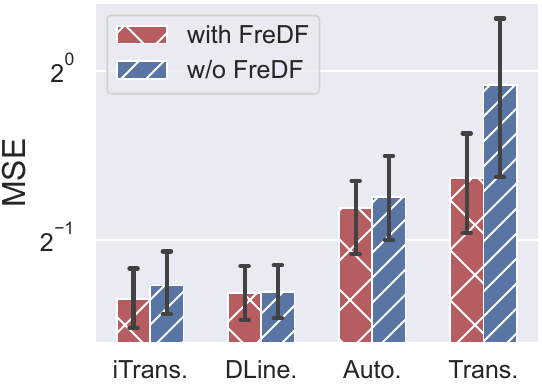}}
\subfigure[ETTm2]{\includegraphics[width=0.195\linewidth]{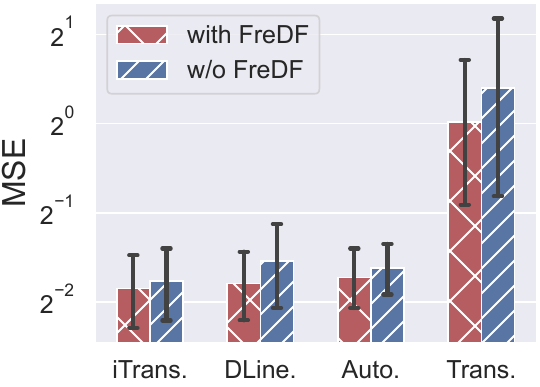}}
\subfigure[Traffic]{\includegraphics[width=0.195\linewidth]{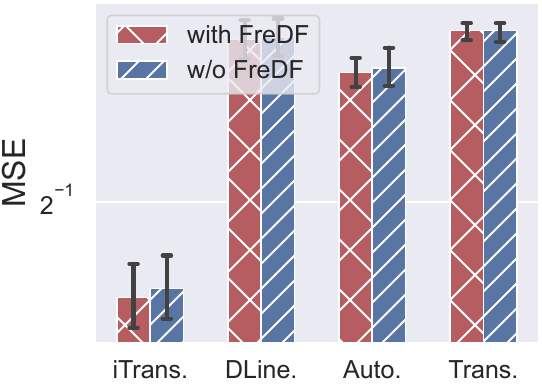}}
\caption{Performance of different forecast models with and without FreDF. The forecast errors are averaged over forecast lengths and the error bars represent 95\% confidence intervals. }
\label{fig:backbone_app}
\end{center}
\end{figure*}

\subsection{Hyperparameter sensitivity}\label{sec:app_sense}

In this section, we examine how adjusting the frequency loss weight $\alpha$ impacts the performance of FreDF across three models: iTransformer, Autoformer, and DLinear, with the results in \autoref{fig:sensi-itrans}, \ref{fig:sensi-auto}, and \ref{fig:sensi-dlinear}.
We find that increasing $\alpha$ from 0 to 1 generally reduces forecast error across various datasets and forecast lengths, highlighting the benefits of a frequency domain learning approach.
Notably, the minimum forecast error often occurs at $\alpha$ values close to 1, rather than at 1 itself; for instance, 0.8 is optimal for the ETTh1 dataset. This suggests that integrating supervisory signals from both time and frequency domains enhances forecasting performance. However, the improvement may be incremental compared to simply setting $\alpha=1$.
\begin{figure*}
    \subfigure[\hspace{-20pt}]{
    \includegraphics[width=0.195\linewidth]{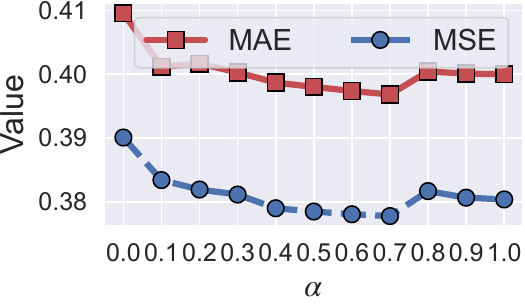}
    \includegraphics[width=0.195\linewidth]{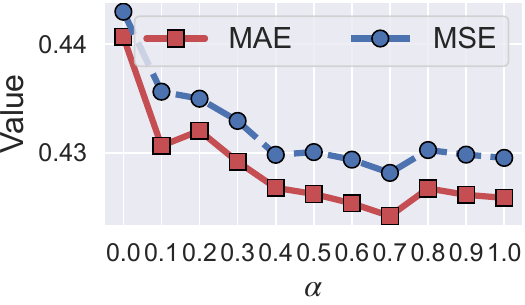}
    \includegraphics[width=0.195\linewidth]{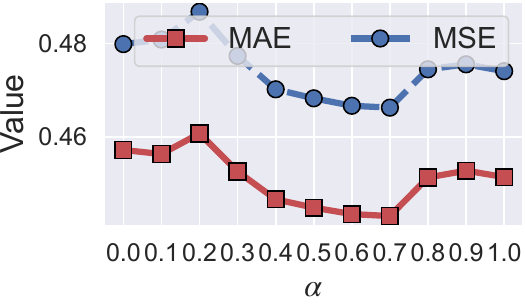}
    \includegraphics[width=0.195\linewidth]{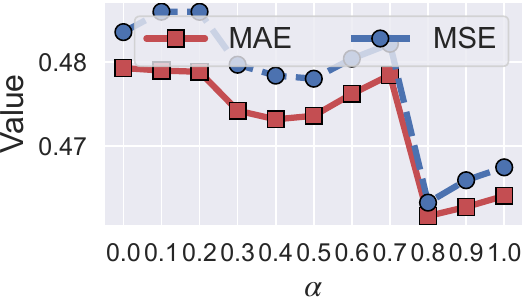}
    \includegraphics[width=0.195\linewidth]{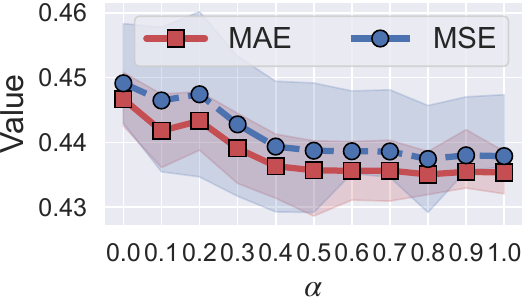}}

    \subfigure[\hspace{-20pt}]{
    \includegraphics[width=0.195\linewidth]{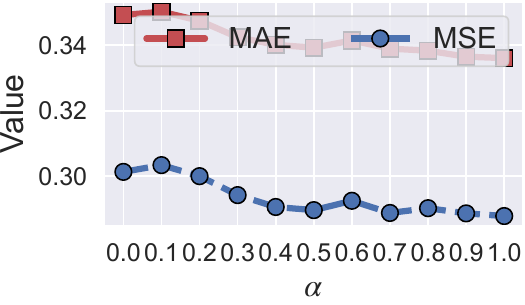}
    \includegraphics[width=0.195\linewidth]{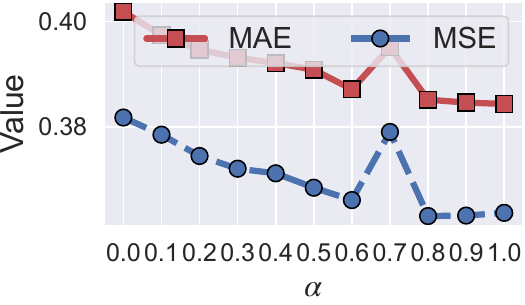}
    \includegraphics[width=0.195\linewidth]{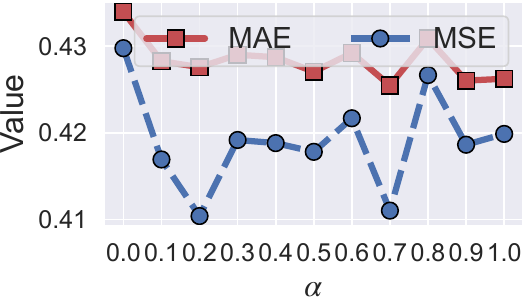}
    \includegraphics[width=0.195\linewidth]{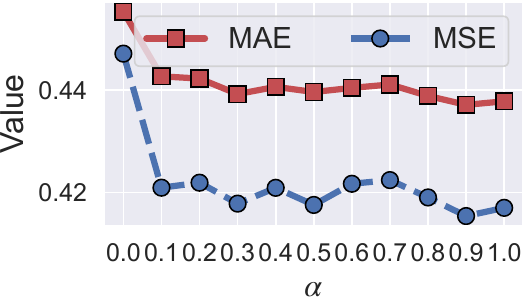}
    \includegraphics[width=0.195\linewidth]{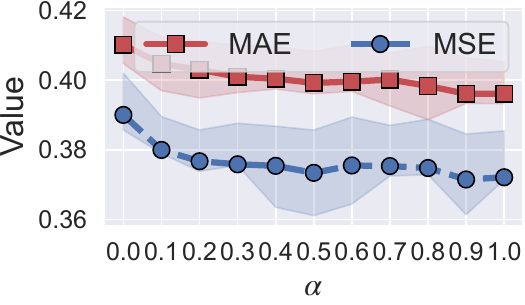}}

    \subfigure[\hspace{-20pt}]{
    \includegraphics[width=0.195\linewidth]{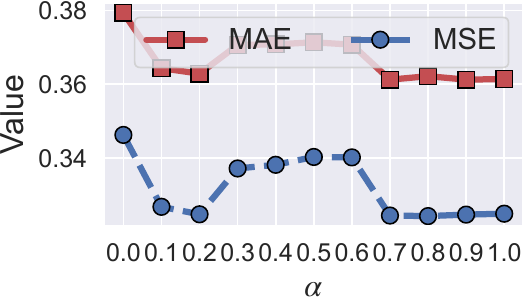}
    \includegraphics[width=0.195\linewidth]{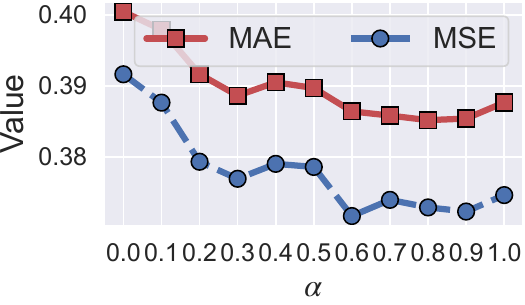}
    \includegraphics[width=0.195\linewidth]{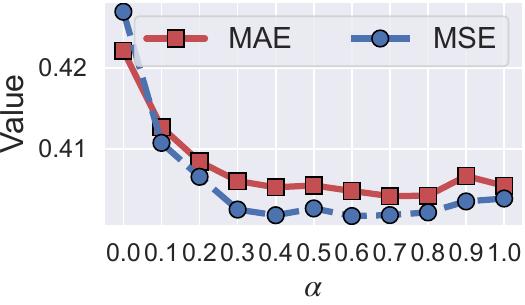}
    \includegraphics[width=0.195\linewidth]{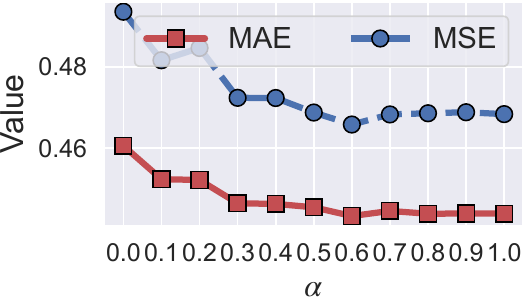}
    \includegraphics[width=0.195\linewidth]{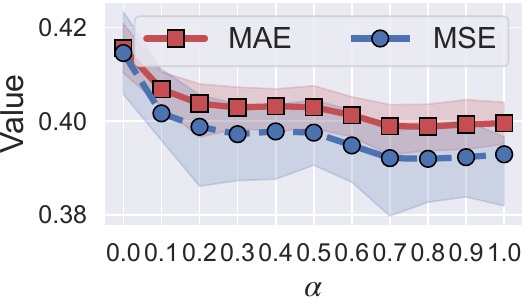}}

    \subfigure[\hspace{-20pt}]{
    \includegraphics[width=0.195\linewidth]{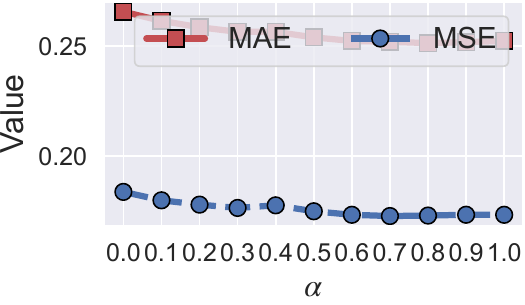}
    \includegraphics[width=0.195\linewidth]{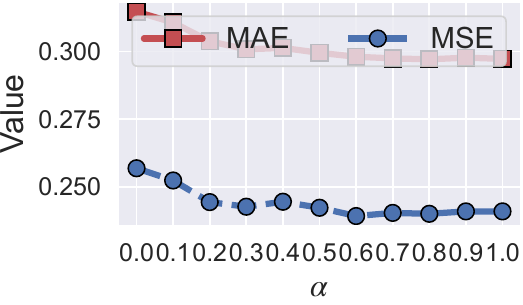}
    \includegraphics[width=0.195\linewidth]{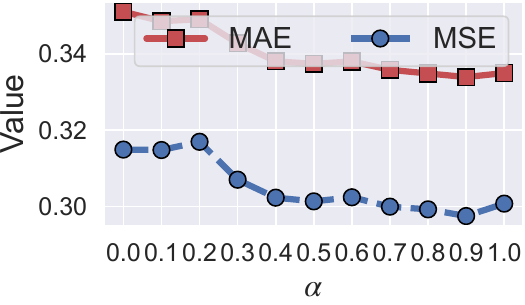}
    \includegraphics[width=0.195\linewidth]{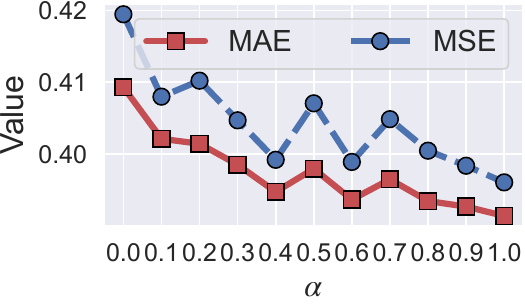}
    \includegraphics[width=0.195\linewidth]{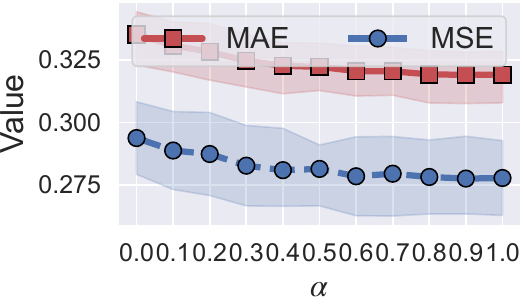}}

    \subfigure[\hspace{-20pt}]{
    \includegraphics[width=0.195\linewidth]{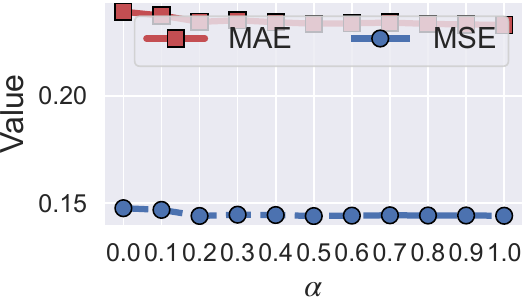}
    \includegraphics[width=0.195\linewidth]{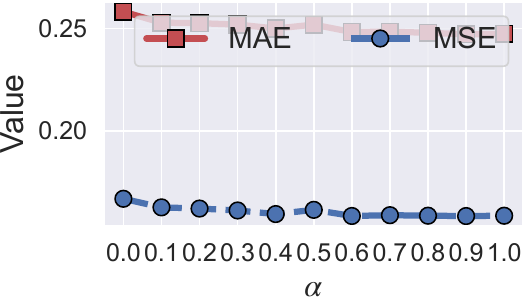}
    \includegraphics[width=0.195\linewidth]{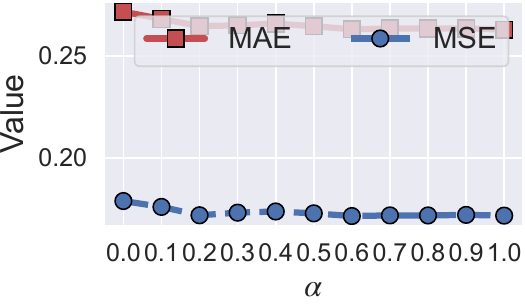}
    \includegraphics[width=0.195\linewidth]{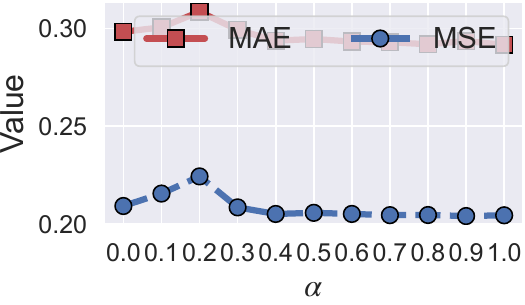}
    \includegraphics[width=0.195\linewidth]{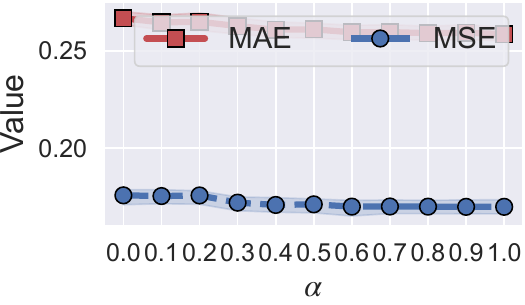}}

    \subfigure[\hspace{-20pt}]{
    \includegraphics[width=0.195\linewidth]{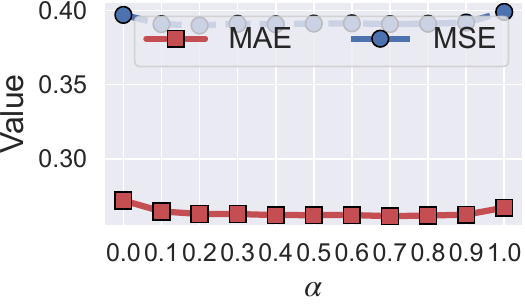}
    \includegraphics[width=0.195\linewidth]{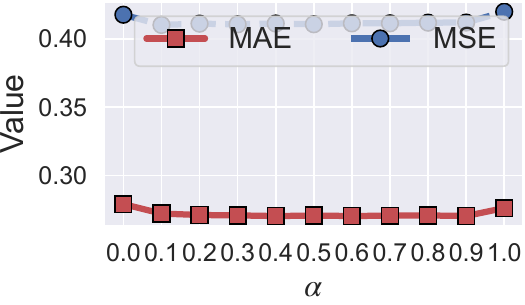}
    \includegraphics[width=0.195\linewidth]{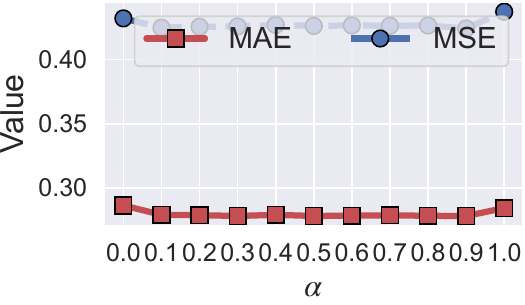}
    \includegraphics[width=0.195\linewidth]{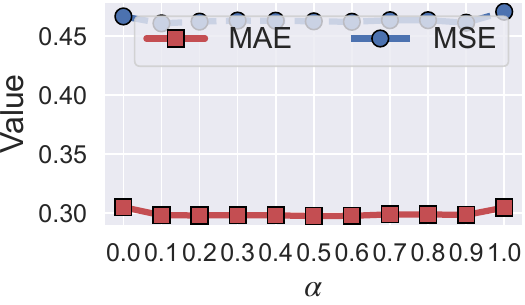}
    \includegraphics[width=0.195\linewidth]{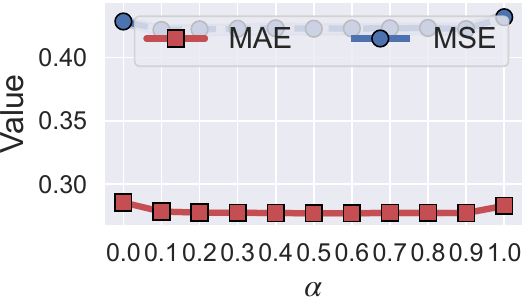}}

    \subfigure[\hspace{-20pt}]{
    \includegraphics[width=0.195\linewidth]{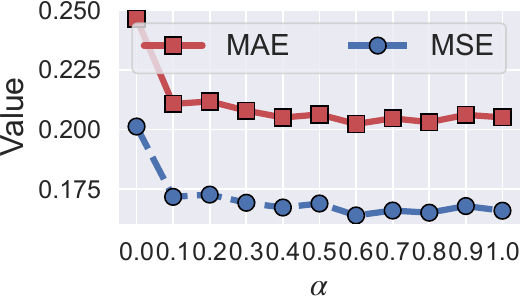}
    \includegraphics[width=0.195\linewidth]{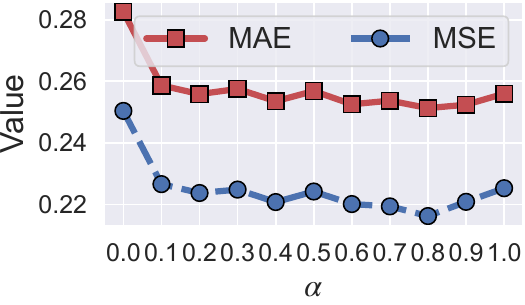}
    \includegraphics[width=0.195\linewidth]{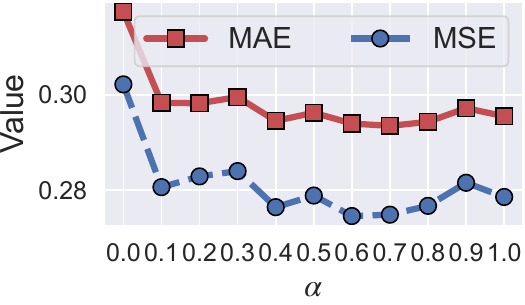}
    \includegraphics[width=0.195\linewidth]{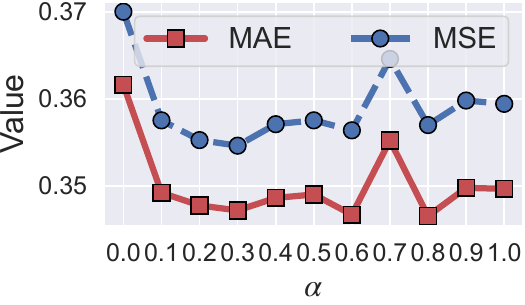}
    \includegraphics[width=0.195\linewidth]{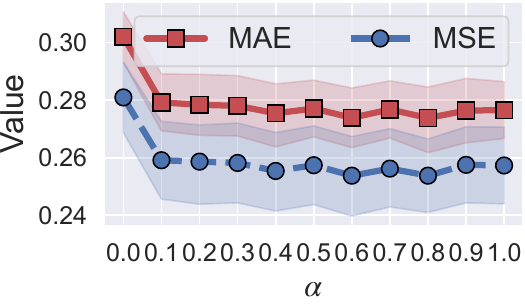}}
    \caption{FreDF improves iTransformer performance given a wide range of frequency loss weight $\alpha$. These experiments are conducted on ETTh1 (a), ETTh2 (b), ETTm1 (c), ETTm2 (d), ECL (e), Traffic (f) and Weather (g) datasets. Different columns correspond to different forecast lengths (from left to right: 96, 192, 336, 720, and their average with shaded areas being 50\% confidence intervals). }
\label{fig:sensi-itrans}
\end{figure*}

\begin{figure*}
    \subfigure[\hspace{-20pt}]{
    \includegraphics[width=0.195\linewidth]{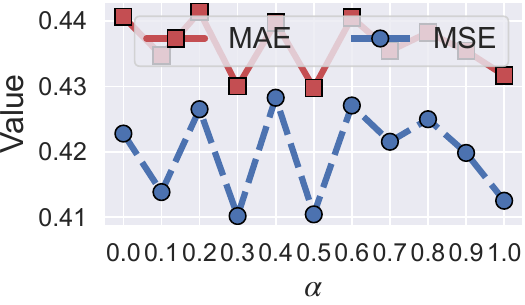}
    \includegraphics[width=0.195\linewidth]{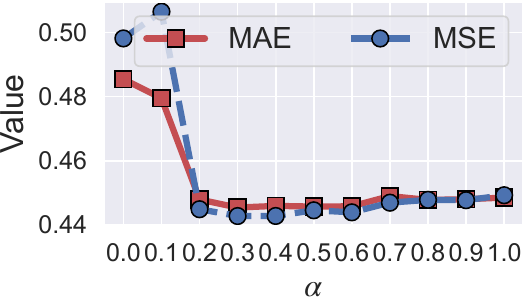}
    \includegraphics[width=0.195\linewidth]{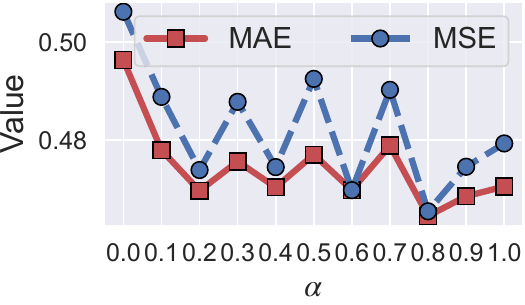}
    \includegraphics[width=0.195\linewidth]{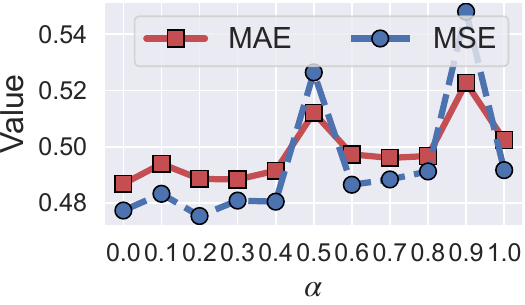}
    \includegraphics[width=0.195\linewidth]{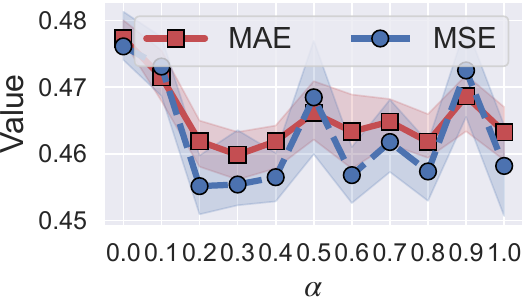}}

    \subfigure[\hspace{-20pt}]{
    \includegraphics[width=0.195\linewidth]{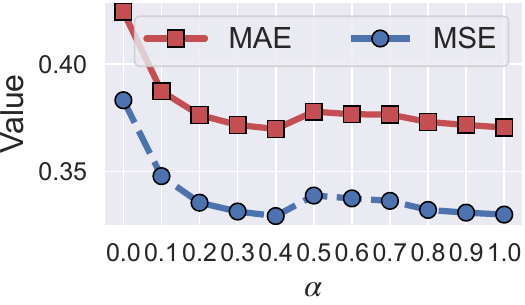}
    \includegraphics[width=0.195\linewidth]{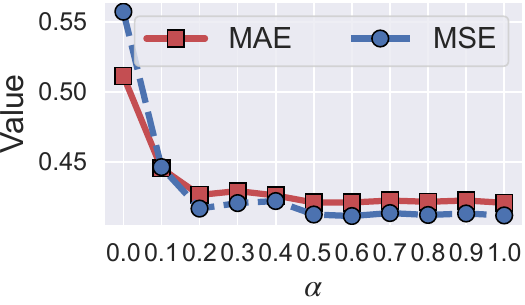}
    \includegraphics[width=0.195\linewidth]{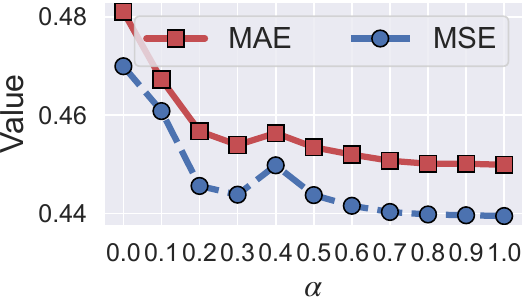}
    \includegraphics[width=0.195\linewidth]{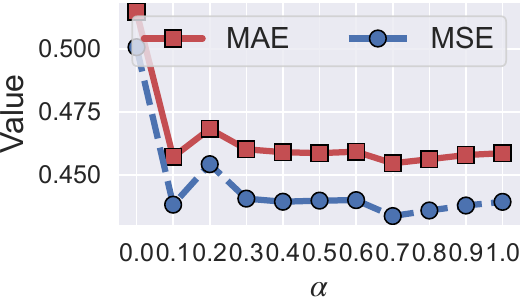}
    \includegraphics[width=0.195\linewidth]{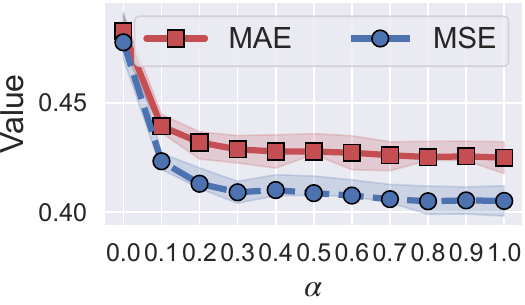}}

    \subfigure[\hspace{-20pt}]{
    \includegraphics[width=0.195\linewidth]{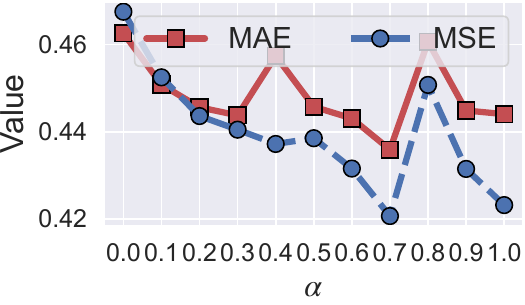}
    \includegraphics[width=0.195\linewidth]{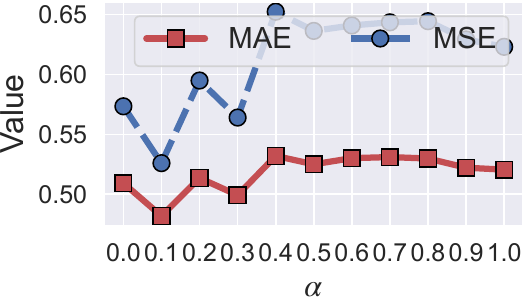}
    \includegraphics[width=0.195\linewidth]{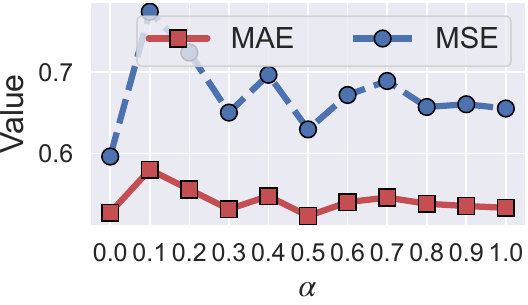}
    \includegraphics[width=0.195\linewidth]{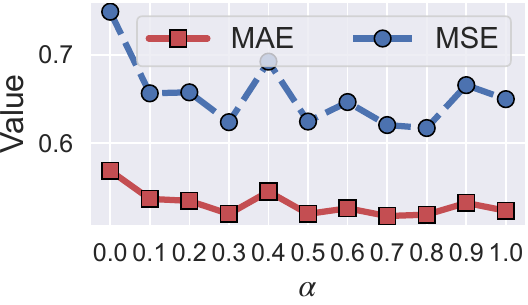}
    \includegraphics[width=0.195\linewidth]{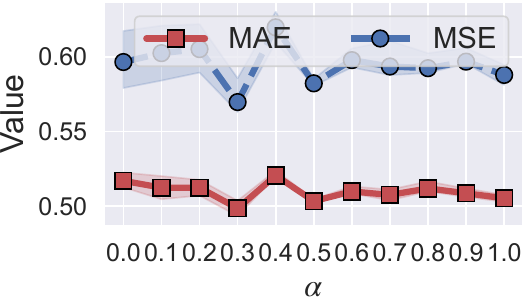}}

    \subfigure[\hspace{-20pt}]{
    \includegraphics[width=0.195\linewidth]{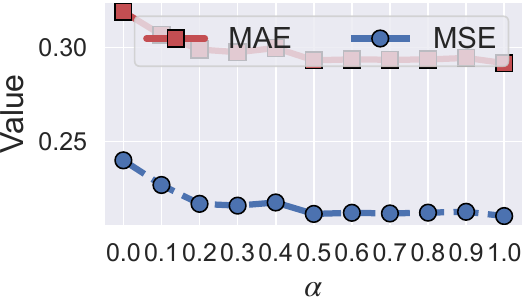}
    \includegraphics[width=0.195\linewidth]{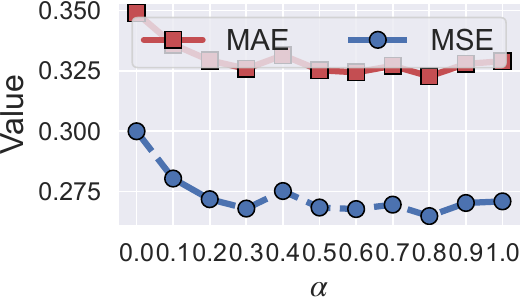}
    \includegraphics[width=0.195\linewidth]{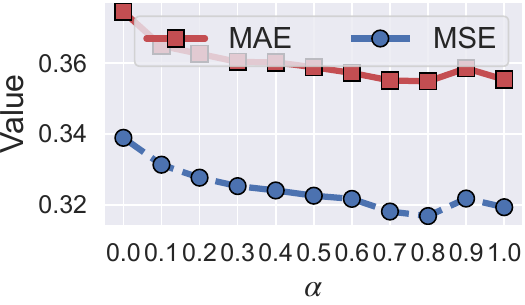}
    \includegraphics[width=0.195\linewidth]{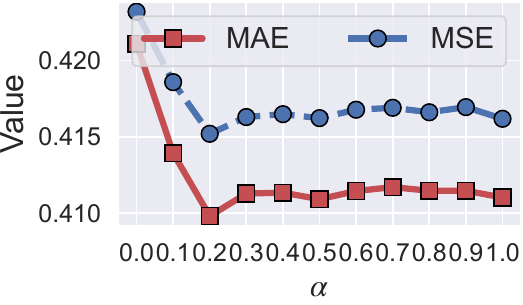}
    \includegraphics[width=0.195\linewidth]{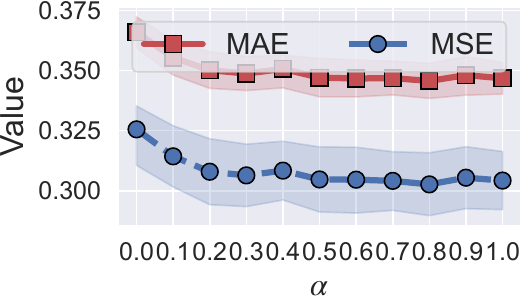}}

    \subfigure[\hspace{-20pt}]{
    \includegraphics[width=0.195\linewidth]{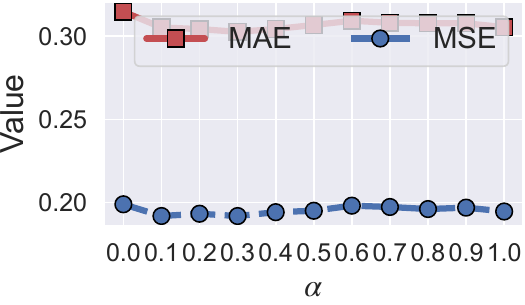}
    \includegraphics[width=0.195\linewidth]{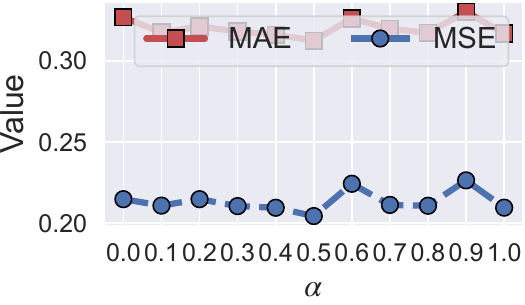}
    \includegraphics[width=0.195\linewidth]{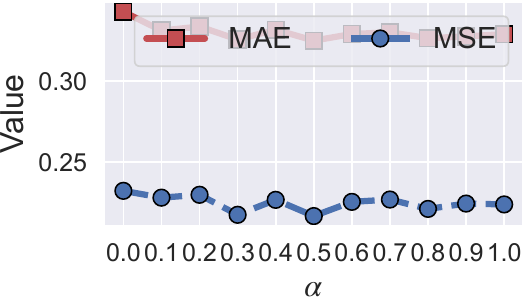}
    \includegraphics[width=0.195\linewidth]{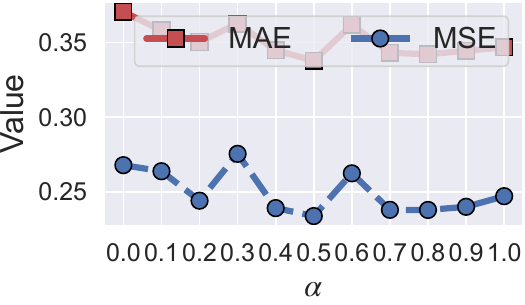}
    \includegraphics[width=0.195\linewidth]{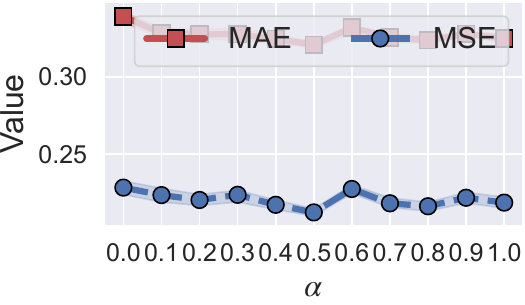}}

    \subfigure[\hspace{-20pt}]{
    \includegraphics[width=0.195\linewidth]{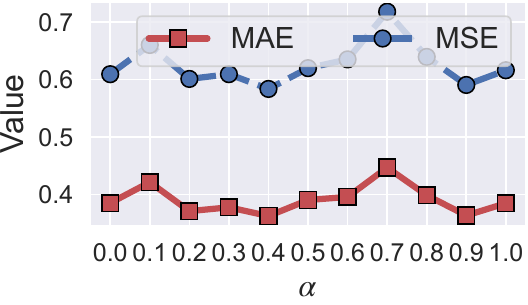}
    \includegraphics[width=0.195\linewidth]{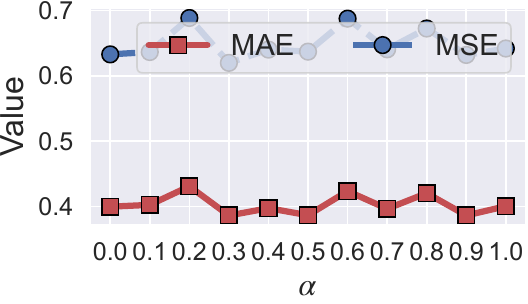}
    \includegraphics[width=0.195\linewidth]{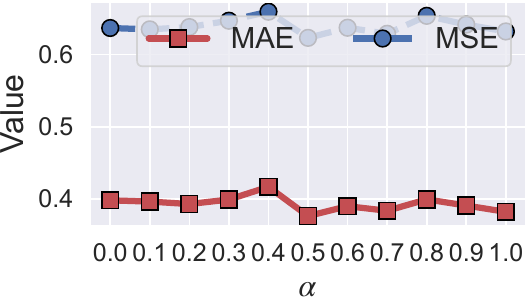}
    \includegraphics[width=0.195\linewidth]{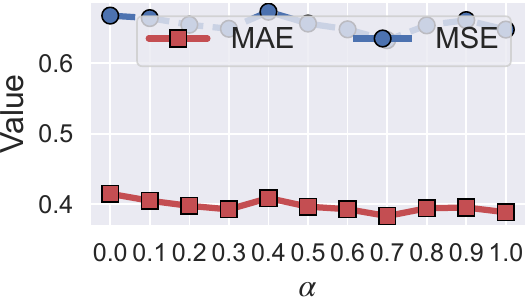}
    \includegraphics[width=0.195\linewidth]{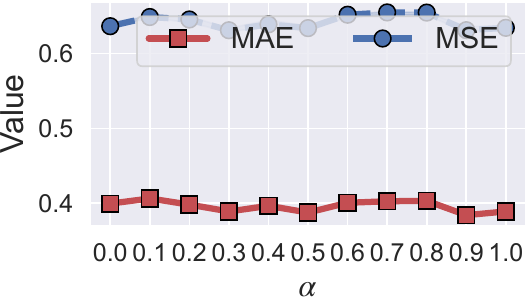}}

    \subfigure[\hspace{-20pt}]{
    \includegraphics[width=0.195\linewidth]{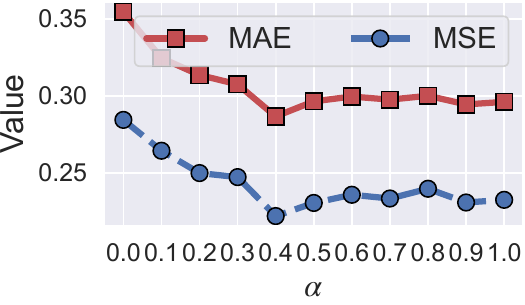}
    \includegraphics[width=0.195\linewidth]{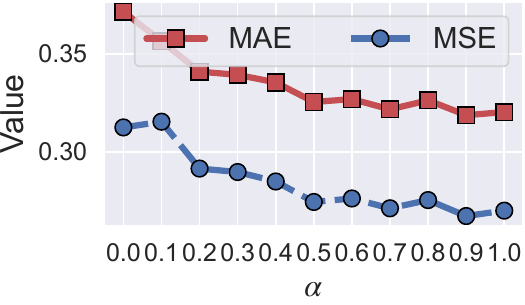}
    \includegraphics[width=0.195\linewidth]{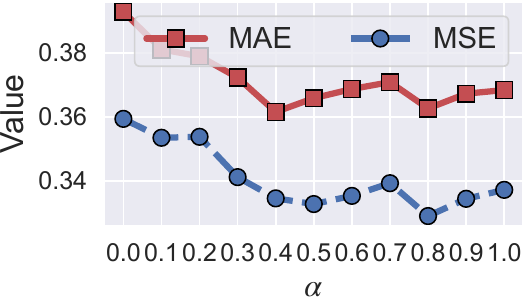}
    \includegraphics[width=0.195\linewidth]{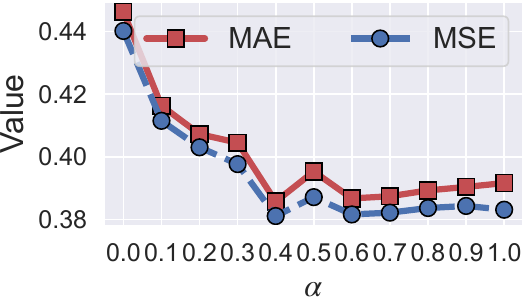}
    \includegraphics[width=0.195\linewidth]{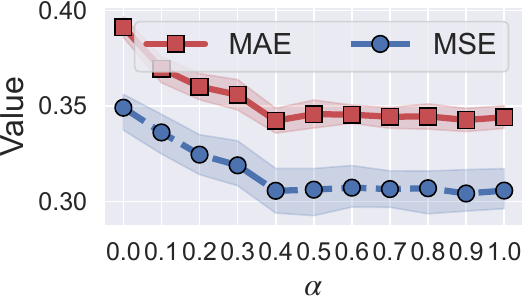}}
    \caption{FreDF improves Autoformer performance given a wide range of frequency loss weight $\alpha$. These experiments are conducted on ETTh1 (a), ETTh2 (b), ETTm1 (c), ETTm2 (d), ECL (e), Traffic (f) and Weather (g) datasets. Different columns correspond to different forecast lengths (from left to right: 96, 192, 336, 720, and their average with shaded areas being 50\% confidence intervals). }
\label{fig:sensi-auto}
\end{figure*}

\begin{figure*}
    \subfigure[\hspace{-20pt}]{
    \includegraphics[width=0.195\linewidth]{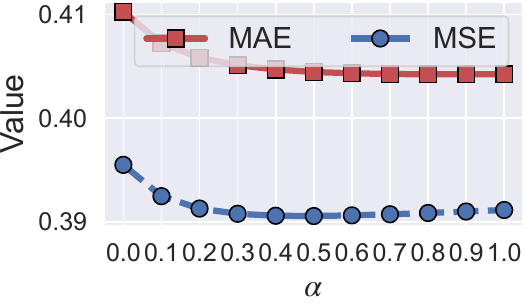}
    \includegraphics[width=0.195\linewidth]{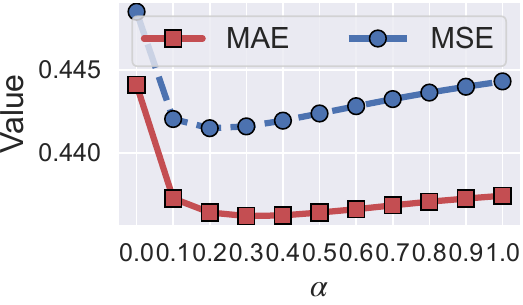}
    \includegraphics[width=0.195\linewidth]{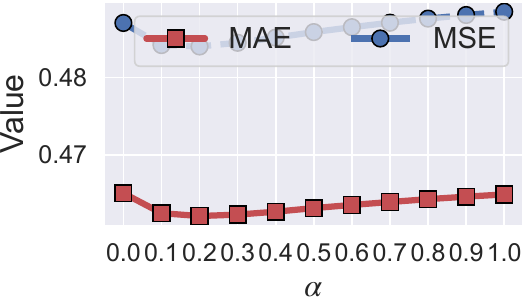}
    \includegraphics[width=0.195\linewidth]{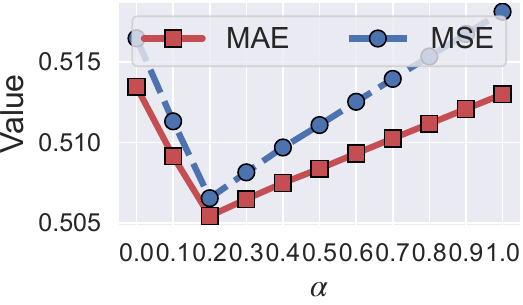}
    \includegraphics[width=0.195\linewidth]{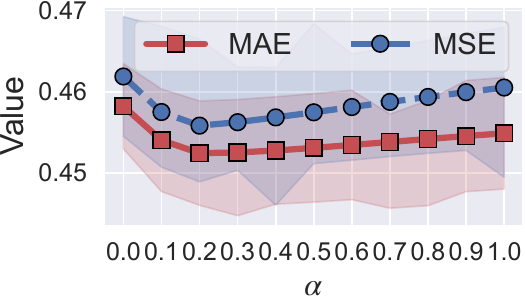}}

    \subfigure[\hspace{-20pt}]{
    \includegraphics[width=0.195\linewidth]{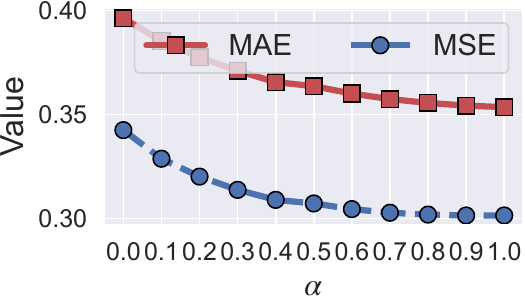}
    \includegraphics[width=0.195\linewidth]{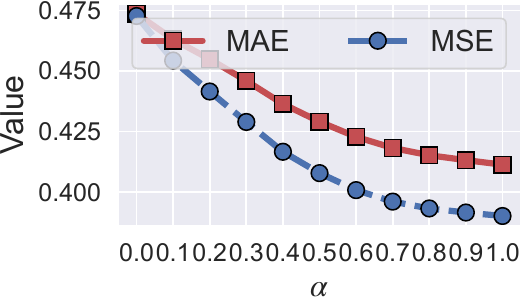}
    \includegraphics[width=0.195\linewidth]{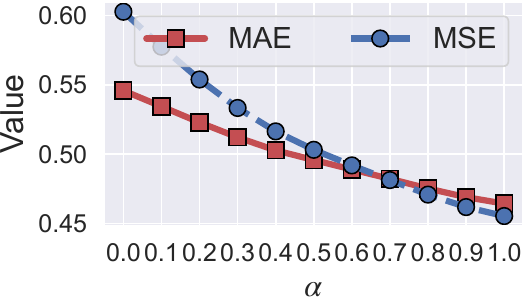}
    \includegraphics[width=0.195\linewidth]{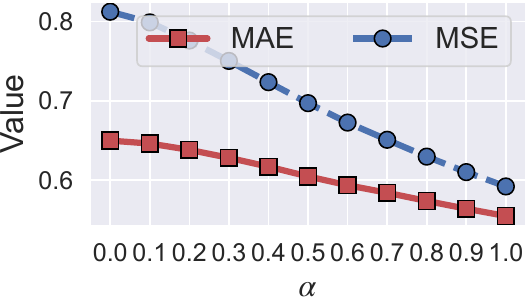}
    \includegraphics[width=0.195\linewidth]{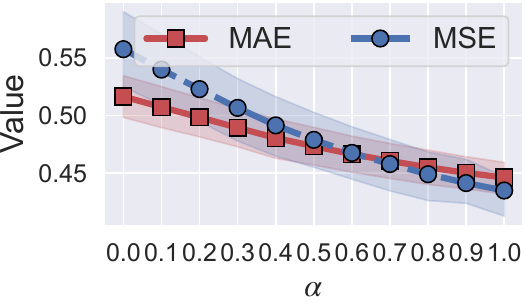}}

    \subfigure[\hspace{-20pt}]{
    \includegraphics[width=0.195\linewidth]{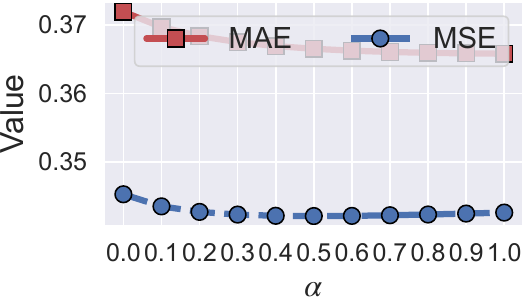}
    \includegraphics[width=0.195\linewidth]{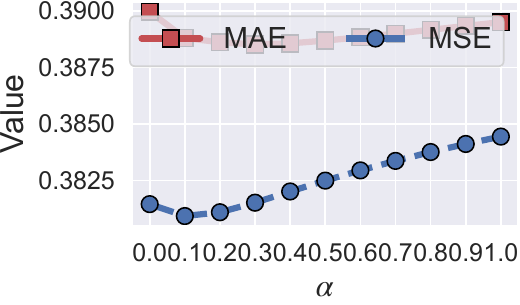}
    \includegraphics[width=0.195\linewidth]{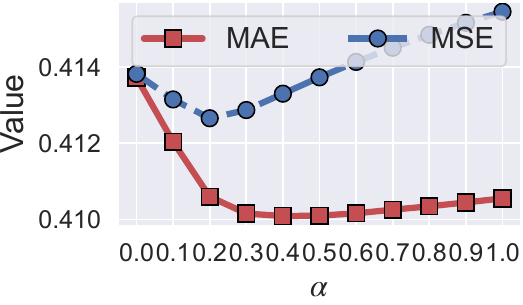}
    \includegraphics[width=0.195\linewidth]{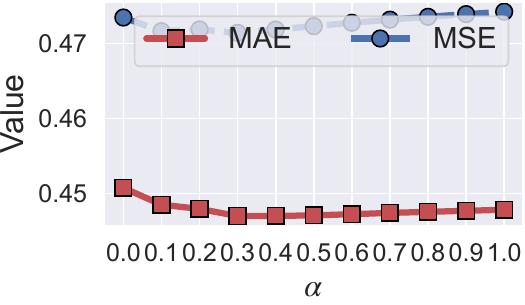}
    \includegraphics[width=0.195\linewidth]{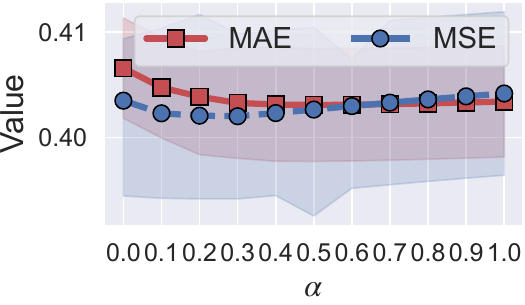}}

    \subfigure[\hspace{-20pt}]{
    \includegraphics[width=0.195\linewidth]{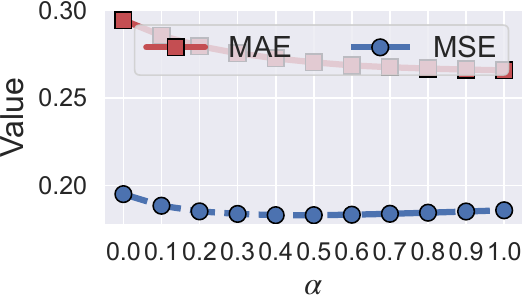}
    \includegraphics[width=0.195\linewidth]{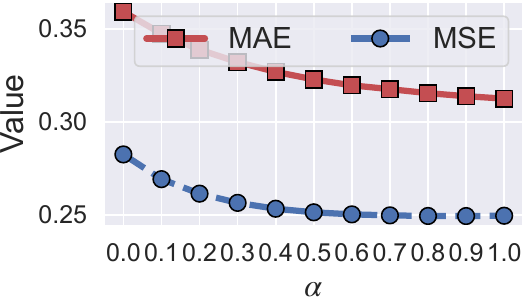}
    \includegraphics[width=0.195\linewidth]{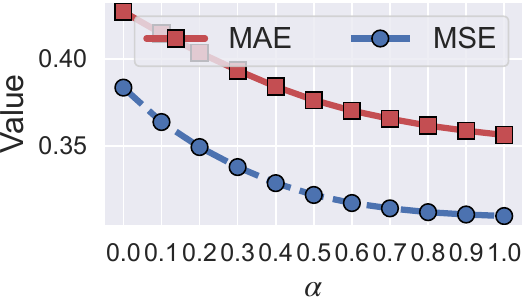}
    \includegraphics[width=0.195\linewidth]{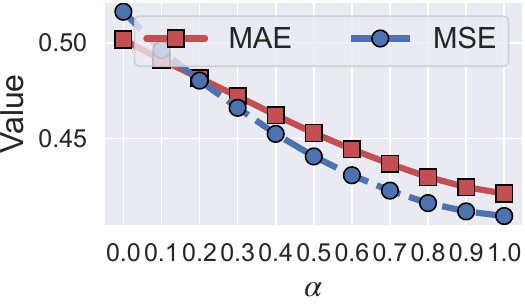}
    \includegraphics[width=0.195\linewidth]{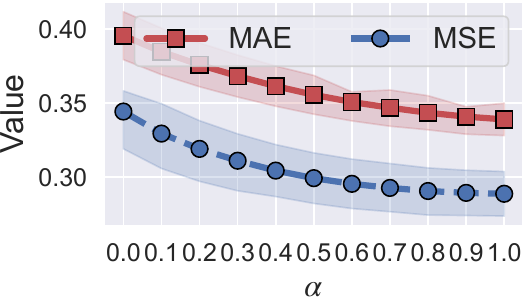}}

    \subfigure[\hspace{-20pt}]{
    \includegraphics[width=0.195\linewidth]{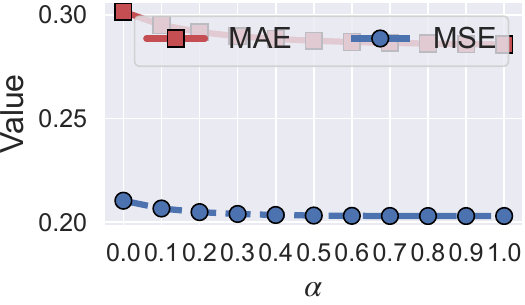}
    \includegraphics[width=0.195\linewidth]{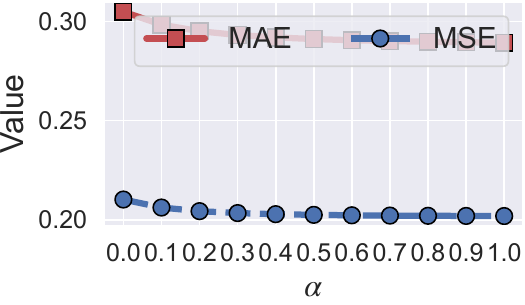}
    \includegraphics[width=0.195\linewidth]{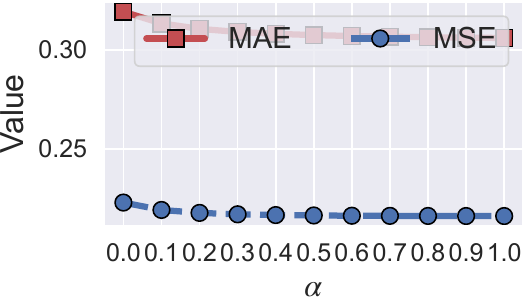}
    \includegraphics[width=0.195\linewidth]{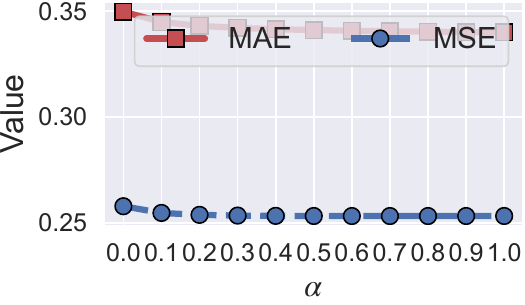}
    \includegraphics[width=0.195\linewidth]{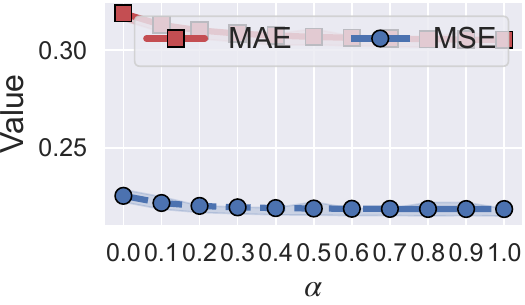}}

    \subfigure[\hspace{-20pt}]{
    \includegraphics[width=0.195\linewidth]{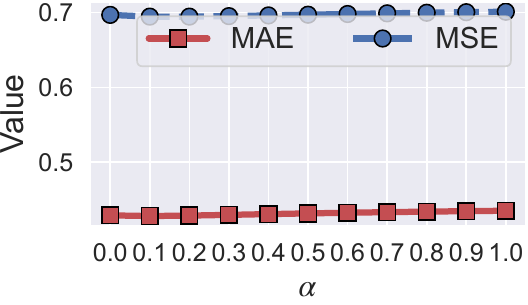}
    \includegraphics[width=0.195\linewidth]{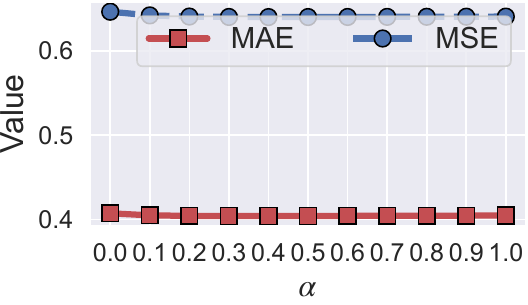}
    \includegraphics[width=0.195\linewidth]{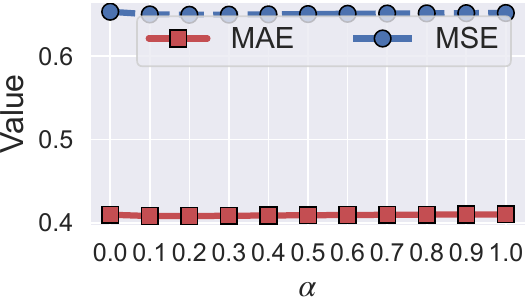}
    \includegraphics[width=0.195\linewidth]{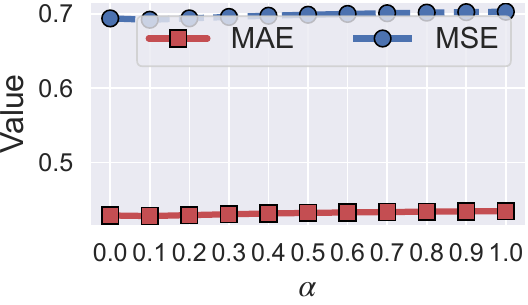}
    \includegraphics[width=0.195\linewidth]{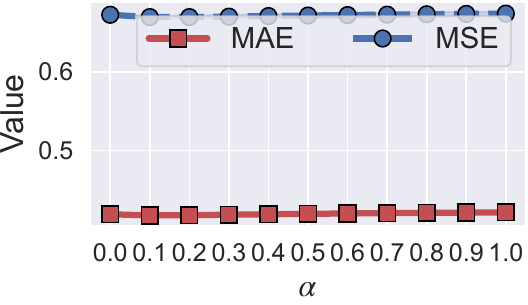}}

    \subfigure[\hspace{-20pt}]{
    \includegraphics[width=0.195\linewidth]{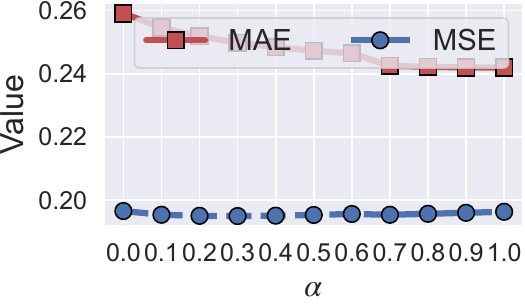}
    \includegraphics[width=0.195\linewidth]{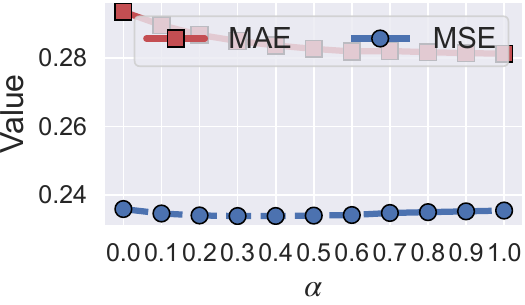}
    \includegraphics[width=0.195\linewidth]{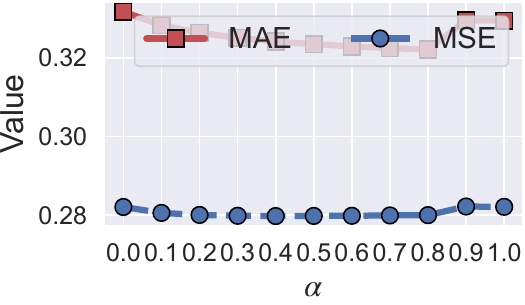}
    \includegraphics[width=0.195\linewidth]{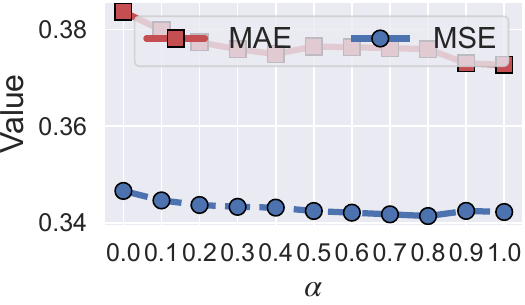}
    \includegraphics[width=0.195\linewidth]{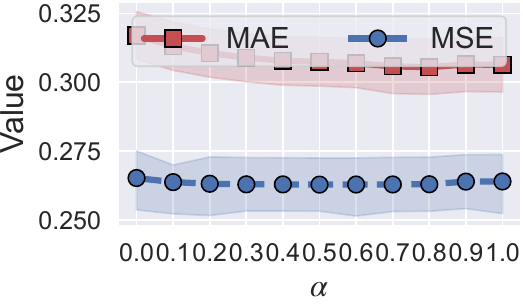}}
    \caption{FreDF improves DLinear performance given a wide range of frequency loss weight $\alpha$. These experiments are conducted on ETTh1 (a), ETTh2 (b), ETTm1 (c), ETTm2 (d), ECL (e), Traffic (f) and Weather (g) datasets. Different columns correspond to different forecast lengths (from left to right: 96, 192, 336, 720, and their average with shaded areas being 50\% confidence intervals). }
\label{fig:sensi-dlinear}
\end{figure*}

\subsection{Comparison with DTW-based learning objectives}
In this section, we compare FreDF with works that employ DTW as learning objectives to align the shape of the forecast sequence with the label sequence: Dilate~\citep{le2019shape} and DPP~\citep{le2020probabilistic}.
Notably, these works do not handle the bias introduced by label autocorrelation, which makes them independent to the contribution of FreDF.
To make a fair comparison, we integrated the official implementations of the loss functions into the iTransformer model. As shown in \autoref{tab:dtw}, FreDF significantly outperforms DTW-based methods across both datasets. This improvement stems from FreDF’s unique ability to debias the learning objective, a capability that Dilate and DPP do not possess.

\begin{table*}
\centering
\caption{Comparable results with DTW-based loss.}\label{tab:dtw}
\setlength{\tabcolsep}{6pt} \scriptsize
\centering
\renewcommand{\multirowsetup}{\centering}
\begin{tabular}{c|cc|cc|cc|cc|cc|cc}
    \toprule
    \rotatebox{0}{Dataset} & \multicolumn{6}{c|}{ETTm1} & \multicolumn{6}{c}{ETTh1} \\
    \cmidrule(lr){2-13}
    Models & 
    \multicolumn{2}{c}{\textbf{FreDF}} & \multicolumn{2}{c}{Dilate} & \multicolumn{2}{c|}{DPP} & 
    \multicolumn{2}{c}{\textbf{FreDF}} & \multicolumn{2}{c}{Dilate} & \multicolumn{2}{c}{DPP} \\
    \cmidrule(lr){2-3} \cmidrule(lr){4-5} \cmidrule(lr){6-7} \cmidrule(lr){8-9} \cmidrule(lr){10-11} \cmidrule(lr){12-13}
    Metrics & 
    MSE & MAE & MSE & MAE & MSE & MAE & MSE & MAE & MSE & MAE & MSE & MAE \\
    \midrule
    96   & 0.324 & 0.362 & 0.498 & 0.443 & 0.631 & 0.495 & 0.382 & 0.400 & 0.790 & 0.567 & 0.815 & 0.577 \\
    192  & 0.373 & 0.385 & 0.993 & 0.625 & 0.975 & 0.617 & 0.430 & 0.427 & 0.950 & 0.643 & 0.916 & 0.633 \\
    336  & 0.402 & 0.404 & 0.946 & 0.628 & 0.945 & 0.626 & 0.474 & 0.451 & 0.978 & 0.663 & 0.986 & 0.660 \\
    720  & 0.469 & 0.444 & 0.999 & 0.652 & 1.079 & 0.678 & 0.463 & 0.462 & 0.922 & 0.654 & 0.898 & 0.649 \\
    \cmidrule(lr){1-13}
    Avg  & 0.392 & 0.399 & 0.859 & 0.587 & 0.907 & 0.604 & 0.437 & 0.435 & 0.910 & 0.632 & 0.904 & 0.630 \\
    \bottomrule
\end{tabular}
\end{table*}

\clearpage
\appendix
\setcounter{figure}{0}
\setcounter{table}{0}
\section{Case study with PatchTST and varying input length.} 

In this section, we focus on iTransformer~\citep{itransformer} and PatchTST~\citep{PatchTST}, highlighting the effectiveness of FreDF in enhancing their performance given varying input sequence lengths, to complement the fixed length of 96 used in the main text.
According to \autoref{tab:vary_seq_len}, FreDF consistently improves the performance of both iTransformer and PatchTST across different input lengths. Notably, under our experimental conditions, PatchTST with $\mathrm{H}=336$ achieves results comparable to the original ``PatchTST/42'' results reported by ~\citet{PatchTST}, while FreDF further reduced the MSE and MAE by 0.002, demonstrating its robustness across different input lengths.

\begin{table}[h]
\centering
\caption{Varying input sequence length results on the Weather dataset.}\label{tab:vary_seq_len}
\renewcommand{\arraystretch}{1} \setlength{\tabcolsep}{10pt} \scriptsize
\centering
\renewcommand{\multirowsetup}{\centering}
\begin{tabular}{c|c|c|cc|cc|cc|cc}
    \toprule
    \multicolumn{3}{c|}{\rotatebox{0}{Models}} & \multicolumn{2}{c}{\textbf{FreDF}} & \multicolumn{2}{c|}{iTransformer} & \multicolumn{2}{c}{\textbf{FreDF}} & \multicolumn{2}{c}{PatchTST} \\
    \cmidrule(lr){4-5} \cmidrule(lr){6-7} \cmidrule(lr){8-9} \cmidrule(lr){10-11}
    \multicolumn{3}{c|}{\rotatebox{0}{Metrics}} & MSE & MAE & MSE & MAE & MSE & MAE & MSE & MAE \\
    \midrule
    \multirow{20}{*}{\rotatebox{90}{Input sequence length}} 
    & \multirow{5}{*}{96}
      & 96  & 0.164 & 0.202 & 0.201 & 0.247 & 0.174 & 0.217 & 0.200 & 0.244 \\
    & & 192 & 0.220 & 0.253 & 0.250 & 0.283 & 0.230 & 0.266 & 0.234 & 0.268 \\
    & & 336 & 0.275 & 0.294 & 0.302 & 0.317 & 0.279 & 0.301 & 0.311 & 0.321 \\
    & & 720 & 0.356 & 0.347 & 0.370 & 0.362 & 0.355 & 0.351 & 0.365 & 0.353 \\
    \cmidrule(lr){3-11}
    & & Avg & 0.254 & 0.274 & 0.281 & 0.302 & 0.259 & 0.284 & 0.278 & 0.297 \\
    \cmidrule(lr){2-11}
    
    & \multirow{5}{*}{192}  
      & 96  & 0.164 & 0.207 & 0.184 & 0.235 & 0.158 & 0.205 & 0.167 & 0.213 \\
    & & 192 & 0.211 & 0.250 & 0.236 & 0.277 & 0.200 & 0.241 & 0.204 & 0.244 \\
    & & 336 & 0.262 & 0.290 & 0.268 & 0.296 & 0.259 & 0.287 & 0.266 & 0.291 \\
    & & 720 & 0.341 & 0.343 & 0.342 & 0.345 & 0.330 & 0.334 & 0.333 & 0.337 \\
    \cmidrule(lr){3-11}
    & & Avg & 0.244 & 0.272 & 0.258 & 0.288 & 0.237 & 0.267 & 0.242 & 0.271 \\
    \cmidrule(lr){2-11}
    
    & \multirow{5}{*}{336} 
      & 96  & 0.159 & 0.204 & 0.164 & 0.215 & 0.150 & 0.200 & 0.153 & 0.203 \\
    & & 192 & 0.204 & 0.248 & 0.211 & 0.256 & 0.193 & 0.240 & 0.194 & 0.240 \\
    & & 336 & 0.253 & 0.288 & 0.260 & 0.292 & 0.245 & 0.280 & 0.247 & 0.282 \\
    & & 720 & 0.325 & 0.336 & 0.327 & 0.339 & 0.320 & 0.332 & 0.321 & 0.336 \\
    \cmidrule(lr){3-11}
    & & Avg & 0.235 & 0.269 & 0.241 & 0.276 & 0.227 & 0.263 & 0.229 & 0.265 \\
    \cmidrule(lr){2-11}
    
    & \multirow{5}{*}{720} 
      & 96  & 0.164 & 0.215 & 0.172 & 0.228 & 0.144 & 0.194 & 0.191 & 0.246 \\
    & & 192 & 0.209 & 0.257 & 0.218 & 0.265 & 0.190 & 0.242 & 0.192 & 0.241 \\
    & & 336 & 0.251 & 0.291 & 0.273 & 0.306 & 0.243 & 0.283 & 0.241 & 0.285 \\
    & & 720 & 0.318 & 0.342 & 0.340 & 0.353 & 0.310 & 0.330 & 0.311 & 0.331 \\
    \cmidrule(lr){3-11}
    & & Avg & 0.236 & 0.276 & 0.251 & 0.288 & 0.222 & 0.262 & 0.234 & 0.276 \\
    \bottomrule
\end{tabular}
\end{table}

\section{Running cost analysis}

In this section, we analyze the running cost of FreDF. The core computation of FreDF involves calculating the FFT of both predicted and label sequences, followed by calculating their point-wise MAE loss. The overall complexity is dominated by the FFT operation, which operates at $\mathcal{O}(\mathrm{T}\log \mathrm{T})$, where $\mathrm{T}$ is the label sequence length.

\begin{figure}[h]
    \centering
    \includegraphics[width=0.35\linewidth]{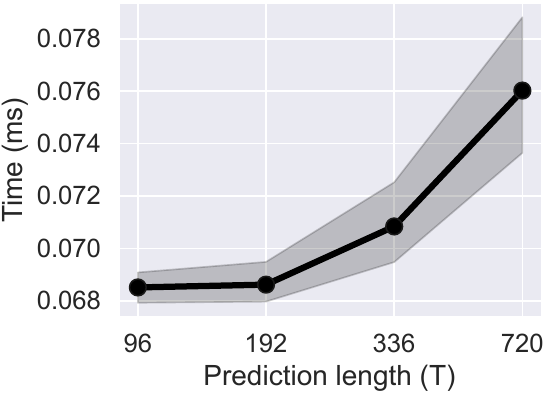}\qquad\qquad
    \includegraphics[width=0.35\linewidth]{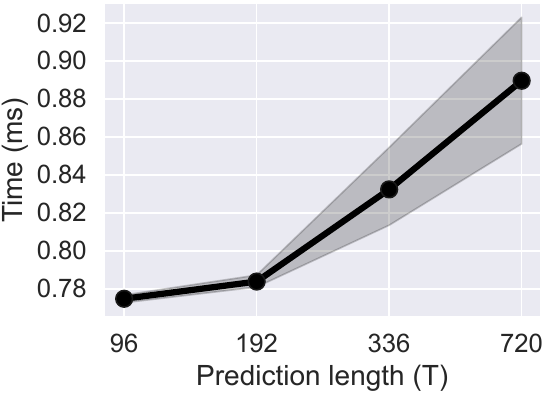}
    \caption{Running time in the forward pass (left panel) and backward pass (right panel), shown with dashed lines for the average and shaded areas for 99.9\% confidence intervals.}
    \label{fig:enter-label}
\end{figure}

\autoref{fig:enter-label} shows the empirical running costs of FreDF for varying sequence lengths in the training duration, involving the forward pass stage (FFT calculation) and the backward pass stage (frequency loss and gradient computation). 
Overall, for a label sequence with $\mathrm{T}<720$, FreDF adds approximately 1 ms to the overall training duration. Moreover, frequency loss computation is not required during inference. Therefore, FreDF does not hinder model efficiency in either training or inference stages.

\section{Random seed sensitivity}
In this section, we investigate the sensitivity of the results to the specification of random seeds. To this end, we report the mean and standard deviation of the results obtained from experiments using five random seeds (2020, 2021, 2022, 2023, 2024) in \autoref{tab:seed_fredf}. We examine (1) iTransformer and (2) FreDF, which is applied to refine iTransformer.
The results show minimal sensitivity to random seeds, with standard deviations below 0.005 in seven out of eight averaged cases. 
\begin{table*}
\centering
\caption{Experimental results ($\mathrm{mean}_{\pm\mathrm{std}}$) with varying seeds (2020-2024).}\label{tab:seed_fredf}
\renewcommand{\arraystretch}{1} \setlength{\tabcolsep}{4pt} \scriptsize
\centering
\renewcommand{\multirowsetup}{\centering}
\begin{tabular}{c|cc|cc|cc|cc}
    \toprule
    \rotatebox{0}{Dataset} & \multicolumn{4}{c|}{ETTh1} & \multicolumn{4}{c}{Weather} \\
    \cmidrule(lr){2-9}
    Models & \multicolumn{2}{c}{\textbf{FreDF}} & \multicolumn{2}{c|}{iTransformer} & \multicolumn{2}{c}{\textbf{FreDF}} & \multicolumn{2}{c}{iTransformer} \\
    \cmidrule(lr){2-3} \cmidrule(lr){4-5} \cmidrule(lr){6-7} \cmidrule(lr){8-9}
    Metrics & MSE & MAE & MSE & MAE & MSE & MAE & MSE & MAE \\
    \midrule
    96   & 0.377$_{\pm 0.001}$ & 0.396$_{\pm 0.001}$ & 0.391$_{\pm 0.001}$ & 0.409$_{\pm 0.001}$ & 0.168$_{\pm 0.003}$ & 0.205$_{\pm 0.003}$ & 0.203$_{\pm 0.002}$ & 0.246$_{\pm 0.002}$ \\
    192  & 0.428$_{\pm 0.001}$ & 0.424$_{\pm 0.001}$ & 0.446$_{\pm 0.002}$ & 0.441$_{\pm 0.002}$ & 0.220$_{\pm 0.001}$ & 0.254$_{\pm 0.001}$ & 0.249$_{\pm 0.001}$ & 0.281$_{\pm 0.001}$ \\
    336  & 0.466$_{\pm 0.001}$ & 0.442$_{\pm 0.001}$ & 0.484$_{\pm 0.005}$ & 0.460$_{\pm 0.003}$ & 0.281$_{\pm 0.002}$ & 0.298$_{\pm 0.002}$ & 0.299$_{\pm 0.002}$ & 0.315$_{\pm 0.002}$ \\
    720  & 0.468$_{\pm 0.005}$ & 0.465$_{\pm 0.003}$ & 0.499$_{\pm 0.015}$ & 0.489$_{\pm 0.010}$ & 0.364$_{\pm 0.008}$ & 0.354$_{\pm 0.006}$ & 0.371$_{\pm 0.001}$ & 0.361$_{\pm 0.001}$ \\
    \cmidrule(lr){1-9}
    Avg  & 0.435$_{\pm 0.002}$ & 0.432$_{\pm 0.002}$ & 0.455$_{\pm 0.006}$ & 0.450$_{\pm 0.004}$ & 0.258$_{\pm 0.004}$ & 0.278$_{\pm 0.003}$ & 0.280$_{\pm 0.001}$ & 0.301$_{\pm 0.002}$ \\
    \bottomrule
\end{tabular}
\end{table*}

\section{Amplitude v.s. phase alignment}
\begin{table}
\centering
\caption{Impact of aligning the amplitude and phase characteristics.}\label{tab:component_ablation}
\scriptsize
\setlength{\tabcolsep}{11pt}
\begin{tabular}{llccccccccc}
\toprule
\multirow{2}{*}{{Amp.}} & \multirow{2}{*}{{Pha.}} && \multicolumn{2}{c}{{ECL}} && \multicolumn{2}{c}{{ETTm1}} && \multicolumn{2}{c}{{ETTh1}}\\
\cmidrule{4-5} \cmidrule{7-8} \cmidrule{10-11}
&&& {MSE}  & {MAE} && {MSE} & {MAE}  && {MSE} & {MAE}     \\\midrule
 {\Checkmark} & {\XSolidBrush} && {0.3356} & {0.4060} && {0.5936} & {0.5169} && {0.7303} & {0.5968} \\
 {\XSolidBrush} & {\Checkmark}   && {\subbst{0.1836}} & {\subbst{0.2752}} && {\subbst{0.4204}} & {\subbst{0.4173}} && {\subbst{0.4751}} & {\subbst{0.4487}} \\
 {\Checkmark} & {\Checkmark}      && {\bst{0.1698}} & {\bst{0.2594}} && {\bst{0.3920}} & {\bst{0.3989}} && {\bst{0.4374}} & {\bst{0.4351}} \\
\bottomrule
\end{tabular}
\end{table}

\begin{table*}
\centering
\caption{Comparable results with baselines utilizing multiresolution trends.}\label{tab:moredf}
\renewcommand{\arraystretch}{1} \setlength{\tabcolsep}{3pt} \scriptsize
\centering
\renewcommand{\multirowsetup}{\centering}
\begin{tabular}{c|cc|cc|cc|cc|cc|cc|cc|cc}
    \toprule
    \rotatebox{0}{Dataset} & \multicolumn{8}{c|}{ETTm1} & \multicolumn{8}{c}{ETTh1} \\
    \cmidrule(lr){2-17}
    Models & 
    \multicolumn{2}{c}{\textbf{FreDF}} & \multicolumn{2}{c|}{TimeMixer} & \multicolumn{2}{c}{\textbf{FreDF}} & \multicolumn{2}{c|}{Scaleformer} & 
    \multicolumn{2}{c}{\textbf{FreDF}} & \multicolumn{2}{c|}{TimeMixer} & \multicolumn{2}{c}{\textbf{FreDF}} & \multicolumn{2}{c}{Scaleformer} \\
    \cmidrule(lr){2-3} \cmidrule(lr){4-5} \cmidrule(lr){6-7} \cmidrule(lr){8-9} 
    \cmidrule(lr){10-11} \cmidrule(lr){12-13} \cmidrule(lr){14-15} \cmidrule(lr){16-17}
    Metrics & 
    MSE & MAE & MSE & MAE & MSE & MAE & MSE & MAE & 
    MSE & MAE & MSE & MAE & MSE & MAE & MSE & MAE \\
    \midrule
    96   &  0.316 & 0.354 & 0.322 & 0.361 & 0.365 & 0.391 & 0.393 & 0.417 & 
    0.364 & 0.393 & 0.375 & 0.445 & 0.375 & 0.415 & 0.407 & 0.445 \\
    192  &  0.360 & 0.377 & 0.362 & 0.382 & 0.417 & 0.436 & 0.435 & 0.439 & 
    0.422 & 0.424 & 0.441 & 0.431 & 0.414 & 0.440 & 0.430 & 0.455 \\
    336  &  0.383 & 0.399 & 0.392 & 0.405 & 0.478 & 0.461 & 0.541 & 0.500 & 
    0.454 & 0.432 & 0.490 & 0.458 & 0.463 & 0.468 & 0.462 & 0.475 \\
    720  &  0.447 & 0.440 & 0.453 & 0.441 & 0.575 & 0.533 & 0.608 & 0.530 & 
    0.467 & 0.460 & 0.481 & 0.469 & 0.484 & 0.499 & 0.545 & 0.551 \\
    \cmidrule(lr){1-17}
    Avg  &  0.377 & 0.393 & 0.382 & 0.397 & 0.459 & 0.455 & 0.494 & 0.471 & 
    0.427 & 0.427 & 0.446 & 0.441 & 0.434 & 0.455 & 0.461 & 0.482 \\
    \bottomrule
\end{tabular}
\end{table*}

In this section, we investigate the implementation of the frequency loss \eqref{eq:freq}, with the results averaged over forecast lengths in \autoref{tab:component_ablation}. Specifically, minimizing the frequency loss \eqref{eq:freq} ensures that both amplitude and phase characteristics of the forecast match those of the actual label sequences in the frequency domain. In signal processing, both characteristics are fundamental for accurately representing signal dynamics, and we analyze their respective contributions. Overall, both characteristics are essential for FreDF's performance. Notably, phase alignment is particularly crucial; aligning amplitude characteristics without also aligning phase characteristics leads to subpar performance. This phenomenon is reasonable, as even minor deviations in phase characteristics can produce significant discrepancies in the time domain.

\section{Comparison with additional forecast architectures}
In this section, we apply FreDF to two additional forecast architectures, namely TimeMixer~\citep{wang2024timemixer} and ScaleFormer~\citep{shabani2022scaleformer} to showcase the generality of FreDF. To ensure a fair comparison, we utilized their official repositories, downloading and configuring them according to their specified requirements. We modified their temporal MSE loss with the proposed loss in the FreDF. The loss strength parameters were fine-tuned on the validation set.
As shown in \autoref{tab:moredf}, FreDF significantly enhances the performance of these architectures, demonstrating FreDF’s ability to support and improve existing models. These improvements underscore the independent and complementary nature of FreDF’s contributions.
\end{document}